\documentclass{article}
\usepackage{microtype}
\usepackage{graphicx}
\usepackage{booktabs} %
\usepackage{hyperref}
\usepackage{url}
\usepackage{natbib}
\usepackage{wrapfig}
\usepackage{subcaption}

\usepackage{hyperref}

\usepackage[accepted]{icml2024}

\usepackage{amsmath,bm}
\usepackage{amssymb}
\usepackage{mathtools}
\usepackage{amsthm}
\usepackage{thmtools}
\usepackage{thm-restate}
\usepackage{enumitem}
\usepackage{algorithmic}
\usepackage{algorithm}
\usepackage[algo2e]{algorithm2e}

\newcommand{\alg}{\textsc{Decor}\xspace}

\providecommand{\lin}[1]{\ensuremath{\left\langle #1 \right\rangle}}

\providecommand{\norm}[1]{\left\lVert#1\right\rVert}
\providecommand{\ac}[1]{a{\left(#1\right)}}
\providecommand{\hg}[1]{\mathrm{H}_\cG{\left(#1\right)}}
\providecommand{\loss}{\mathcal{L}}

\newcommand{\clip}[1]{\mathrm{Clip}{(#1)}}

  \providecommand{\R}{\mathbb{R}} %
  \providecommand{\N}{\mathbb{N}} %
  
  \DeclareMathOperator{\E}{{\mathbb E}}
  \providecommand{\EE}[2]{{\mathbb E}_{#1}\left.#2\right. }  %

  \providecommand{\0}{\mathbf{0}}
  \providecommand{\1}{\mathbf{1}}
  \renewcommand{\aa}{\mathbf{a}}
  \providecommand{\bb}{\mathbf{b}}

  \providecommand{\ee}{\mathbf{e}}

  \renewcommand{\gg}{\mathbf{g}}

  \providecommand{\uu}{\mathbf{u}}
  \providecommand{\vv}{\mathbf{v}}
  
  \providecommand{\xx}{\mathbf{x}}
  \providecommand{\xxi}{\mathbf{\xi}}
  \providecommand{\yy}{\mathbf{y}}

  \providecommand{\mA}{\mathbf{A}}

  \providecommand{\mI}{\mathbf{I}}
  
  \providecommand{\mK}{\mathbf{K}}
  \providecommand{\mL}{\mathbf{L}}
  \providecommand{\mM}{\mathbf{M}}
  \providecommand{\mN}{\mathbf{N}}

  \providecommand{\mS}{\mathbf{S}}

  \providecommand{\mW}{\mathbf{W}}
  \providecommand{\mX}{\mathbf{X}}

  \providecommand{\cA}{\mathcal{A}}

  \providecommand{\cD}{\mathcal{D}}
  \providecommand{\cE}{\mathcal{E}}
  
  \providecommand{\cG}{\mathcal{G}}
  \providecommand{\cH}{\mathcal{H}}
  \providecommand{\cI}{\mathcal{I}}

  \providecommand{\cM}{\mathcal{M}}
  \providecommand{\cN}{\mathcal{N}}

  \providecommand{\cS}{\mathcal{S}}

  \providecommand{\cX}{\mathcal{X}}
  \providecommand{\cY}{\mathcal{Y}}

  \providecommand{\SSigma}{\mathbf{\Sigma}}

  \newcommand{\card}[1]{\left|#1\right|}
  \newcommand{\sigmacor}{\sigma_{\mathrm{cor}}}
  \newcommand{\sigmacdp}{\sigma_{\mathrm{cdp}}}
  
  \usepackage{bm}

  \usepackage[textwidth=5cm]{todonotes}
  
\providecommand{\mycomment}[3]{\todo[caption={},size=footnotesize,color=#1!20]{\textbf{#2: }#3}}%
\providecommand{\inlinecomment}[3]{%
  {\color{#1}#2: #3}}%
\newcommand\commenter[2]%
{%
  \expandafter\newcommand\csname i#1\endcsname[1]{\inlinecomment{#2}{#1}{##1}}
  \expandafter\newcommand\csname #1\endcsname[1]{\mycomment{#2}{#1}{##1}}
}

\newtheorem{lemma}{Lemma}
\newtheorem{corollary}[lemma]{Corollary}

\newtheorem{definition}{Definition}

\newtheorem{assumption}{Assumption}
\newtheorem{theorem}[lemma]{Theorem}

\usepackage{tikz-cd}
\usepackage{ctable}
\usetikzlibrary{shapes.geometric}
\usepackage{nicefrac}

\usepackage[page,header]{appendix}
\usepackage{titletoc}
\usepackage[most]{tcolorbox}

\author{}

\date{}

\begin{document}
\twocolumn[
\icmltitle{The Privacy Power of Correlated Noise in Decentralized Learning}

\begin{icmlauthorlist}
\icmlauthor{Youssef Allouah}{yyy}
\icmlauthor{Anastasia Koloskova}{yyy}
\icmlauthor{Aymane El Firdoussi}{yyy}
\icmlauthor{Martin Jaggi}{yyy}
\icmlauthor{Rachid Guerraoui}{yyy}
\end{icmlauthorlist}

\icmlaffiliation{yyy}{EPFL, Switzerland}

\icmlcorrespondingauthor{Youssef Allouah}{youssef.allouah@epfl.ch}

\icmlkeywords{Machine Learning, ICML, differential privacy, decentralized learning, optimization}

\vskip 0.3in
]

\printAffiliationsAndNotice{} %
\begin{abstract}
Decentralized learning is appealing as it enables the scalable usage of large amounts of distributed data and resources (without resorting to any central entity), while promoting privacy since every user minimizes the direct exposure of their data. Yet,  without additional precautions, curious users can still leverage models obtained from their peers to violate privacy. In this paper, we propose \alg, a variant of decentralized SGD with differential privacy (DP) guarantees. In \alg, users securely exchange randomness seeds in one communication round to generate pairwise-canceling correlated Gaussian noises, which are injected to protect local models at every communication round. We theoretically and empirically show that, for arbitrary connected graphs, \alg matches the central DP optimal privacy-utility trade-off. We do so under SecLDP, our new relaxation of local DP, which protects all user communications against an external eavesdropper and curious users, assuming that every pair of connected users shares a secret, i.e., an information hidden to all others. The main theoretical challenge is to control the accumulation of non-canceling correlated noise due to network sparsity. We also propose a companion SecLDP privacy accountant for public use.

\end{abstract}

\section{Introduction}
\label{sec:intro}
In numerous machine learning scenarios, the training dataset is dispersed among diverse sources, including individual users or distinct organizations responsible for generating each data segment. The nature of such data often involves privacy concerns, especially in applications like healthcare~\cite{sheller2020federated}, which can divulge sensitive information about an individual's health. 
Privacy issues make it either impractical or undesirable to transfer the data beyond their original sources, promoting the 
emergence of \emph{federated} and \emph{decentralized} learning~\cite{mcmahan2017communication,lian2017can}, 
where the training occurs directly on the data-holding entities.
Decentralized learning additionally removes the assumption of a central server, with only the model updates being transmitted directly between users.
A classical decentralized learning algorithm is decentralized stochastic gradient descent (D-SGD)~\cite{koloskova2020unified}, where users alternate between performing local gradient updates and averaging local models via gossiping.

When dealing with privacy-sensitive data, it is crucial not 
only to confine the sensitive information locally with decentralization, but also to 
ensure that the algorithm avoids leaking any sensitive information through its communicated updates or the final model.
These can be observed by an \emph{external eavesdropper} or even an \emph{honest-but-curious user}, who follows the algorithm but may attempt to violate the privacy of other users.
The notion of \emph{differential privacy} (DP)~\citep{dwork2014algorithmic} serves as a widely accepted theoretical framework for measuring  formal privacy guarantees.
This notion has been extensively studied in centralized settings~\cite{bassily2014private,abadi2016deep}, i.e., assuming a trusted data curator or server.
Yet, much less attention has been given to adapting DP to decentralized learning.

Several threat models have been considered in decentralized learning, the strongest corresponding to the classical notion of \emph{local differential privacy} (LDP)~\cite{kasiviswanathan2011can}.
Under LDP, users do not trust any other entity and obfuscate all their communications independently.
In contrast, \emph{central differential privacy} (CDP) only protects the final model, exactly as if the learning was conducted on a single machine.
Importantly, there is a significant gap in performance between LDP and CDP algorithms. In a system of $n$ users, the optimal privacy-utility trade-off
under LDP can be $n$ times worse than CDP~\cite{duchi2018minimax}.
Indeed, the CDP baseline is the variant of D-SGD adding noise to protect the average of user models only, which is much less noise than that needed under LDP, to protect every local model before averaging.
Some prior works aimed at reconciling this performance gap by investigating other relaxations of LDP.
For example, in federated learning with an untrusted server, the shuffle model~\cite{cheu2019distributed} and distributed DP~\cite{kairouz2021distributed} restrict the view of the server using cryptographic primitives and match the CDP optimal privacy-utility trade-off.
However, these approaches are server-based and thus cannot be used in decentralized learning.
Network DP~\cite{cyffers2022privacy}  considers honest-but-curious users whose view is restricted to their neighboring communications.
As we discuss in Section~\ref{sec:rel-work}, the %
privacy-utility trade-offs under Network DP match CDP only for well-connected graphs~\cite{cyffers2022muffliato}.

\textbf{Contributions.}
We propose \alg, a new algorithm for decentralized learning with differential privacy. 
\alg is a variant of D-SGD, which additionally injects two types of privacy noise to protect local models: (i) uncorrelated Gaussian noise to protect the local model \emph{after} gossip averaging, and (ii) correlated Gaussian noise, as a sum of pairwise cancelling noise terms for each neighbor, to protect local models \emph{before} gossip averaging.
In the presence of a server, after one round of \alg, averaging all local models would cancel out the correlated Gaussian noise terms, and leave the uncorrelated Gaussian noise protecting the average of models, as was previously studied by~\citet{sabater2022accurate} (see Section~\ref{sec:rel-work}).
However, on a sparse graph, the correlated noise terms do not all cancel out in \alg. 
To obtain our main result, we control the accumulation of correlated noise across iterations in our convergence analysis, and show that its effect vanishes across iterations of \alg. 

We consider an external eavesdropper and honest-but-curious non-colluding users and 
show that \alg matches the optimal CDP privacy-utility trade-off under our new relaxation of LDP we call \emph{secret-based local differential privacy} (SecLDP). Our relaxation protects against an external eavesdropper and curious users who can observe all communications, assuming that every pair of connected users shares a secret, i.e., an information a priori hidden to all others, similar to secure aggregation~\cite{bonawitz2017practical}.
For example, we consider the secrets to be shared randomness seeds exchangeable in one round of encrypted communications.
Following the choice of the set of secrets, our relaxation can capture several threat models, e.g., including collusion of several users; or recovering LDP when no communications are secret.

We also demonstrate the empirical superiority of \alg over the LDP baseline on simulated and real-world data and multiple network topologies, and provide a practical SecLDP privacy accountant for \alg.\footnote{Our code is available at \href{https://github.com/elfirdoussilab1/DECOR}{https://github.com/elfirdoussilab1/DECOR}.}

\subsection{Related Work}
\label{sec:rel-work}
Most works on DP optimization have focused on the centralized setting~\citep{chaudhuri2011differentially,bassily2014private,abadi2016deep}, where a trusted curator collects user data.
Also, several recent works tackle privacy in federated learning, where an honest-but-curious server coordinates the users. These works either use cryptographic primitives to only reveal the sum of updates~\citep{jayaraman2018distributed,kairouz2021distributed,agarwal2021skellam} or to anonymize user identities through shuffling~\citep{erlingsson2019amplification,cheu2019distributed}.
Although these techniques provably achieve the centralized optimal privacy-utility trade-off, they are incompatible with fully decentralized settings, where only peer-to-peer communications are allowed, or induce large computational and communication costs. 

\textbf{Private decentralized learning.} In decentralized settings, several distributed optimization algorithms~\citep{bellet2018personalized,cheng2019towards,huang2019dp,li2023convergence} have been adapted, by adding noise to gradient updates, to ensure LDP. However, these approaches yield a poor privacy-utility trade-off, which is a fundamental drawback of LDP~\citep{duchi2013local}.
\citet{cyffers2022privacy} consider a weaker privacy model than LDP where the threat comes from curious users solely, who can observe information exchanged with their communication graph neighbors only.
Under this weaker privacy threat, it is possible to match the centralized privacy-utility trade-off for well-connected graphs only~\citep{cyffers2022muffliato}. 
In general, SecLDP and Network DP are orthogonal, since the latter restricts the view of users to local communications only, while the former hides part of the global communications---secrets---to an adversary observing all other communications.
Yet, when considering honest-but-curious users, SecLDP is arguably stronger than Network DP as users in SecLDP have a larger view, i.e., all communications besides secrets outside their neighborhood. 
Also, the privacy-utility trade-off achieved under SecLDP is matches CDP for arbitrary connected topologies, unlike Network DP.

\textbf{Correlated noise.} Our correlated noise technique has been studied in various forms within secure multi-party computation, where the goal is to privately compute a function without a trusted central entity.
A first form, called secret sharing~\citep{shamir1979share}, consists in adding uniformly random noise terms which cancel out only if enough users collude. The same idea has also been analyzed for decentralized averaging~\citep{li2019privacy}.
However, these works guarantee the perfect security of the inputs, not the privacy of the average.
Indeed, a curious adversary observing the average can infer the presence of an input or reconstruct it~\citep{melis2019exploiting}.
In this direction, \citet{imtiaz2019distributed} proposed adding correlated Gaussian noise to the inputs, along with a smaller uncorrelated Gaussian noise to protect the average only.
The correlated Gaussian noise is generated by having users sample Gaussian noise locally, and using secure aggregation~\citep{bonawitz2017practical} to get the average of the noise terms, which is subtracted by users.
Thus, averaging privatized inputs cancels out correlated noises and only leaves the smaller uncorrelated noise to protect the average.
However, the algorithm requires a central entity for secure aggregation, which is not possible in decentralized learning and can be costly in communication.
\citet{sabater2022accurate} further adapted the correlated Gaussian noise technique to decentralized settings, without using secure aggregation, by having connected users exchange pairwise cancelling Gaussian noise. However, their work only studies decentralized averaging, and does not cover the more challenging decentralized learning scenario, where the non-cancelled correlated noise accumulates across training iterations.
Finally, we remark that correlated noise has also been studied in centralized settings with a different meaning, e.g., correlation is across iterations~\cite{kairouz2021practical}, which is orthogonal to our work where noise is correlated across the users, but is uncorrelated across the iterations.

\section{Problem Statement}
\label{sec:problem}
We consider a set of users $[n] \coloneqq \{1,\ldots,n\}$ who want to collaboratively solve a common machine learning task in a decentralized fashion. Each user $i \in [n]$ holds a local dataset $\cD_i$ containing $m \in \N$ elements $\{\xxi_i^1,\dots,\xxi_i^{m}\}$ from data space $\cX$.\footnote{All datasets have the same size for simplicity; our theory can be directly extended to cover local datasets with different sizes.}
The goal is to minimize the following global loss function:
\begin{align}
 \min_{\xx \in \R^d}  \loss(\xx):=\frac{1}{n} \sum_{i=1}^n \loss_i(\xx), \label{eq:f}
\end{align} 
where the local loss functions $\loss_i \colon \R^d \to \R, i \in [n],$ are distributed among~$n$ users and are given in empirical form:
\begin{align}\label{eq:F_i}
 \loss_i(\xx) := \frac{1}{\card{\cD_i}} \sum_{\xxi \in \cD_i} \ell(\xx,\xxi),\quad \forall \xx \in \R^d,
\end{align}
where $\ell(\xx, \xxi) \in \R$ is the loss of parameter $\xx$ on sample $\xxi$.
We study the fully decentralized setting where users are the nodes of an undirected communication graph $\cG = ([n], \cE)$. Two nodes $i, j \in [n]$ can communicate directly if they are neighbors in $\cG$, i.e., $\{i, j\} \in \cE$.

\textbf{Secret-based local DP.}
We aim to protect the privacy of user data against an adversary who can \textit{eavesdrop} on all communications, while every pair of connected users $\{i,j\} \in \cE$ shares a sequence of \emph{secrets} $\mS_{ij}$,
which represent observations of random variables commonly known to the nodes sharing the secrets only. In practice, these are locally generated via
\emph{shared randomness seeds} exchanged after one round of encrypted communications~\cite{bonawitz2017practical}, and conceptually one can consider the secrets to be the shared randomness seeds only.
We denote by $\cS_{\mathrm{all}} \coloneqq \left\{\mS_{ij} \colon \{i,j\} \in \cE\right\}$ the set of all secrets. 
While local DP (LDP)~\cite{kasiviswanathan2011can} protects the privacy of all communications without assuming the existence of secrets, at the price of a poor privacy-utility trade-off~\cite{duchi2013local}, we propose to relax LDP into \emph{secret-based local differential privacy} (SecLDP) as defined below.

\begin{definition}[SecLDP]
    \label{def:ldp}
    Let $\varepsilon \geq 0$, $\delta \in [0, 1]$. Consider a randomized decentralized algorithm $\cA : \cX^{m \times n} \to \cY$, which outputs the transcript of all communications. Algorithm $\cA$ satisfies $(\varepsilon,\delta, \mathcal{S})$-SecLDP if it satisfies $(\varepsilon, \delta)$-DP given that the set of secrets $\mathcal{S}$ is unknown to the adversary. That is, for every adjacent datasets $\cD, \cD' \in \cX^{m \times n}$,
    \begin{equation*}
        \mathbb{P} \left[\cA{(\cD)} ~\middle|~ \cS~\text{is hidden}\right] \leq e^{\varepsilon} \cdot \mathbb{P} \left[ \cA{(\cD')} ~\middle|~ \cS~\text{is hidden} \right] + \delta,
    \end{equation*}
    where the event ``$\cS$ is hidden'' conditions on the non-secret observations $\cS_{\mathrm{all}} \setminus \cS$.
    We say that $\cA$ satisfies $(\varepsilon,\delta)$-SecLDP if it satisfies $(\varepsilon,\delta, \mathcal{S})$-SecLDP and $\mathcal{S}$ is clear from the context.
\end{definition}
Our privacy definition can encode several levels of knowledge of the adversary, and the corresponding threat models, through the choice of the secrets $\cS$. Essentially, the larger the set of secrets, the weaker is the adversary.
To see this, we denote by $\cS_i \coloneqq \left\{\mS_{jk} \colon \{j,k\} \in \cE~\text{and}~j,k \neq i \right\}$ the set of secrets hidden from user $i \in [n]$, and by $\cS_\cI \coloneqq \cap_{i \in \cI} \cS_i$ the set of secrets hidden from the group of users $\cI \subseteq [n]$, so that $\cS_\cI \subseteq \cS_i \subseteq \cS_{\mathrm{all}}$ for every $i \in \cI \subseteq [n]$. 
We consider the following adversaries in increasing strength:
 \begin{enumerate}[label=\textnormal{\Roman*.}]
    \item External eavesdropper: the only adversary is not a user and ignores all the secrets $\cS_{\mathrm{all}}$, but can eavesdrop on all communications between users. This threat is covered by $(\varepsilon, \delta, \cS_{\mathrm{all}})$-SecLDP.
     \item Honest-but-curious users without collusion: every user faithfully follows the protocol, but may try to infer private information from other users by eavesdropping on all communications, while knowing the secrets it shares with other users only. This threat is covered by having $(\varepsilon, \delta, \cS_i)$-SecLDP for every $i \in [n]$.
     \item Honest-but-curious users with partial collusion: every group of users of size $q < n$ may collude by disclosing the secrets they have access to. This threat is covered, at collusion level $q$, by having $(\varepsilon, \delta, \cS_\cI)$-SecLDP for every $\cI \subseteq [n], \card{\cI}= q$.
     \item Full collusion: all users may collude against any other user in the system, as if the adversary can observe all communications and no secrets are hidden from them. This threat is covered by $(\varepsilon, \delta, \varnothing)$-SecLDP, which corresponds to LDP.
 \end{enumerate}
The adversaries above are in increasing strength in the sense that defending against adversary II consequently defends against adversary I, and the same logic holds for the other adversaries.
In this work, we consider the secrets to be shared randomness seeds, which allow every pair of users to keep the same observation of a random variable and generate correlated noise. In practice, such secrets, i.e., randomness seeds, can be shared securely and efficiently, as is common in secure aggregation for federated learning~\cite{bonawitz2017practical,kairouz2021distributed}.
Moreover, for ease of exposition, we focus on the adversaries of type I and II above and defer the extension of our results to types III and IV to the appendix.

\textbf{Comparison with other relaxations.}
Recall from Section~\ref{sec:intro} that a common relaxation of LDP is central differential privacy (CDP), where the adversary can only access the final training model. In fact, CDP is recovered from SecLDP by considering the larger set of secrets consisting of all user communications. From a privacy point of view, CDP is equivalent to DP in the trusted curator model---the privacy model in the centralized setting---and thus allows achieving the best privacy-utility trade-off.
In contrast, the best achievable mean squared error under LDP is $n$ times worse than under CDP~\citep{duchi2018minimax,allouah2023privacy}, for strongly convex optimization problems.
Indeed, the CDP baseline is D-SGD with additional Gaussian noise magnitude $\Theta{(\tfrac{1}{n\varepsilon^2})}$, while the LDP baseline D-SGD with Gaussian noise magnitude $\Theta{(\tfrac{1}{\varepsilon^2})}$. We refer to these approaches as the CDP and LDP baselines, respectively.

However, CDP does not protect against honest-but-curious users, who can be expected in real-world scenarios. This limitation motivated Network DP~\cite{cyffers2022privacy}, which guarantees the privacy of all communications against honest-but-curious users whose view is restricted to communications with their neighbors, with privacy-utility trade-offs sometimes matching those of CDP~\cite{cyffers2022muffliato}. 
In general, SecLDP and Network DP are orthogonal, since the latter restricts the communications known to users, while the former restricts part of these communications---secrets---to an adversary observing all other communications.
In the case of the aforementioned adversary II, SecLDP is arguably stronger than Network DP because honest-but-curious users in SecLDP have a larger view, i.e., all communications besides secrets outside their neighborhood.

\section{\alg: Decentralized SGD with Correlated Noise}
\label{sec:algorithm}

We now present our algorithm \alg, summarized in Algorithm~\ref{algo}. Overall, \alg is a variant of D-SGD injecting the privacy noise each local model. %
This privacy noise consists of two parts: (i) correlated noise to protect the local communications before gossip averaging, and (ii) uncorrelated noise to protect the gossip average.

\begin{algorithm}[t!]
    \caption{\textsc{\alg: decentralized SGD with correlated noise}}\label{algo}
	\begin{algorithmic}[1]
		\REQUIRE for each user $i\in [n]$ initialize $\xx_i^{(0)} \in \R^d$, 
		 stepsizes $\{\eta_t\}_{t=0}^{T-1}$, number of iterations $T$, 
        clipping threshold $C$,
        noise parameters $\sigmacor$ and $\sigmacdp$.

		\FOR{$t$\textbf{ in} $0\dots T-1$, $i$\textbf{ in} $1\dots n$, \textbf{in parallel}}
		\STATE Sample $\xxi_i^{(t)}$, compute $\gg_i^{(t)} := \clip{\nabla \ell(\xx_i^{(t)}; C}, \xxi_i^{(t)})$, where $\clip{\gg; C} \coloneqq \min \left\{ 1, \, \tfrac{C}{\norm{\gg}} \right\} \cdot \gg$\;
            \STATE Sample for all $j \in \cN_i$, $\vv_{ij}^{(t)} = - \vv_{ji}^{(t)} \sim \cN(0, \sigmacor^2 \mI_d),$ and $ \overline{\vv}_i^{(t)}   \sim \cN(0, \sigmacdp^2 \mI_d)$
            \STATE $\Tilde{\gg}_i^{(t)} := \gg_i^{(t)} + \sum_{j \in \cN_i} \vv_{ij}^{(t)} + \overline{\vv}_i^{(t)}$ \hfill
            $\triangleright$ privacy noise
		\STATE $\xx_i^{(t + \frac{1}{2})} = \xx_i^{(t)} - \eta_t \Tilde{\gg}_i^{(t)}$  \hfill $\triangleright$ stochastic gradient updates
		\STATE $\xx_i^{(t + 1)} := \sum_{j=1}^n \mW_{ij} \xx_j^{(t + \frac{1}{2})}$  \hfill  $\triangleright$ gossip averaging
		\ENDFOR

	\end{algorithmic}
\end{algorithm}

\alg is an iterative decentralized algorithm proceeding in $T$ iterations, whereby at each iteration $t \in [T]$, each user $i \in \{1,\ldots,n\}$ first computes and clips a stochastic gradient at the current local model $\xx_i^{(t)}$ (line 2 of Algorithm~\ref{algo}):
\begin{align*}
    \gg_i^{(t)} := \clip{\nabla \ell(\xx_i^{(t)}, \xxi_i^{(t)});C},
\end{align*}
where $\xxi_i^{(t)}$ is a data point sampled at random from user $i$'s dataset $\cD_i$, and clipping with threshold $C$ corresponds to $\clip{\gg;C} \coloneqq \min{\left\{1, \frac{C}{\norm{\gg}}\right\}} \cdot \gg$ for any vector $\gg \in \R^d$.
The clipping operation ensures that the sensitivity of the gradient, to a change in data, is bounded as required by DP.
Then, on line 4 of Algorithm~\ref{algo}, each user obfuscates the clipped gradient by adding privacy noise:
\begin{align}
    \label{eq:noise}
    \Tilde{\gg}_i^{(t)} :=  \gg_i^{(t)} + \sum_{j \in \cN_i} \vv_{ij}^{(t)} + \overline{\vv}_i^{(t)},
\end{align}
where is $\overline{\vv}_i^{(t)} \sim \cN(0, \sigmacdp^2 \mI_d)$ is independent Gaussian noise, $\cN_i$ is the set of neighbors of $i$ on graph $\cG$, and $\{\vv_{ij}^{(t)}\}_{j \in \cN_i}$ are pairwise-cancelling correlated Gaussian noise terms; they satisfy $\vv_{ij}^{(t)} = -\vv_{ji}^{(t)} \sim \cN(0, \sigmacor^2 \mI_d)$.
Then, on line 5, each user makes a local update with the obfuscated stochastic gradient to obtain:
\begin{align*}
    \xx_i^{(t + \frac{1}{2})} = \xx_i^{(t)} - \eta_t \Tilde{\gg}_i^{(t)},
\end{align*}
where $\eta_t$ is the iteration's learning rate.
Finally, on line 6, each user broadcasts the obtained local model to its neighbors on graph $\cG$, and updates its local model by performing a weighted average of the neighbors' local models:
\begin{align}
    \label{eq:gossip}
    \xx_i^{(t + 1)} := \sum_{j=1}^n \mW_{ij} \xx_j^{(t + \frac{1}{2})},
\end{align}
where the weights are zero for non-neighboring users and form the mixing matrix $\mW = [\mW_{ij}]_{i,j\in [n]}\in \R^{n \times n}$, which is symmetric and doubly stochastic (see Definition~\ref{def:valid_mixing} below).
The motivation for injecting correlated noise in~\eqref{eq:noise} is that the gossip averaging in~\eqref{eq:gossip} will cancel out part or all correlated noise terms.
For example, if $\cG$ is the fully connected graph and $\mW = \tfrac{1}{n}\1 \1^\top$ is the matrix of ones times $\tfrac{1}{n}$, then \eqref{eq:noise} cancels out all correlated noise terms.
Still, the uncorrelated noise term $\bar \vv^{(t)}_i$ remains to protect the privacy of the gossip-averaged local model $\xx_i^{(t+1)}$.

\section{Privacy Analysis}
In this section, we formalize the privacy guarantees of \alg and introduce its privacy accountant.

First, we recall the notion of algebraic connectivity $a{(\cG)}$. Formally, algebraic connectivity $a{(\cG)}$ is equal to the second-smallest eigenvalue of the Laplacian matrix of the graph. 
Intuitively, it quantifies how well a graph is connected~\citep{fiedler1973algebraic}. %
For example, denoting by $n$ the number of vertices, the algebraic connectivity of the fully-connected graph is equal to $n$; for the star graph, it is equal to $1$; and for the ring graph it is equal to $2(1-\cos{\frac{2\pi}{n}}) = \Theta(\tfrac{1}{n^2})$ (see~\cite{de2007old} for a survey).

In this section, it is more convenient to work with SecRDP, the stronger variant of SecLDP based on Rényi DP (RDP)~\cite{mironov2017renyi}. We defer its formal definition to the appendix, and simply note that SecRDP implies SecLDP in the same way that RDP implies DP~\cite{mironov2017renyi}.

For brevity, in Theorem~\ref{th:privacy} we state the privacy guarantees of a single step of \alg against adversaries I and II (defined in Section~\ref{sec:problem}). We defer the extension of the privacy guarantees for adversaries III and IV to the appendix.

\begin{restatable}{theorem}{thprivacy}
\label{th:privacy}
Let $\alpha > 1$.
Each iteration of \alg (Algorithm~\ref{algo}) satisfies $(\alpha, \alpha \varepsilon)$-SecRDP (Definition~\ref{def:rdp}) against
\begin{itemize}
    \item an external eavesdropper with
    \begin{align*}
 \varepsilon \leq 2 C^2 \left( \frac{1}{n \sigmacdp^2} + \frac{1 - \tfrac{1}{n}}{\sigmacdp^2 + \ac{\cG} \sigmacor^2}  \right),
\end{align*}
\item honest-but-curious non-colluding users with
    \begin{align*}
 \varepsilon \leq 2 C^2 \left( \frac{1}{(n-1)\sigmacdp^2} + \frac{1 - \tfrac{1}{n-1}}{\sigmacdp^2 + a_{1}{(\cG)} \sigmacor^2}  \right),
\end{align*}
\end{itemize}
where $a_{1}{(\cG)}$ is the minimum algebraic connectivity across subgraphs obtained by deleting a single vertex from $\cG$.
Moreover, $\varepsilon$ can be computed numerically with Algorithm~\ref{algo:account}.
\end{restatable}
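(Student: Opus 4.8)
The plan is to reduce a single iteration of \alg, as seen by the adversary, to one Gaussian mechanism over the vector of noisy gradients, and then read off its R\'enyi divergence from the spectrum of the graph Laplacian. First I would observe that, for a fixed iteration $t$, seeing the transcript is equivalent to seeing the noisy gradients $\{\Tilde{\gg}_i^{(t)}\}_{i \in [n]}$: on line~6 each user broadcasts $\xx_i^{(t+\frac12)} = \xx_i^{(t)} - \eta_t\Tilde{\gg}_i^{(t)}$, and conditioning on the shared state $\{\xx_i^{(t)}\}_i$ (fixed by earlier rounds, as befits a per-iteration guarantee) this is an invertible post-processing of $\Tilde{\gg}_i^{(t)}$, so by the data-processing inequality for R\'enyi divergence it suffices to privatize the map $\cD \mapsto \{\Tilde{\gg}_i^{(t)}\}_i$. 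I would then encode the correlated noise algebraically: orienting each edge and stacking the per-edge Gaussians $\zz_e \sim \cN(0,\sigmacor^2\mI_d)$, the correlated term is $\sum_{j\in\cN_i}\vv_{ij}^{(t)} = (\mB\zz)_i$ with $\mB$ the oriented incidence matrix of $\cG$. Hence, per coordinate, the noise across users is Gaussian with covariance
\begin{align*}
\SSigma = \sigmacor^2\,\mB\mB^\top + \sigmacdp^2\,\mI_n = \sigmacor^2\,\mL + \sigmacdp^2\,\mI_n,
\end{align*}
where $\mL$ is the Laplacian of $\cG$; since the $d$ coordinates are independent, the mechanism is Gaussian with covariance $\SSigma\otimes\mI_d$ and mean the stacked clipped gradients.

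Next I would bound the sensitivity and apply the closed form for the R\'enyi divergence between equal-covariance Gaussians. Switching one user $i$'s data changes only $\gg_i^{(t)}$, and since both clipped gradients have norm at most $C$, the mean shifts only in user $i$'s block by a vector $\Delta$ with $\norm{\Delta}\le 2C$. The divergence is then
\begin{align*}
\tfrac{\alpha}{2}\,(\ee_i\otimes\Delta)^\top(\SSigma\otimes\mI_d)^{-1}(\ee_i\otimes\Delta) = \tfrac{\alpha}{2}\,\norm{\Delta}^2\,\ee_i^\top\SSigma^{-1}\ee_i \le 2\alpha C^2\,\ee_i^\top\SSigma^{-1}\ee_i,
\end{align*}
so the theorem reduces to bounding the diagonal entry $\ee_i^\top\SSigma^{-1}\ee_i$. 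Using $\mL = \sum_k \lambda_k \uu_k\uu_k^\top$ with $0=\lambda_1<\lambda_2=\ac{\cG}\le\cdots$ and $\uu_1 = \tfrac{1}{\sqrt n}\1$, I would write $\ee_i^\top\SSigma^{-1}\ee_i = \sum_k \tfrac{(\uu_k)_i^2}{\sigmacdp^2+\sigmacor^2\lambda_k}$ and split off the $k=1$ term, which equals $\tfrac{1}{n\sigmacdp^2}$; for $k\ge 2$ I would use $\lambda_k\ge\ac{\cG}$ together with $\sum_{k\ge 2}(\uu_k)_i^2 = 1-\tfrac1n$ to bound the remainder by $\tfrac{1-1/n}{\sigmacdp^2+\ac{\cG}\sigmacor^2}$. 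This yields exactly the external-eavesdropper bound, using that with secret set $\cS_{\mathrm{all}}$ all seeds are hidden, so every $\vv_{ij}^{(t)}$ is genuinely random to the adversary.

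For an honest-but-curious user $i$ (adversary~II), the only change is that $i$ knows the seeds it shares, so it can subtract the known $\vv_{ki}^{(t)}$ from every observed $\Tilde{\gg}_k^{(t)}$; the residual correlated noise on the remaining users $k\neq i$ then sums only over neighbors in the vertex-deleted subgraph $\cG\setminus\{i\}$. The same incidence-matrix computation gives covariance $\sigmacor^2\mL' + \sigmacdp^2\mI_{n-1}$ over the $n-1$ remaining users, with $\mL'$ the Laplacian of $\cG\setminus\{i\}$, and repeating the spectral bound with $n\to n-1$ and $\ac{\cG}\to\ac{\cG\setminus\{i\}}$, then taking the worst case $a_{1}(\cG)=\min_i \ac{\cG\setminus\{i\}}$, gives the second bound. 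Finally, the numerical accountant of Algorithm~\ref{algo:account} would arise by computing $\ee_i^\top\SSigma^{-1}\ee_i$ directly from the full spectrum rather than through the single-eigenvalue relaxation $\lambda_k\ge\ac{\cG}$.

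I expect the honest-but-curious reduction to be the main obstacle: one has to check carefully that, after subtracting the known incident seeds, the remaining $n-1$ observations are jointly Gaussian with covariance governed by the Laplacian of the subgraph $\cG\setminus\{i\}$ and \emph{not} by a principal submatrix or grounded Laplacian of $\mL$, and that the spectral split remains valid even when $\cG\setminus\{i\}$ is disconnected, in which case $\ac{\cG\setminus\{i\}}=0$ and the bound degrades gracefully to the uncorrelated-noise-only guarantee $\ee_k^\top(\SSigma')^{-1}\ee_k\le 1/\sigmacdp^2$. Cleanly justifying the reduction to observing $\{\Tilde{\gg}_i^{(t)}\}_i$ by post-processing is the other point requiring care.
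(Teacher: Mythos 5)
Your proposal is correct and follows essentially the same route as the paper's proof: both reduce a single iteration to a Gaussian mechanism with per-coordinate covariance $\sigmacor^2\mL+\sigmacdp^2\mI$ built from the oriented incidence matrix, apply the closed-form R\'enyi divergence for equal-covariance Gaussians, and bound the resulting quadratic form by separating the all-ones eigenvector (giving $\tfrac{1}{n\sigmacdp^2}$) from the orthogonal complement (giving $\tfrac{1-1/n}{\sigmacdp^2+\ac{\cG}\sigmacor^2}$ via $\lambda_2(\mL)=\ac{\cG}$), with the curious-user case handled exactly as you describe by subtracting the known incident noise and passing to the Laplacian of the vertex-deleted subgraph. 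The only cosmetic difference is that you expand $\ee_i^\top\SSigma^{-1}\ee_i$ directly in the eigenbasis whereas the paper decomposes $2C\ee_i$ into its component along $\1$ plus an orthogonal part and invokes the Courant--Fischer characterization of the second eigenvalue; the two computations are equivalent.
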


The result of Theorem~\ref{th:privacy} implies that in order to obtain SecRDP and thus SecLDP, i.e., in order to bound $\varepsilon$, it is sufficient for the noise magnitudes to scale as $\sigmacdp^2 = \Omega(\tfrac{1}{n})$, and $\sigmacor^2 = \Omega(\tfrac{1}{a(\cG)})$ or $\sigmacor^2 = \Omega(\tfrac{1}{a_1(\cG)})$ depending on the adversary strength.
Also, as we shall see in the next section, $\sigmacdp^2$ drives the dominant convergence terms, so its dependence on $n$ is crucial, while that of $\sigmacor^2$ only influences higher-order convergence terms.
If $\cG$ is disconnected, then its algebraic connectivity is zero~\cite{de2007old}, and one should set $\sigmacdp^2 = \Theta(1)$ and $\sigmacor = 0$ in \alg, which corresponds to the LDP baseline.

Finally, we compare our privacy analysis of a single step of \alg with its counterpart in the work of~\citet{sabater2022accurate}. Their Theorem~1 states a DP guarantee, while \alg guarantees RDP, which is stronger~\cite{mironov2017renyi}.  Moreover, our result applies generically to \emph{arbitrary} graph topologies, while theirs necessitates a graph-dependent analysis to derive the theoretical values of $\sigmacdp$ and $\sigmacor$.
For worst-case connected graphs, such as the ring graph, their analysis has a tighter dependence in terms of $\sigmacor$ after converting from RDP to DP, although the latter only marginally influences the privacy-utility trade-off.

\textbf{Privacy accountant.}
The theoretical privacy bound from Theorem~\ref{th:privacy} may be too loose for practical use.
Thus, we devise a privacy accounting method, described in Algorithm~\ref{algo:account}, which allows computing tight privacy bounds for a single step of \alg.
The accounting procedure is simple, and mainly involves computing the inverse of a ``modified'' graph Laplacian matrix, which can be conducted efficiently for large sparse graphs~\cite{vishnoi2012laplacian}.
It is straightforward to account the privacy for the full \alg procedure using the composition and DP conversion properties of RDP~\cite{mironov2017renyi} in addition to Algorithm~\ref{algo:account}. 

\begin{algorithm}[t]
    \caption{\textsc{Single-step SecRDP accountant}}\label{algo:account}
	\begin{algorithmic}[1]
		\REQUIRE clipping threshold $C$, noise variances $\sigmacdp, \sigmacor$.
            \IF{external eavesdropper}
            \STATE Get Laplacian matrix $\mL$ of the full graph $\cG$\;
            \STATE Compute $\SSigma = \left(\sigmacdp^2 \mI_{n} + \sigmacor^2 \mL \right)^{-1}$\;
            \STATE \textbf{return} $2C^2 \max_{i \in [n]} \SSigma_{ii}$\;
            \ENDIF
            \IF{honest-but-curious non-colluding users}
            \FOR{$i$ \textbf{in} $1 \ldots n$}
            \STATE Get Laplacian matrix $\mL$ of the subgraph of $\cG$ obtained by deleting vertex $i$\;
            \STATE Compute $\SSigma = \left(\sigmacdp^2 \mI_{n-1} + \sigmacor^2 \mL \right)^{-1}$\;
            \STATE $\varepsilon_i = 2C^2 \max_{j \in [n-1]} \SSigma_{jj}$\;
            \ENDFOR
            \STATE \textbf{return} $\max_{i \in [n]} \varepsilon_i$\;
            \ENDIF
	\end{algorithmic}
\end{algorithm}

\section{Utility Analysis}

In this section, we present our theoretical convergence and privacy-utility trade-off results. We first state our optimization assumptions below.

\subsection{Assumptions}
For all our theoretical results, we assume that the local loss functions are smooth.

\begin{assumption}[$L$-smoothness]\label{a:lsmooth}
Each function $\loss_i$
is differentiable and there exists a constant $L \geq 0$ such that for each $\xx, \yy \in \R^d, i \in [n]$:
\begin{align} \textstyle
&\norm{\nabla \loss_i(\yy) - \nabla \loss_i(\xx)}_2 \leq L \norm{\xx -\yy}_2\,. \label{eq:F-smooth}%
\end{align}
\end{assumption}

Additionally, some of our results require the Polyak-Łojasiewicz (PL) inequality~\citep{karimi2016linear}.
This condition does not require convexity, and is implied by strong convexity for example.
\begin{assumption}[$\mu$-PL]
\label{a:PL}
Function $\loss$ satisfies the $\mu$-Polyak-Łojasiewicz (PL) inequality. That is, for all $\xx \in \R^d$:
\begin{align} \textstyle
 2\mu(\loss(\xx) - \loss_\star) \leq \norm{\nabla \loss(\xx)}_2^2, \label{eq:PL}
\end{align}
where $\loss_\star \coloneqq \inf_{\xx \in \R^d} \loss(\xx)$ denotes the infimum of $\loss$.
\end{assumption}

\label{sec:noise}
We now formulate our conditions on the stochastic gradient noise and local loss functions heterogeneity.
\begin{assumption}[Bounded noise and heterogeneity]\label{a:opt_nc}
	We assume that there exist $P$, $\zeta_\star$ such that for all $\xx \in \R^d$,
	\begin{align} \textstyle
	\frac{1}{n} \sum_{i = 1}^n \norm{\nabla \loss_i(\xx)}_2^2 \leq \zeta_\star^2 + P \norm{\nabla \loss(\xx)}^2_2 \,, \label{eq:grad_opt_nc}
	\end{align}
    Also, we assume that there exist  $M$, $ \sigma_\star $ such that for all $\xx_1, \dots \xx_n \in \R^d$,
	\begin{align} \textstyle
	 \Psi(\xx_1,\ldots,\xx_n) \leq \sigma_\star^2 
 +  \frac{M}{n} \sum_{i=1}^n \norm{\nabla \loss(\xx_i)}^2_2,\label{eq:noise_opt_nc}
	\end{align}
	where we introduced $\Psi(\xx_1,\ldots,\xx_n) := \frac{1}{n} \sum_{i = 1}^n \EE{\xxi_i}{\norm{\nabla \ell(\xx_i, \xxi_i) - \nabla \loss_i(\xx_i)}}^2_2$.
\end{assumption}
Our noise assumption recovers the uniformly bounded noise assumption when $M=0$ and $n=1$, which is common for the non-convex analysis of SGD~\citep{bottou2018optimization}.
Our gradient heterogeneity assumption is one of the weakest in the literature~\citep{karimireddy2020scaffold}.
For the smooth convex (or PL) case, these assumptions hold with $\zeta_\star^2$ and $\sigma_\star^2$ being the gradient heterogeneity and noise, respectively, at the minimum only~\citep{vaswani2019fast}.

We additionally assume that gradients are bounded. This is a common assumption in private optimization to ignore the effect of clipping~\citep{agarwal2018cpsgd,noble2022differentially,allouah2023privacy}, which is not the focus of our work.
\begin{assumption}[Bounded Gradients]\label{a:boundedgradient}
We assume that there exists $C \geq 0$ such that for each $i \in [n], \xx \in \R^d, \xxi \in \cD_i$,
\begin{align}  
	\norm {\nabla \ell(\xx, \xxi)} \leq C.
	\end{align} 
\end{assumption}

As is typical in decentralized optimization algorithms, we make use of a mixing matrix $\mW$, as defined below.
\begin{definition}[Mixing matrix]\label{def:valid_mixing}
A matrix $\mW \in [0,1]^{n \times n}$ is a mixing matrix if it is symmetric and stochastic ($\mW\1 = \1$).
\end{definition}
Finally, we assume that the mixing matrix $\mW$ brings any set of vectors closer to their average with factor at least $1-p$.
\begin{assumption}[Consensus rate]\label{a:avg_distrib}
We assume that there exists $p \in (0,1]$ such that for every matrix $\mX \in \R^{d \times n}$,
\begin{align} \textstyle
	\norm{\mX \mW - \bar \mX}_F^2 &\leq (1 - p) \norm{\mX-\bar \mX}_F^2, \label{eq:p}
	\end{align}
	where we define the average
 $\bar{\mX} := \mX \tfrac{\1\1^\top}{n}$.
\end{assumption}
This assumption holds with $1-p$ being the second-largest eigenvalue value of $\mW\mW^\top$~\citep{boyd2006randomized}, e.g.,
$p = 1$ for the complete graph, $p=\Theta(\tfrac{1}{n^2})$ for the ring graph.

\subsection{Convergence Analysis}
We now present the convergence rate of~\alg, showing how the two the privacy noises affect its convergence speed.
First, we introduce the following quantity
\begin{equation}
    \label{def:hg}
    \hg{\mW} \coloneqq \frac{\sum_{i, k = 1}^{n} \norm{\mW_i - \mW_k}^2 \1_{k \in \cN_i}}{2\sum_{i, k = 1}^{n} \1_{k \in \cN_i}},
\end{equation}
where $\mW_i$ denotes the $i$-th column of $\mW$ and ``$k \in \cN_i$'' denotes $\{i,k\} \in \cE$.
This quantity naturally appears when analyzing the correlated noise reduction after one gossiping step.
It measures the heterogeneity of mixing weights $\mW$ across connected users of the graph $\cG$. The smaller $\hg{\mW}$, the closer are mixing weights, and the less correlated noise remains after one gossiping step, e.g., $\hg{\mW}=0$ for the complete graph with uniform mixing weights $\mW = \tfrac{\1\1^\top}{n}$.
More generally, for any graph $\cG$ with minimal degree $k_{\mathrm{min}} \geq 1$, we can show that $\hg{\mW}$ decreases with the minimal degree as $\hg{\mW} \leq \tfrac{2}{k_{\mathrm{min}}}$, when using uniform mixing weights, i.e., if $\mW_{ij}= \tfrac{\1_{j \in \cN_i}}{\mathrm{deg}(i)+1}, \forall i,j\in [n]$, where $\mathrm{deg}(i) = \card{\cN_i}$ is the degree of user $i$ in the graph.

We now state our convergence result in Theorem~\ref{thm:summary} below.

\begin{restatable}{theorem}{thsummary}\label{thm:summary}
Let Assumptions~\ref{a:lsmooth}, \ref{a:opt_nc}, \ref{a:boundedgradient}, \ref{a:avg_distrib} hold.
Consider Algorithm~\ref{algo}. 
Denote $\bar \xx^{(t)} = \frac{1}{n}\sum_{i = 1}^n \xx_i^{(t)}$,
$\loss_0 \coloneqq \loss(\bar{\xx}^{(0)}) - \loss_\star$, $\Xi_0 \coloneqq \tfrac{1}{n}\sum_{i=1}^n \|\xx_i^{(0)}-\bar{\xx}^{(0)}\|_2^2$, and $c \coloneqq \max{\{4\sqrt{3(1-p)(3P+pM)}, \tfrac{\mu}{L}, 2p, \tfrac{4pM}{n}}\}$.
For $T \geq 1$:
\begin{enumerate}[leftmargin=*]
    \item If $\loss$ is $\mu$-PL (Assumption~\ref{a:PL}) and $\eta_t = \tfrac{16}{\mu(t+c\tfrac{L}{\mu p})}$, then
\begin{align*}
    &\E\loss(\bar{\xx}^{(T)}) - \loss_\star
    \lesssim \frac{L(\sigma_\star^2+d\sigmacdp^2)}{\mu^2nT} + \frac{c^2L^2\loss_0}{\mu^2p^2T^2} \\
    &\quad+ \frac{cL^3 \Xi_0}{\mu^2 p^2 T^2}  + \frac{L^2\log{T}}{\mu^3pT^2}\Big((1-p)(\frac{\zeta_\star^2}{p} + \sigma_\star^2)\\
    &\quad+ \frac{\hg{\mW} \card{\cE} d \sigmacor^2}{n} + \norm{\mW-\tfrac{\1\1^\top}{n}}_F^2 d \sigmacdp^2\Big).
\end{align*}
    \item If $\eta_t = \min{\{\tfrac{p}{2c L}, 2\sqrt{\tfrac{\loss_0n}{LT(\sigma_\star^2+d\sigmacdp^2)}}\}}$
    , then
    \begin{align*}
    &\frac{1}{T} \sum_{t=0}^{T-1} \E\|\nabla \loss(\bar{\xx}^{(t)})\|_2^2
    \lesssim \sqrt{\frac{L\loss_0(\sigma_\star^2+d\sigmacdp^2)}{nT}}+ \frac{c L\loss_0}{p T} \\
    &\quad+ \frac{L^2\Xi_0}{pT} + \frac{L\loss_0n}{pT(\sigma_\star^2+d\sigmacdp^2)} \Big((1-p)(\frac{\zeta_\star^2}{p} +\sigma_\star^2)\\
    &\quad+ \frac{\hg{\mW} \card{\cE} d \sigmacor^2}{n} + \norm{\mW-\tfrac{\1\1^\top}{n}}_F^2 d \sigmacdp^2\Big).
\end{align*}
\end{enumerate}
In the above, $\lesssim$ denotes inequality up to absolute constants.
\end{restatable}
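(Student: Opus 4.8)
The plan is to track two coupled quantities along the style of~\citet{koloskova2020unified}, but with a refined treatment of the injected noise: the average iterate $\bar{\xx}^{(t)} = \tfrac{1}{n}\sum_i \xx_i^{(t)}$ and the consensus distance $\Xi_t \coloneqq \tfrac{1}{n}\sum_i \norm{\xx_i^{(t)} - \bar{\xx}^{(t)}}_2^2$. Writing $\mX^{(t)} = [\xx_1^{(t)},\dots,\xx_n^{(t)}]$, with $\mG^{(t)}$ the clipped-gradient matrix, $\mV^{(t)}$ the correlated-noise matrix (column $i$ being $\sum_{j\in\cN_i}\vv_{ij}^{(t)}$), and $\overline{\mV}^{(t)}$ the i.i.d.\ noise matrix, I would first observe that double stochasticity of $\mW$ together with $\vv_{ij}^{(t)} = -\vv_{ji}^{(t)}$ gives $\bar{\xx}^{(t+1)} = \bar{\xx}^{(t)} - \eta_t(\tfrac{1}{n}\sum_i \gg_i^{(t)} + \tfrac{1}{n}\sum_i \overline{\vv}_i^{(t)})$; that is, the correlated noise vanishes entirely from the average and only the averaged i.i.d.\ noise, of variance $\tfrac{d\sigmacdp^2}{n}$, survives. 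Applying the $L$-smoothness descent lemma (Assumption~\ref{a:lsmooth}) to this recursion yields a one-step inequality $\E\loss(\bar{\xx}^{(t+1)}) \leq \E\loss(\bar{\xx}^{(t)}) - \tfrac{\eta_t}{2}\E\norm{\nabla\loss(\bar{\xx}^{(t)})}_2^2 + O(\eta_t L^2 \E\Xi_t) + O(\tfrac{\eta_t^2 L(\sigma_\star^2 + d\sigmacdp^2)}{n})$, where the $L^2\Xi_t$ term accounts for gradients being evaluated at $\xx_i^{(t)}$ rather than $\bar{\xx}^{(t)}$, and Assumption~\ref{a:opt_nc} controls the stochastic-gradient variance.

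The heart of the proof is the consensus-distance recursion, where the correlated noise does \emph{not} cancel. Using $\bar{\mX}^{(t+1)} = \bar{\mX}^{(t+\tfrac{1}{2})}$ and $\mW\1 = \1$, I would write the exact identity $\mX^{(t+1)} - \bar{\mX}^{(t+1)} = [(\mX^{(t)} - \bar{\mX}^{(t)}) - \eta_t(\Tilde{\mG}^{(t)} - \bar{\Tilde{\mG}}^{(t)})]\mW$ and crucially keep the factor $\mW$ attached to the noise terms rather than discarding it through the crude contraction bound. Since both noise matrices are zero-mean and independent of everything else, expanding $\E\norm{\cdot}_F^2$ kills all cross terms and leaves three contributions: a gradient/iterate part, to which I apply Assumption~\ref{a:avg_distrib} (contraction by $1-p$, then Young's inequality to obtain an effective rate $1-\tfrac{p}{2}$ at the cost of a $\tfrac{1-p}{p}$ amplification of the heterogeneity, bounded via Assumptions~\ref{a:opt_nc} and~\ref{a:lsmooth}); an i.i.d.\ noise part equal to $\eta_t^2\,\E\norm{\overline{\mV}^{(t)}(\mW - \tfrac{\1\1^\top}{n})}_F^2 = \eta_t^2\,d\sigmacdp^2\norm{\mW - \tfrac{\1\1^\top}{n}}_F^2$; and the correlated-noise part $\eta_t^2\,\E\norm{\mV^{(t)}\mW}_F^2$. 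For the last term I would compute the column-covariance of $\mV^{(t)}$ and find it equals $\sigmacor^2\mL$ (the graph Laplacian), whence $\E\norm{\mV^{(t)}\mW}_F^2 = d\sigmacor^2\,\Tr(\mW^\top\mL\mW) = d\sigmacor^2\sum_{\{i,j\}\in\cE}\norm{\mW_i - \mW_j}_2^2 = 2\card{\cE}\,\hg{\mW}\,d\sigmacor^2$, which is exactly how the quantity $\hg{\mW}$ from~\eqref{def:hg} enters. Dividing by $n$, both noise contributions to $\E\Xi_{t+1}$ match the bracketed terms in the statement.

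With both recursions in hand, I would combine them into a single potential $\E\loss(\bar{\xx}^{(t)}) + w\,\E\Xi_t$ (equivalently, sum the consensus recursion and substitute the resulting bound on $\sum_t \eta_t\E\Xi_t$ into the descent inequality). The stepsize smallness encoded in the constant $c$---in particular $\eta_t L \lesssim p$ and the factor $\sqrt{(1-p)(3P + pM)}$---is what decouples the two recursions, ensuring the consensus error feeds back into the descent only at higher order. This gives a master bound on $\tfrac{1}{T}\sum_t \E\norm{\nabla\loss(\bar{\xx}^{(t)})}_2^2$ in which $\sigma_\star^2 + d\sigmacdp^2$ drives the leading $\tfrac{1}{n}$-scaled term, while $\zeta_\star^2$, $\hg{\mW}\card{\cE}d\sigmacor^2$, and $\norm{\mW - \tfrac{\1\1^\top}{n}}_F^2 d\sigmacdp^2$ appear only in lower-order consensus terms. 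I would then specialize the stepsize: for the non-convex case a constant $\eta_t$ telescopes and is optimized to give the $\tfrac{1}{\sqrt{T}}$ rate; for the PL case I would invoke Assumption~\ref{a:PL} to turn the gradient-norm bound into a function-gap contraction and solve the resulting recursion with $\eta_t = \tfrac{16}{\mu(t + cL/(\mu p))}$ via a standard weighted-averaging lemma, yielding the $\tfrac{1}{\mu^2 nT}$ leading term and the $\tfrac{\log T}{T^2}$ higher-order terms.

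The main obstacle I anticipate is controlling the accumulation of the non-cancelling correlated noise across iterations. The naive bound $\E\norm{\mV^{(t)}}_F^2 = 2\card{\cE}d\sigmacor^2$ (the crude per-node degree sum) would make the correlated noise appear with no $\hg{\mW}$ reduction and potentially at leading order; the gain comes entirely from retaining the gossip matrix on the noise and exploiting its Laplacian covariance, and then from ensuring, through the geometric $(1-\tfrac{p}{2})$ contraction, that the per-iteration correlated-noise injections sum to only a higher-order $\tfrac{\log T}{T^2}$ (resp.\ $\tfrac{1}{T}$) contribution rather than degrading the dominant rate.
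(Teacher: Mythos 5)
Your proposal is correct and follows essentially the same route as the paper's proof: a descent bound on $\bar{\xx}^{(t)}$ exploiting the pairwise cancellation of correlated noise in the average, a consensus-distance recursion that retains the gossip matrix on the noise so that $\E\norm{\mN^{(t)}\mW}_F^2 = 2\hg{\mW}\card{\cE}d\sigmacor^2$ (your Laplacian-covariance/trace computation is an equivalent derivation of the paper's Lemma~\ref{lem:pairwisenoisereduce}, which instead expands the double sum directly), and a Lyapunov combination $V_t = \E[\loss(\bar{\xx}^{(t)})-\loss_\star] + \Theta(\tfrac{L^2\eta_t}{p})\Xi_t$ telescoped with the decaying (PL) or constant (non-convex) stepsize. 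The only discrepancy is that your uncorrelated-noise term $d\sigmacdp^2\norm{\mW-\tfrac{\1\1^\top}{n}}_F^2$ picks up an extra $1/n$ relative to the paper's consensus recursion, which only makes your bound tighter and still implies the stated result.
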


In Theorem~\ref{thm:summary}, the leading (slowest) terms of the convergence rates are in $\mathcal{O}{(\tfrac{1}{T})}$ and $\mathcal{O}{(\tfrac{1}{\sqrt{T}})}$ in the PL and non-convex cases, respectively. Thus, our analysis recovers the optimal asymptotic convergences rates in stochastic optimization~\cite{agarwal2009information,arjevani2023lower}.
Moreover, it features a linear speedup in the number of users $n$, like vanilla decentralized SGD~\cite{koloskova2020unified}.

The main difference with decentralized SGD is in the non-dominant terms due to the injected correlated noise. In the PL case for example, the term that depends on the correlated noise scales as
\begin{equation}
\label{eq:slowdown1}
\widetilde{\mathcal{O}}{\left(\frac{\hg{\mW} \card{\cE} d \sigmacor^2}{n T^2}\right)},
\end{equation}
by ignoring privacy-independent constants and logarithmic terms.
The above term quantifies a \emph{slowdown} effect of correlated noise.
Interestingly, it is non-dominant in $T$, and proportional to $\hg{\mW}$.
For example, this term is zero for the complete graph with $\mW = \tfrac{\1\1^\top}{n}$, because $\hg{\mW}=0$ in this case, which is expected as all correlated noise terms should be cancelled after one gossiping step.

\subsection{Privacy-utility Trade-off}
We now combine our privacy and convergence analyses to quantify the privacy-utility trade-off of \alg.
We recall that a graph is $2$-connected if it remains connected after removing any vertex.
We present the privacy-utility trade-off of \alg
in Corollary~\ref{cor:final} below by focusing on the PL case. We defer the non-convex result to the appendix because lower bounds are unknown in this case.

\begin{restatable}{corollary}{cortradeoff}
\label{cor:final}

Let Assumptions~\ref{a:lsmooth}-\ref{a:avg_distrib} hold.
Let $\varepsilon>0, \delta \in (0,1)$ be such that $\varepsilon \leq \log{(1/\delta)}$.
Algorithm~\ref{algo} satisfies $(\varepsilon, \delta)$-SecLDP (Definition~\ref{def:ldp}) with expected error 
\begin{align*}
        \mathcal{O}\left(\frac{C^2 d \log{(1/\delta)}}{n^2 \varepsilon^2}\right),
    \end{align*}
against the following adversaries:
\begin{itemize}
    \item an external eavesdropper: if $\cG$ is connected, $\sigmacdp^2= \tfrac{32 C^2 T \log{(1/\delta)}}{n \varepsilon^2}$ and $\sigmacor^2= \tfrac{32 C^2 T\log{(1/\delta)}}{a(\cG)\varepsilon^2}$,
    \item honest-but-curious non-colluding users: if $\cG$ is $2$-connected, $\sigmacdp^2= \tfrac{32 C^2 T \log{(1/\delta)}}{(n-1) \varepsilon^2}$ and $\sigmacor^2= \tfrac{32 C^2 T\log{(1/\delta)}}{a_1(\cG)\varepsilon^2}$, where $a_{1}{(\cG)}$ is the minimum algebraic connectivity across subgraphs obtained by deleting a single vertex from $\cG$.
\end{itemize}
In the above, $\mathcal{O}$ omits absolute constants, vanishing terms in $T$, and privacy-independent multiplicative constants $L, \mu$.
\end{restatable}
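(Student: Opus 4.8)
The plan is to combine the single-step privacy guarantee of Theorem~\ref{th:privacy} with the convergence rate of Theorem~\ref{thm:summary} (the PL case), after substituting the prescribed noise variances $\sigmacdp$ and $\sigmacor$. I would treat the external-eavesdropper case in detail; the honest-but-curious case is identical after replacing $n$ by $n-1$ and $\ac{\cG}$ by $a_1(\cG)$, the latter being strictly positive precisely because $\cG$ is assumed $2$-connected (so that $\sigmacor^2$ is finite).

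For the privacy claim, I would first note that Theorem~\ref{th:privacy} gives $(\alpha,\alpha\varepsilon_0)$-SecRDP per iteration, where, plugging in $\sigmacdp^2=\tfrac{32C^2T\log(1/\delta)}{n\varepsilon^2}$ and $\sigmacor^2=\tfrac{32C^2T\log(1/\delta)}{\ac{\cG}\varepsilon^2}$, a direct computation shows
\[
  \varepsilon_0 \le 2C^2\Big(\tfrac{1}{n\sigmacdp^2} + \tfrac{1-1/n}{\sigmacdp^2 + \ac{\cG}\sigmacor^2}\Big) \le \frac{\varepsilon^2}{8T\log(1/\delta)},
\]
the key cancellation being that both denominators become proportional to $\tfrac{C^2T\log(1/\delta)}{\varepsilon^2}$. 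By additive composition of RDP over the $T$ iterations, \alg is $(\alpha,\alpha\varepsilon_0 T)$-SecRDP, and converting to SecLDP via the standard RDP-to-DP bound with the optimized order $\alpha-1=\sqrt{\log(1/\delta)/(\varepsilon_0 T)}$ yields a SecLDP parameter at most $\varepsilon_0 T+2\sqrt{\varepsilon_0 T\log(1/\delta)}$. Using $\varepsilon_0 T\le\tfrac{\varepsilon^2}{8\log(1/\delta)}$ together with the hypothesis $\varepsilon\le\log(1/\delta)$ bounds this by $\tfrac{\varepsilon}{8}+\tfrac{\varepsilon}{\sqrt{2}}<\varepsilon$, establishing $(\varepsilon,\delta)$-SecLDP. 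The constant $32$ is tuned exactly so that these two contributions sum to less than $\varepsilon$.

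For the utility claim, I would substitute the same $\sigmacdp^2$ into the leading term of the PL bound in Theorem~\ref{thm:summary}. The crucial observation is that, since $\sigmacdp^2\propto T$, the dominant term becomes $T$-independent:
\[
  \frac{L\,d\,\sigmacdp^2}{\mu^2 n T} = \frac{32\,L\,C^2 d\log(1/\delta)}{\mu^2 n^2\varepsilon^2} = \mathcal{O}\!\left(\frac{C^2 d\log(1/\delta)}{n^2\varepsilon^2}\right),
\]
absorbing $L,\mu$ as the statement permits. It then remains to verify that every other term vanishes as $T\to\infty$: the $\sigma_\star^2$ contribution is $\mathcal{O}(1/T)$; the $\loss_0$ and $\Xi_0$ terms are $\mathcal{O}(1/T^2)$; and inside the $\widetilde{\mathcal{O}}(1/T^2)$ bracket the correlated-noise term carries $\sigmacor^2\propto T$ while the uncorrelated term carries $\sigmacdp^2\propto T$, so both decay like $\widetilde{\mathcal{O}}(1/T)$. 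Hence the surviving, $T$-independent contribution is exactly the advertised rate, which matches the central-DP optimal error.

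The routine parts are the two substitutions; the step requiring the most care is the privacy conversion, where one must optimize the RDP order $\alpha$, track the absolute constants through the composition and RDP-to-DP steps, and invoke $\varepsilon\le\log(1/\delta)$ to certify that the final SecLDP parameter stays below $\varepsilon$ for the specific constant $32$. The only genuinely structural input beyond this bookkeeping is recognizing the $T$-cancellation in the utility bound, which is what makes the noise-dominated term independent of the number of iterations and equal to $\mathcal{O}(C^2 d\log(1/\delta)/(n^2\varepsilon^2))$.
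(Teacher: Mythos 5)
Your proposal is correct and follows essentially the same route as the paper: the paper proves a general version (Corollary~\ref{cor:final-general}, parameterized by the collusion level $q$) by composing the single-step SecRDP bound of Theorem~\ref{th:privacy} over $T$ iterations, converting to SecLDP with the optimized Rényi order $\alpha_\star - 1 = \sqrt{\log(1/\delta)/(T\varepsilon_{\mathrm{step}})}$ and the hypothesis $\varepsilon \le \log(1/\delta)$ to get $\tfrac{\varepsilon^2}{8\log(1/\delta)} + \tfrac{\varepsilon}{\sqrt{2}} \le \varepsilon$, and then substituting the $T$-proportional noise variances into the PL bound of Theorem~\ref{thm:summary} so that only the $T$-independent term $\mathcal{O}(C^2 d\log(1/\delta)/(n^2\varepsilon^2))$ survives. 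Your constants, the key cancellation $\varepsilon_{\mathrm{step}} \le \varepsilon^2/(8T\log(1/\delta))$, and the identification of which terms vanish in $T$ all match the paper's argument.
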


\textbf{Tightness.}
The lower bound on the privacy-utility trade-off under user-level CDP is $\Omega{\left(\tfrac{d}{n^2\varepsilon^2}\right)}$~\citep{bassily2014private}.\footnote{We refer to the strongly convex lower bound of \citet{bassily2014private}, which also applies to the (larger) PL class of functions.} 
Under LDP, the lower bound on the privacy-utility trade-off is $\Omega{\left(\tfrac{d}{n\varepsilon^2}\right)}$~\citep{duchi2018minimax}.
Therefore, following the result of Corollary~\ref{cor:final}, \alg matches the optimal CDP privacy-utility trade-off, under SecLDP against both an external eavesdropper and non-colluding curious users.
We recall that this improves by factor $n$ over the trade-off achieved by LDP algorithms~\cite{bellet2018personalized,cheng2019towards,huang2019dp,li2023convergence}.
Besides, for comparison, \citet{cyffers2022muffliato} derive a privacy-utility trade-off in $\mathcal{O}{\left(\tfrac{k_{\mathrm{max}}}{\sqrt{p}n^2 \varepsilon^2}\right)}$, where $k_{\mathrm{max}}$ is the maximum degree of the graph, for a relaxation of Network DP~\cite{cyffers2022privacy}. 
Their trade-off matches CDP for well-connected graphs such as expanders~\cite{ying2021exponential}, but degrades with poorer connectivity, e.g., $\mathcal{O}{\left(\tfrac{1}{n \varepsilon^2}\right)}$ for the ring graph. In contrast, our trade-off matches CDP for arbitrary connected graphs, albeit the privacy definitions are orthogonal in general, as we discuss in Section~\ref{sec:problem}.
We also extend Corollary~\ref{cor:final} to colluding curious users (adversary III in Section~\ref{sec:problem}) in the appendix and match the optimal privacy-utility trade-off when there is a constant fraction of colluding users.
Naturally, if the group of colluding users is too large, the threat model of SecLDP approaches that of LDP, and thus cannot match the privacy-utility trade-off of CDP in such cases. 

Although the trade-off achieved by \alg is tight asymptotically in the number of iterations $T$, we recall that the convergence speed suffers from a slowdown due to correlated noise following our discussion of the term~\eqref{eq:slowdown1} of Theorem~\ref{thm:summary}.
Moreover, our analysis shows a tight trade-off by making stronger assumptions on the connectivity of the graph for stronger adversaries.
That is, for non-colluding curious users, Corollary~\ref{cor:final} assumes the graph to be $2$-connected, i.e., it remains connected after removing any vertex, while it only assumes the graph to be connected for the external eavesdropper. We believe this condition to be necessary: in the worst-case where a curious user $i$ is the unique neighbor of another user $j$, then $i$ can substract the correlated noise injected by user $j$, and leave the latter with the uncorrelated noise only, which is insufficient to protect its local model.

\section{Empirical Evaluation}
\label{sec:exp}

\begin{figure*}[t]
\vspace{-0.3cm}
\centering
\begin{subfigure}[t]{0.9\textwidth}
\centering
\includegraphics[width=\textwidth]{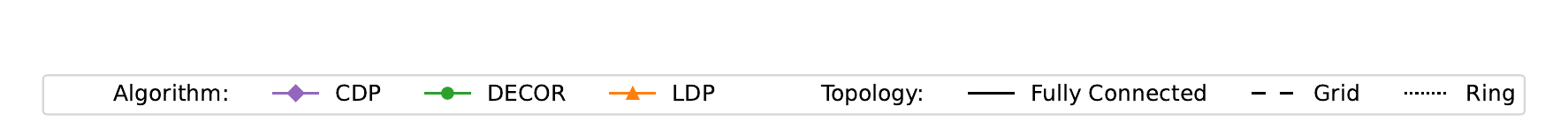}
\end{subfigure}
\vspace{-0.3cm}
\begin{subfigure}[t]{0.33\textwidth}
\centering
\includegraphics[width=\textwidth]{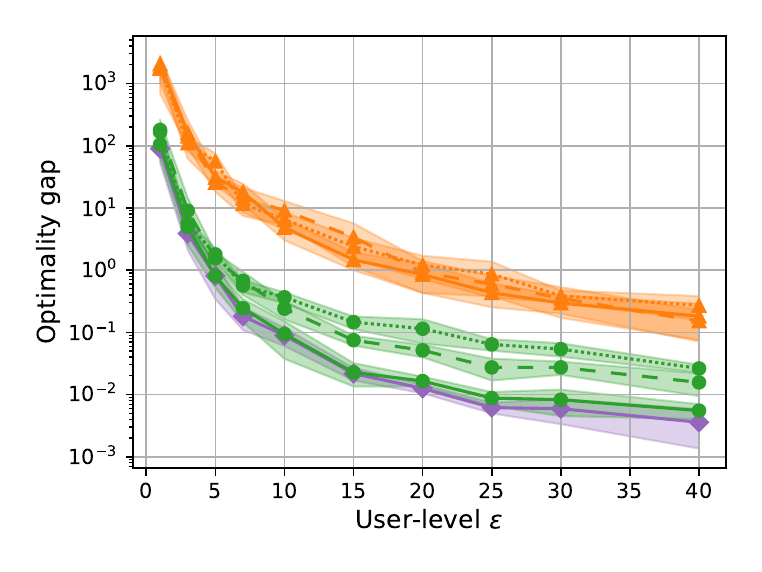}
\vspace{-0.8cm}
\caption{Least-squares regression on synthetic data} %
\label{fig:quadratics}
\end{subfigure}
\begin{subfigure}[t]{0.33\textwidth}
\centering
\includegraphics[width=\textwidth]{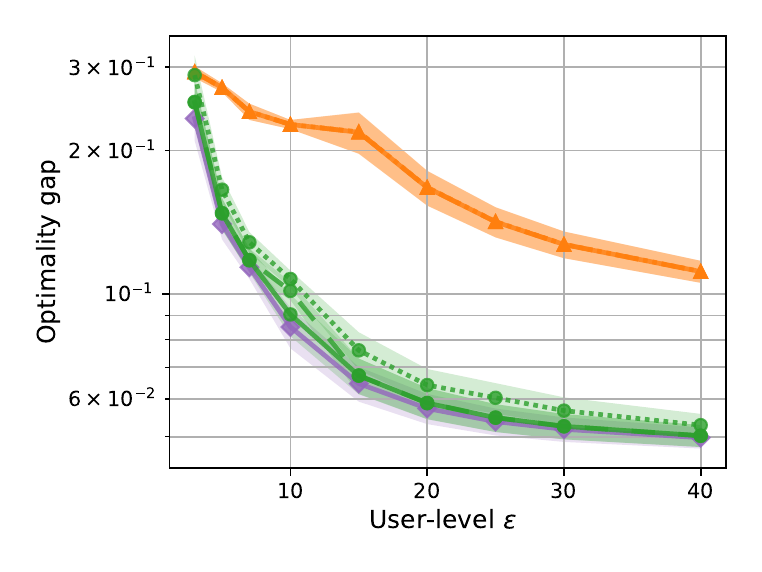}
\vspace{-0.8cm}
\caption{Logistic regression on a9a}
\label{fig:libsvm}
\end{subfigure}
\begin{subfigure}[t]{0.33\textwidth}
\centering
\includegraphics[width=\textwidth]{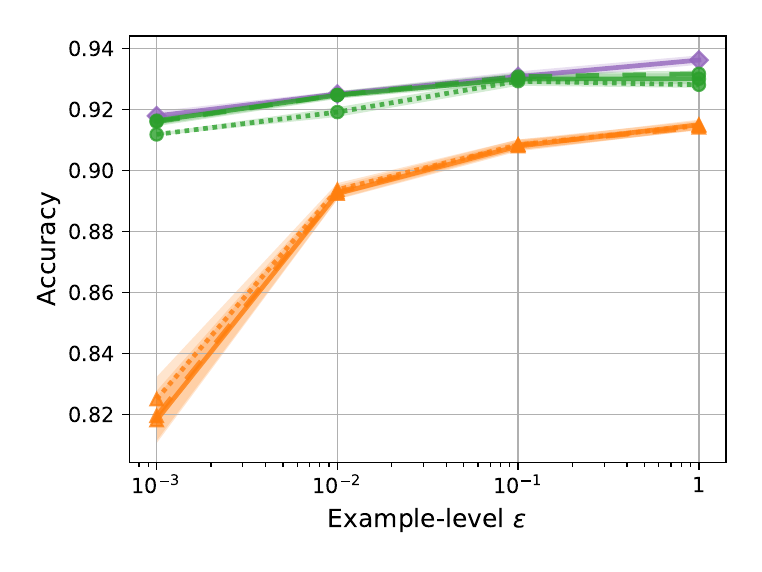}
\vspace{-0.8cm}
\caption{Neural network training on MNIST}
\label{fig:mnist}
\end{subfigure}
\vspace{-0.25cm}
\caption{Privacy-utility trade-offs for \alg and the CDP and LDP baselines on least-squares regression, logistic regression, and neural network training under $(\varepsilon, 10^{-5})$-SecLDP against an external eavesdropper observing all communications. \alg closely matches the performance of CDP, and considerably surpasses LDP, across all considered tasks, privacy budgets, and topologies.}
\label{fig:acc_privacy}
  \vspace{-0.4cm}
\end{figure*}

In this section, we empirically show that \alg achieves a privacy-utility trade-off matching the CDP baseline, and surpassing the LDP baseline.
Recall that LDP is the strongest threat model in decentralized learning, while CDP is the weakest, and thus they represent lower and upper bounds in terms of performance.
We compare these algorithms on three strongly convex and non-convex tasks with synthetic and real-world datasets, across various user-level privacy budgets and network topologies.
For simplicity, we focus on adversary I of SecLDP, i.e., an external eavesdropper observing all user communications.
We provide a summary of our empirical evaluations in Figure~\ref{fig:acc_privacy}.

\textbf{Setup.}
We consider $n=16$ users on three usual network topologies in increasing connectivity: ring, grid (2d torus), and fully-connected. We use the Metropolis-Hastings~\cite{boyd2006randomized} mixing matrix, i.e., $\mW_{ij}= \tfrac{\1_{j \in \cN_i}}{\mathrm{deg}(i)+1}, \forall i,j\in [n]$, where $\mathrm{deg}(i) = \card{\cN_i}$ is the degree of user $i$ in the graph. We tune all hyperparameters for each algorithm individually, and run each experiment with four seeds for reproducibility.
We account for the privacy budget using our SecLDP privacy accountant (Algorithm~\ref{algo:account}).
We defer the full experimental setup to the appendix.
\footnote{Our code is available at \href{https://github.com/elfirdoussilab1/DECOR}{https://github.com/elfirdoussilab1/DECOR}.}

\subsection{Strongly Convex Tasks}
We study two strongly convex tasks: least-squares regression on synthetic data and regularized logistic regression on the \textit{a9a} LIBSVM dataset~\cite{chang2011libsvm}. Both of these tasks are covered by our theory in the PL case.

\textbf{Least-squares regression.}
In this task, the empirical loss function is given for every $i\in [n]$ as follows:
$$ \loss_i(\xx) = \frac{1}{2} \| \mA_i\xx - \bb_i \|_2^2,~\forall \xx \in \R^d,$$
where $\mA_i^2 \coloneqq \tfrac{i^2}{n} \mI_d $ and $\bb_i \sim \cN(0, \tfrac{1}{i^2} \mI_d)$.
As shown in Figure~\ref{fig:quadratics}, \alg achieves an order of magnitude lower training loss than LDP across all privacy levels and topologies. Moreover, the performance of \alg is comparable to that of CDP, especially towards the low privacy regime. 

\textbf{Logistic regression.}
In this task, the loss function at data point $(\aa,b) \in \R^d \times \{0,1\}$ is given as:
\begin{align*}
    \ell(\xx, (\aa, b)) =  \log(1+b\exp(- \xx^\top \aa )) + \lambda \| \xx\|_2^2, \forall \xx \in \R^d.
\end{align*}
In Figure~\ref{fig:libsvm}, similar to the previous task, \alg closely matches the performance of CDP across all considered topologies and privacy budgets, while being an order of magnitude better than LDP.

\subsection{Non-convex Task}
We consider the MNIST~\cite{mnist} task with a one-hidden-layer neural network. Since this task is more challenging for the LDP baseline, we consider example-level DP, i.e., privacy is ensured when changing any single data point, to which our theoretical results can be extended straightforwardly using privacy amplification by subsampling~\cite{wang2019subsampled}.

In Figure~\ref{fig:mnist}, we once again observe that \alg matches CDP, and surpasses LDP, on all considered topologies and privacy budgets. Indeed, the gap of CDP with LDP is almost $10$ accuracy points for the lowest privacy budget, as suggested by the theory, while the gap between \alg on the ring topology and the CDP baseline, or \alg on the grid topology, is less than $1$ accuracy point.

\section{Conclusion}
Decentralized learning is a promising paradigm for scalable data usage without compromising privacy. However, the risk of privacy breaches by curious users remains. Our solution \alg addresses this challenge by securely exchanging randomness seeds and injecting pairwise correlated Gaussian noises to protect local models. Our theoretical and empirical results show that \alg matches the optimal centralized privacy-utility trade-offs, while ensuring differential privacy within the strong threat model of SecLDP, our new relaxation of LDP. Our proposed SecLDP offers protection against external eavesdroppers and curious users, assuming shared secrets among connected pairs. 
We propose a privacy accountant with our algorithm in order to foster reproducibility and encourage further research and deployment of private decentralized learning algorithms.

\section*{Acknowledgments}
This work was supported in part by SNSF grant 200021\_200477.
The authors are thankful to the anonymous reviewers for their constructive comments.

\section*{Impact Statement}
Our work is mainly theoretical, we do not think any specific societal consequences must be highlighted here.

\bibliography{references}
\bibliographystyle{icml2024}

\newpage
\onecolumn
\appendix
\section*{APPENDIX}
The appendix is organized as follows. 
The proofs and extensions of our privacy analysis, including Theorem~\ref{th:privacy}, are in Appendix~\ref{sec:app-privacy}.
The proofs and extensions of our convergence analysis, including Theorem~\ref{thm:summary}, are in Appendix~\ref{sec:app-convergence}.
The proofs and extensions of our privacy-utility trade-off, including Corollary~\ref{cor:final}, are in Appendix~\ref{sec:app-tradeoff}.
Our detailed experimental setup is in Appendix~\ref{sec:app-expsetup}.

\section{Privacy Analysis}
\label{sec:app-privacy}

In this section, we prove our main privacy result stated in Theorem~\ref{th:privacy} and extend it to the general privacy adversaries discussed in Section~\ref{sec:problem}.
We first recall some useful facts around Rényi divergences and linear algebra. 

\begin{definition}
[$\alpha$-Rényi divergence]
Let $\alpha>0, \alpha \neq 1$. The $\alpha$-Rényi divergence between two probability distributions $P$ and $Q$ is defined as
\begin{align*}
    \mathrm{D}_{\alpha}{\left ( P ~\|~ Q \right )} \coloneqq \frac{1}{\alpha-1}\log{\EE{X \sim Q}{\left(\frac{P{(X)}}{Q{(X)}}\right)^\alpha}}.
\end{align*}
\end{definition}

\begin{lemma}[\protect{\cite{gil2013renyi}}]
\label{lem:renyigaussian}
Let $\alpha>0 ,\alpha \neq 1, \mu_1, \mu_2 \in \R^n$, and $\SSigma \in \R^{n \times n}$. Assume that $\SSigma$ is positive definite.
The $\alpha$-Rényi divergence between the multivariate Gaussian distributions $\cN(\mu_1, \SSigma)$ and $\cN(\mu_2, \SSigma)$ is
\begin{align*}
    \mathrm{D}_{\alpha}{\left(\cN(\mu_1, \SSigma)~\|~\cN(\mu_2, \SSigma)\right)} = \frac{\alpha}{2} (\mu_1 - \mu_2)^\top \SSigma^{-1} (\mu_1 - \mu_2).
\end{align*}
\end{lemma}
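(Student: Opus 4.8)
The plan is to evaluate the defining expression for the Rényi divergence directly, exploiting that both distributions are Gaussian with the \emph{same} covariance $\SSigma$. Writing $p$ and $q$ for the densities of $\cN(\mu_1, \SSigma)$ and $\cN(\mu_2,\SSigma)$, I would first rewrite the expectation in the definition as the integral $\int_{\R^n} q(x)\,(p(x)/q(x))^\alpha\, dx = \int_{\R^n} p(x)^\alpha\, q(x)^{1-\alpha}\,dx$. Since the two normalizing constants $(2\pi)^{-n/2}\card{\SSigma}^{-1/2}$ are identical and are raised to powers $\alpha$ and $1-\alpha$ summing to one, a single factor $(2\pi)^{-n/2}\card{\SSigma}^{-1/2}$ survives outside the exponential.

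Next I would combine the two quadratic forms appearing in the exponent. Setting $A \coloneqq \SSigma^{-1}$, the exponent equals $-\tfrac12\big[\alpha(x-\mu_1)^\top A(x-\mu_1)+(1-\alpha)(x-\mu_2)^\top A(x-\mu_2)\big]$. Expanding and collecting terms, the coefficient of the $x^\top A x$ term is $\alpha+(1-\alpha)=1$, the linear-in-$x$ part is governed by the weighted mean $\bar\mu \coloneqq \alpha\mu_1+(1-\alpha)\mu_2$, and there is an $x$-independent remainder. Completing the square in $x$ then produces $(x-\bar\mu)^\top A(x-\bar\mu)$ plus the constant $C \coloneqq \alpha\,\mu_1^\top A\mu_1 + (1-\alpha)\,\mu_2^\top A\mu_2 - \bar\mu^\top A\bar\mu$.

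The key observation is that the surviving $x$-dependent Gaussian is exactly the density of $\cN(\bar\mu,\SSigma)$ up to the leading constant, so it integrates to one; this step uses positive definiteness of $\SSigma$, which both guarantees that $A$ is well defined and ensures convergence of the integral (the coefficient of $x^\top A x$ being $1>0$). Hence the whole integral equals $\exp(-\tfrac12 C)$. The only genuine computation is to simplify $C$: expanding $\bar\mu^\top A\bar\mu = \alpha^2\mu_1^\top A\mu_1 + 2\alpha(1-\alpha)\mu_1^\top A\mu_2 + (1-\alpha)^2\mu_2^\top A\mu_2$ and regrouping, the quadratic and cross terms assemble into $C = \alpha(1-\alpha)(\mu_1-\mu_2)^\top A(\mu_1-\mu_2)$.

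Finally, taking the logarithm and dividing by $\alpha-1$ gives $\mathrm{D}_{\alpha}{\left(\cN(\mu_1,\SSigma)\,\|\,\cN(\mu_2,\SSigma)\right)} = \tfrac{1}{\alpha-1}\cdot\big(-\tfrac12\big)\,\alpha(1-\alpha)\,(\mu_1-\mu_2)^\top\SSigma^{-1}(\mu_1-\mu_2)$, and using $\tfrac{1-\alpha}{\alpha-1}=-1$ collapses this to the claimed $\tfrac{\alpha}{2}(\mu_1-\mu_2)^\top\SSigma^{-1}(\mu_1-\mu_2)$. The whole argument is a standard Gaussian integral, so the only place warranting care is the algebraic simplification of $C$ together with correctly tracking the sign contributed by the factor $\alpha-1$; I would not expect any substantive obstacle beyond this bookkeeping.
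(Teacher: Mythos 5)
Your derivation is correct and complete. Note, though, that the paper does not prove this lemma at all: it is imported as a black-box result with a citation to Gil et al.\ (2013), so there is no ``paper proof'' to compare against step by step. What you have written is the standard self-contained derivation, and it checks out: the reduction of $\EE{X \sim Q}{(P(X)/Q(X))^\alpha}$ to $\int p^\alpha q^{1-\alpha}$, the cancellation of normalizing constants because the exponents $\alpha$ and $1-\alpha$ sum to one, the completion of the square around $\bar\mu = \alpha\mu_1 + (1-\alpha)\mu_2$, the simplification $C = \alpha(1-\alpha)(\mu_1-\mu_2)^\top \SSigma^{-1}(\mu_1-\mu_2)$, and the final sign bookkeeping via $\tfrac{1-\alpha}{\alpha-1} = -1$ are all correct. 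You also correctly flag the one point that genuinely matters: because the two covariances are equal, the coefficient of the quadratic term $x^\top \SSigma^{-1} x$ is $\alpha + (1-\alpha) = 1 > 0$ for \emph{every} $\alpha$, so the integral converges and the formula holds for all $\alpha > 0$, $\alpha \neq 1$ with no further restriction --- this is precisely what fails in the unequal-covariance case, where one must additionally require $\alpha \SSigma_2 + (1-\alpha)\SSigma_1 \succ 0$ and an extra log-determinant term appears. Your argument is in effect the equal-covariance specialization of that general formula, obtained directly rather than by quoting it, which is a perfectly good (arguably preferable, since self-contained) route.
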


We recall the folklore result below, which is a consequence of the Courant-Fischer min-max theorem~\citep{de2007old}.
\begin{lemma}
\label{lem:courantfischer}
Let $\mM \in \R^{n \times n}$ be a real symmetric matrix and $\uu_n \in \R^n$ be an eigenvector associated to the largest eigenvalue of $\mM$.
The second-largest eigenvalue of $\mM$ is
\begin{align*}
    \lambda_{n-1}(\mM) = \sup_{\substack{\uu \neq \0 \\ \lin{\uu,\uu_n}=0}} \frac{\uu^\top \mM \uu}{\norm{\uu}_2^2}.
\end{align*}
\end{lemma}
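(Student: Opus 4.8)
The plan is to derive this variational characterization directly from the spectral theorem, by reducing the constrained Rayleigh-quotient maximization to an eigenvalue problem on the hyperplane orthogonal to $\uu_n$. Since $\mM$ is real symmetric, I would start from an orthonormal eigenbasis $\uu_1, \dots, \uu_n$ with eigenvalues ordered $\lambda_1 \le \cdots \le \lambda_n$, chosen so that the normalized version of the given top eigenvector $\uu_n$ is the last basis vector; this is possible even when $\lambda_n$ is a repeated eigenvalue.

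For the upper bound, I would note that any nonzero $\uu$ with $\lin{\uu, \uu_n} = 0$ expands as $\uu = \sum_{i=1}^{n-1} c_i \uu_i$, whence expanding $\uu^\top \mM \uu$ and $\norm{\uu}_2^2$ in this orthonormal basis gives
\[
\frac{\uu^\top \mM \uu}{\norm{\uu}_2^2} = \frac{\sum_{i=1}^{n-1} \lambda_i c_i^2}{\sum_{i=1}^{n-1} c_i^2} \le \lambda_{n-1},
\]
since every $\lambda_i$ with $i \le n-1$ is at most $\lambda_{n-1}$. Taking the supremum bounds the right-hand side of the claim from above by $\lambda_{n-1}$.

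For attainment, I would simply test the candidate $\uu = \uu_{n-1}$: it is orthogonal to $\uu_n$ by orthonormality, and its Rayleigh quotient equals $\lambda_{n-1}$, so the supremum is achieved and equals $\lambda_{n-1}$. Combining the two directions proves the identity.

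The only delicate point---and the nearest thing to an obstacle---is the degenerate case in which the largest eigenvalue is repeated, so that $\lambda_{n-1} = \lambda_n$ and $\uu_n$ is not uniquely determined. I would handle this by insisting that the chosen orthonormal eigenbasis contain $\uu_n/\norm{\uu_n}_2$; then a second eigenvector of the top eigenspace that is orthogonal to $\uu_n$ still lies in the span of $\uu_1, \dots, \uu_{n-1}$ and realizes the value $\lambda_n = \lambda_{n-1}$, so both directions above remain valid without modification. All remaining manipulations are routine consequences of orthonormality.
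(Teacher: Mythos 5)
Your proof is correct. The paper does not actually prove this lemma---it states it as a folklore consequence of the Courant--Fischer min-max theorem and cites a survey---so your contribution is a self-contained derivation of the special case actually needed. Your argument is the standard one: diagonalize $\mM$ in an orthonormal eigenbasis containing $\uu_n/\norm{\uu_n}_2$, observe that the constraint $\lin{\uu,\uu_n}=0$ kills the top coefficient so the Rayleigh quotient is a convex combination of $\lambda_1,\dots,\lambda_{n-1}$ and hence at most $\lambda_{n-1}$, and exhibit $\uu_{n-1}$ as a maximizer. Your handling of the degenerate case $\lambda_{n-1}=\lambda_n$ is also right, and it is worth having spelled out: the lemma as stated holds for \emph{any} eigenvector $\uu_n$ of the top eigenvalue, which is exactly the generality the paper relies on in the proof of Theorem~\ref{th:privacy-general}, where $\uu_n = \1$ is a specific eigenvector of $\SSigma^{-1}$ whose top eigenvalue could in principle be repeated. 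No gaps.
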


We now define \emph{secret-based local Rényi differential privacy} (SecRDP), a strong variant of SecLDP based on Rényi DP.
\begin{definition}[SecRDP]
    \label{def:rdp}
    Let $\varepsilon \geq 0$, $\delta \in [0, 1]$, $\alpha>1$. Consider a randomized decentralized algorithm $\cA : \cX^{m \times n} \to \cY$, which outputs the transcript of all communications. Algorithm $\cA$ is said to satisfy $(\alpha,\varepsilon, \mathcal{S})$-SecRDP if $\cA$ satisfies $(\alpha, \varepsilon)$-RDP given that $\mathcal{S}$ is unknown to the adversary. That is, for every adjacent datasets $\cD, \cD' \in \cX^{m \times n}$, we have
    \begin{equation*}
        \mathrm{D}_{\alpha}{\left(\cA{(\cD)} ~\middle|~ \cS~\text{is hidden} ~\middle\|~\cA{(\cD')} ~\middle|~ \cS~\text{is hidden}\right)} \leq \varepsilon,
    \end{equation*}
    where the left-hand side is the Rényi divergence (Definition~\ref{def:rdp}) between the probability distributions of $\cA{(\cD)}$ and $\cA{(\cD')}$, conditional on the secrets $\cS$ being hidden from the adversary.
    We simply say that $\cA$ satisfies $(\alpha,\varepsilon)$-SecRDP if it satisfies $(\alpha,\varepsilon, \mathcal{S})$-SecRDP for a certain $\mathcal{S}$.
\end{definition}
Both SecLDP and SecRDP preserve the properties of DP and RDP, respectively, since these relaxations only condition the probability space of the considered distributions.

\textbf{Proof of Theorem~\ref{th:privacy}.}
For convenience, we restate Theorem~\ref{th:privacy} below, whose proof is a special case of the extended privacy result given next.

\thprivacy*
\begin{proof}
    The result is a special case of Theorem~\ref{th:privacy-general}, by taking $q=0$ for the external eavesdropper and $q=1$ for the honest-but-curious non-colluding users.
\end{proof}

\textbf{Extended privacy analysis.}
We now state and prove a general privacy analysis of \alg to all considered adversaries in Section~\ref{sec:problem}, which includes collusion. We additionally provide a SecRDP accountant in Algorithm~\ref{algo:account-general}, which generalizes Algorithm~\ref{algo:account} to the aforementioned adversaries.
\begin{algorithm}[t]
    \caption{\textsc{General SecRDP accountant for \alg}}\label{algo:account-general}
	\begin{algorithmic}[1]
		\REQUIRE clipping threshold $C$, noise variances $\sigmacdp, \sigmacor$, collusion level $q$.

            \FOR{$\cI \subseteq [n], \card{\cI} = q$}
            \STATE Get Laplacian matrix $\mL$ of the subgraph of $\cG$ after deleting vertices $\cI$\;
            \STATE Compute $\SSigma = \left(\sigmacdp^2 \mI_{n-q} + \sigmacor^2 \mL \right)^{-1}$\;
            \STATE $\varepsilon_\cI = 2C^2 \max_{i \in [n-q]} \SSigma_{ii}$\;
            \ENDFOR
            \STATE \textbf{return} $\max_{\cI \subseteq [n], \card{\cI} = q} \varepsilon_\cI$\;
	\end{algorithmic}
\end{algorithm}

\begin{theorem}
\label{th:privacy-general}
Let $\alpha > 1$ and $q<n$.
Each iteration of Algorithm~\ref{algo} satisfies $(\alpha, \alpha \varepsilon)$-SecRDP (Definition~\ref{def:rdp}) against honest-but-curious users colluding at level $q$ with
\begin{align}
\label{eq:privacy}
 \varepsilon \leq 2 C^2 \left( \frac{1}{(n-q)\sigmacdp^2} + \frac{1 - \tfrac{1}{n-q}}{\sigmacdp^2 + a_{q}{(\cG)} \sigmacor^2}  \right),
\end{align}
where $a_{q}{(\cG)}$ is the minimum algebraic connectivity across subgraphs obtained by deleting $q$ vertices from $\cG$.
Moreover, $\varepsilon$ can be computed numerically using Algorithm~\ref{algo:account-general}.
\end{theorem}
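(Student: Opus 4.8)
The plan is to reduce a single iteration of \alg to a Gaussian mechanism whose covariance is governed by the Laplacian of the subgraph the adversary effectively sees, and then to control the associated Rényi divergence via Lemma~\ref{lem:renyigaussian} together with a spectral bound on the diagonal of an inverse ``Laplacian-plus-identity'' matrix.

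First I would reduce the transcript to the obfuscated gradients. Over one iteration, observing the broadcast models $\xx_i^{(t+\frac12)} = \xx_i^{(t)} - \eta_t \Tilde{\gg}_i^{(t)}$ is, given the known current models $\xx_i^{(t)}$ and the stepsize, equivalent to observing $\Tilde{\gg}_i^{(t)} = \gg_i^{(t)} + \sum_{j\in\cN_i}\vv_{ij}^{(t)} + \overline{\vv}_i^{(t)}$. Fix a colluding set $\cI$ with $\card{\cI}=q$, let $\cJ = [n]\setminus\cI$ and $N=n-q$. The colluders know their own gradients, their own $\overline{\vv}_i^{(t)}$, and every secret $\vv_{ij}^{(t)}$ incident to $\cI$ (one endpoint being a colluder); in the external-eavesdropper case $q=0$, no secrets are known and $\cJ=[n]$. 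Hence their own noisy gradients carry no information about $\cJ$, and for each $j\in\cJ$ they may subtract the known terms $\vv_{ji}^{(t)}$, $i\in\cN_j\cap\cI$, leaving the effective observation $\gg_j^{(t)} + \sum_{k\in\cN_j\cap\cJ}\vv_{jk}^{(t)} + \overline{\vv}_j^{(t)}$, i.e. the privatized gradient on the induced subgraph $H=\cG[\cJ]$.

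Next I would identify the conditional Gaussian law. Writing $\mB$ for the oriented incidence matrix of $H$ and stacking the correlated-noise terms edgewise, the per-user correlated noise is $\mB\vv^{(t)}$ with covariance $\sigmacor^2\mB\mB^\top = \sigmacor^2\mL_H$, where $\mL_H$ is the Laplacian of $H$; adding the independent $\overline{\vv}^{(t)}$ of variance $\sigmacdp^2$ gives a Gaussian with per-coordinate covariance $\mM := \sigmacdp^2\mI_N + \sigmacor^2\mL_H$ (the full covariance being $\mM\otimes\mI_d$). Changing the data of a single non-colluding user $j^\star$ only shifts the mean in the $j^\star$ block by some $\Delta\in\R^d$ with $\norm{\Delta}\le 2C$ (a difference of two clipped gradients). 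By Lemma~\ref{lem:renyigaussian}, the $\alpha$-Rényi divergence equals $\tfrac{\alpha}{2}\norm{\Delta}^2\,[\mM^{-1}]_{j^\star j^\star} \le \alpha\cdot 2C^2\,[\mM^{-1}]_{j^\star j^\star}$, so it remains to bound $[\mM^{-1}]_{j^\star j^\star}$.

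Finally I would bound this diagonal entry spectrally. Diagonalizing $\mL_H$ in an orthonormal eigenbasis $\uu_1,\dots,\uu_N$ with eigenvalues $0=\lambda_1\le\lambda_2\le\cdots$ and $\uu_1=\tfrac{1}{\sqrt N}\1$, I get $[\mM^{-1}]_{j^\star j^\star} = \sum_{k=1}^N \tfrac{\lin{\ee_{j^\star},\uu_k}^2}{\sigmacdp^2 + \sigmacor^2\lambda_k}$; isolating $k=1$ gives $\tfrac{1/N}{\sigmacdp^2}$, while for $k\ge 2$ I use $\lambda_k\ge\lambda_2=a(H)$ together with $\sum_{k\ge 2}\lin{\ee_{j^\star},\uu_k}^2 = 1-\tfrac1N$ to bound the tail by $\tfrac{1-1/N}{\sigmacdp^2+\sigmacor^2 a(H)}$. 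Since $H$ is obtained by deleting $q$ vertices, $a(H)\ge a_q{(\cG)}$, and worst-casing over $j^\star$ and over $\cI$ yields the claimed $\varepsilon$, matching the accountant of Algorithm~\ref{algo:account-general}. I expect the reduction step to be the main obstacle: carefully arguing exactly which noise terms a level-$q$ collusion can cancel, that the residual law is precisely the subgraph-Laplacian Gaussian, and that the degenerate case where deleting $q$ vertices disconnects $\cG$ (so $a_q{(\cG)}=0$) is handled gracefully, the bound then degrading to the LDP guarantee $\tfrac{2C^2}{\sigmacdp^2}$.
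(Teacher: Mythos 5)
Your proposal is correct and follows essentially the same route as the paper's proof: reduce the transcript to the privatized gradients on the induced subgraph of honest users after the colluders cancel the secrets they know, identify the Gaussian covariance $\sigmacdp^2\mI + \sigmacor^2\mL_H$, and apply Lemma~\ref{lem:renyigaussian}. Your eigenexpansion of $[\mM^{-1}]_{j^\star j^\star}$ (splitting off the all-ones eigenvector and lower-bounding the remaining eigenvalues by $a(H) \geq a_q(\cG)$) is the same computation the paper carries out via the orthogonal decomposition $\ee_i = \tfrac{1}{n-q}\1 + \xx_i$ and Lemma~\ref{lem:courantfischer}, and your handling of general dimension $d$ and of the disconnected case are consistent with the paper's remarks.
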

\begin{proof}
Let $\alpha>1$, $q>1$, and $\cI \subseteq [n]$ be an arbitrary group of $\card{\cI} = q$ users.
Recall that we denote by $\cS_\cI \coloneqq \left\{s_{jk} \colon \{j,k\} \in \cE, j,k \notin \cI \right\}$ the set of secrets hidden from all users in $\cI$.
We will prove that Algorithm~\ref{algo} satisfies $(\alpha, \varepsilon, \cS_{\cI})$-SecRDP, which protects against honest-but-curious users colluding at level $q$, as discussed in Section~\ref{sec:problem}.
For ease of exposition, we consider the one-dimensional case $d=1$. Extending the proof to the general case is straightforward.

Formally, at each iteration of Algorithm~\ref{algo}, users possess private inputs (gradients) in the form of vector $\xx \in [-C,C]^n$, given that gradients are clipped at threshold $C$. Each user $i \in [n]$ shares the following privatized quantity:
\begin{align}
    \label{eq:privatized1}
    \widetilde{\xx}_i \coloneqq \xx_i + \sum_{j \in \cN_i} \vv_{ij} + \bar{\vv}_{i},
\end{align}
where $\vv_{ij} = -\vv_{ji} \sim \cN(0, \sigmacor^2)$ for all $j \in \cN_i$, and $\bar{\vv}_{i} \sim \cN(0, \sigmacdp^2)$. Note that each neighborhood $\cN_i$ does not include $i$.

Denote by $\cH \coloneqq [n] \setminus \cI$ the set of the $\card{\cH}=n-q$ honest (non-colluding) users.
Our goal is to show that the mechanism producing $\widetilde{\mX}_{\cH} \coloneqq \big[\widetilde{\xx}_i\big]_{i \in \cH}$ satisfies SecRDP when a single entry of $\mX \coloneqq \big[\xx_i\big]_{i \in \cH}$ is arbitrarily changed; i.e., one user's input differs.
To do so, we first rewrite~\eqref{eq:privatized1} to discard the noise terms known to the colluding curious users who can simply substract them to get for every $i \in \cH$: 
\begin{align}
    \label{eq:privatized2}
    \widetilde{\xx}_i = \xx_i + \sum_{j \in \cN_i \cap \cH} \vv_{ij} + \bar{\vv}_{i},
\end{align}
Denote by $\cG_\cH \coloneqq (\cH, \cE_\cH)$ the subgraph of $\cG$ restricted to honest users.
We now rewrite the above in matrix form as:
\begin{align}
\label{eq:vectorform}
    \widetilde{\mX}_{\cH} = \mX_{\cH} + \mK \mN_{\cE} + \bar{\mN},
\end{align}
where $\mK \in \R^{(n-q) \times \card{\cE_{\cH}}}$ is the oriented incidence matrix of the graph $\cG_{\cH}$ and $\mN_{\cE_{\cH}} = [\vv_{ij}]_{1 \leq i < j \leq n-q} \in \R^{\card{\cE_{\cH}}}$ is the vector of pairwise noises.
Now, consider two input vectors $\mX_A, \mX_B \in [-C,C]^{n-q}$ which differ maximally in an arbitrary coordinate $i \in [n-q]$ without loss of generality: 
\begin{equation}
    \label{eq:xa-xb}
    \mX_A - \mX_B = 2C \ee_i \in \R^{n-q},
\end{equation}
where $\ee_i$ is the vector of $\R^{n-q}$ where the only nonzero element is $1$ in the $i$-th coordinate.

We will then show that the $\alpha$-Rényi divergence between $\widetilde{\mX}_A$ and $\widetilde{\mX}_B$, which are respectively produced by input vectors $\mX_A$ and $\mX_B$, is bounded.
To do so, by looking at Equation~\eqref{eq:vectorform}, we can see that $\widetilde{\mX}_A, \widetilde{\mX}_B$ follow a multivariate Gaussian distribution of means $\mX_A, \mX_B$ respectively and of variance
\begin{align}
    \label{eq:Sigma}
    \SSigma \coloneqq \E (\widetilde{\mX}_A - \mX_A) (\widetilde{\mX}_{A} - \mX_A)^\top 
    = \E (\widetilde{\mX}_B -  \mX_B)(\widetilde{\mX}_{B} -  \mX_B)^\top
    = \sigmacor^2 \mL + \sigmacdp^2 \mI_{n-q}  \in \R^{(n-q) \times (n-q)},
\end{align}
where $\mL = \mK \mK^\top \in \R^{(n-q) \times (n-q)}$ is the Laplacian matrix of the graph $\cG_\cH$~\citep{de2007old}.
Note that $\SSigma$ is positive definite when $\sigmacdp^2>0$ because $\mL$ is positive semi-definite.

Therefore, following Lemma~\ref{lem:renyigaussian}, the $\alpha$-Rényi divergence between the distributions of $\widetilde{\mX}_A$ and $\widetilde{\mX}_B$ is
\begin{align}
\label{eq:renyi1}
D_\alpha(\widetilde{\mX}_A~\|~\widetilde{\mX}_B) = \frac{\alpha}{2} (\mX_A - \mX_B)^\top \SSigma^{-1} (\mX_A - \mX_B).
\end{align}
Now, recall that the spectrum of $\mL$ is $0=\lambda_1(\mL) \leq \ldots \leq \lambda_{n-q}(\mL)$ because it is the Laplacian matrix of the graph $\cG_\cH$. Moreover, the eigenvector corresponding to the zero eigenvalue is $\1 \in \R^{n-q}$ the vector of ones.
Thus, since $\SSigma$ is a real symmetric (positive definite) matrix, the spectrum of $\SSigma^{-1}$ in ascending order is $\left(\tfrac{1}{\sigmacdp^2 + \sigmacor^2 \lambda_{n-q-i+1}(\mL)}\right)_{i \in [n-q]}$, and $\1$ the vector of ones is associated to its largest eigenvalue:
\begin{align}
\label{eq:sigmaeigvector}
    \SSigma^{-1} \1 = \frac{1}{\sigmacdp^2} \1.
\end{align}
Define $\xx_i \coloneqq \ee_i - \frac{1}{n-q}\1$ and observe that $\lin{\xx_i,\1}=0$.
Therefore, we can decompose the vector $\mX_A - \mX_B$ from Equation~\eqref{eq:xa-xb} as a sum of orthogonal vectors as follows:
\begin{align*}
    \mX_A - \mX_B = 2C \ee_i = 2C \left(\frac{1}{n-q} \1 + \xx_i \right).
\end{align*}
Going back to~\eqref{eq:renyi1}, we can write
\begin{align}
D_\alpha(\widetilde{\mX}_A~\|~\widetilde{\mX}_B) 
&= \frac{\alpha}{2} (\mX_A - \mX_B)^\top \SSigma^{-1} (\mX_A - \mX_B) 
= \frac{4 C^2 \alpha}{2} (\frac{1}{n-q} \1 + \xx_i)^\top \SSigma^{-1} (\frac{1}{n-q} \1 + \xx_i)\nonumber \\
&= 2\alpha C^2 \left( \frac{1}{(n-q)^2} \1^\top \SSigma^{-1} \1 + \xx_i^\top \SSigma^{-1} \xx_i  \right)
= 2\alpha C^2 \left( \frac{1}{(n-q) \sigmacdp^2} + \xx_i^\top \SSigma^{-1} \xx_i  \right),
\label{eq:renyi2}
\end{align}
where we have used that $\lin{\xx_i,\1}=0$, Equation~\eqref{eq:sigmaeigvector} and $\lin{\1,\1}=n-q$
successively in the last two steps.
Now, using Lemma~\ref{lem:courantfischer} and the facts that $\lin{\xx_i,\1}=0$ and $\norm{\xx_i}_2^2 = 1 - \frac{1}{n-q}$, we have that
\begin{align*}
    \xx_i^\top \SSigma^{-1} \xx_i \leq \sup_{\substack{\uu \neq \0 \\ \lin{\uu,\1}=0}} \frac{\uu^\top \SSigma^{-1} \uu}{\norm{\uu}_2^2}  \cdot \norm{\xx_i}_2^2 
    \leq \lambda_{n-q-1}(\SSigma^{-1}) \norm{\xx_i}_2^2
    = (1 - \frac{1}{n-q}) \lambda_{n-q-1}(\SSigma^{-1})
    = \frac{1 - \frac{1}{n-q}}{\sigmacdp^2 + \lambda_2(\mL) \sigmacor^2}.
\end{align*}
Plugging the bound above back in~\eqref{eq:renyi2}, we obtain
\begin{align*}
D_\alpha(\widetilde{\mX}_A~\|~\widetilde{\mX}_B)
\leq 2\alpha C^2 \left( \frac{1}{(n-q) \sigmacdp^2} + \frac{1 - \frac{1}{n-q}}{\sigmacdp^2 + \lambda_2(\mL) \sigmacor^2}  \right).
\end{align*}
Recall that $\lambda_2(\mL)$ is the algebraic connectivity of the graph $\cG_\cH$, by definition.
Moreover, since $\cI$ and thus $\cH$ are taken arbitrarily, in the worst case $\lambda_2(\mL)$ is $a_{q}(\cG)$ the minimum algebraic connectivity across subgraphs obtained by deleting $q$ vertices from $\cG$. This
concludes the proof of~\eqref{eq:privacy} the main result.

Finally, it is easy to see from Equation~\eqref{eq:renyi1} that the exact privacy bound $\varepsilon$ can be computed numerically using Algorithm~\ref{algo:account}.
Indeed, the maximal difference in inputs is $\mX_A - \mX_B = 2C \ee_i$ for some $i \in [n-q]$ as in~\eqref{eq:xa-xb}, so the maximal privacy bound, given that $\cI$ is the set of colluding users, is
\begin{align*}
    \varepsilon_\cI = \max_{i \in [n-q]}~ \frac{1}{2}(2C\ee_i)^\top \SSigma^{-1} (2C\ee_i) = 2C^2 \max_{i \in [n-q]}~  \ee_i^\top \SSigma^{-1} \ee_i = 2C^2 \max_{i \in [n-q]}~ \SSigma^{-1}_{ii},
\end{align*}
where $\SSigma^{-1}_{ii}$ is the $i$-th entry in the diagonal of the inverse of $\SSigma = \sigmacor^2 \mL + \sigmacdp^2 \mI_{n-q}$. Thus, to get the maximal privacy loss across all possible colluding user groups of size $q$, we take $\varepsilon = \max_{\cI \subseteq [n], \card{\cI} = q} \varepsilon_\cI$. Observing that the latter is exactly the output of Algorithm~\ref{algo:account-general} concludes the proof.
\end{proof}

\section{Convergence Analysis}
\label{sec:app-convergence}

In this section, we prove our convergence analysis stated in Theorem~\ref{thm:summary} and extend it to the general privacy adversaries discussed in Section~\ref{sec:problem}.
We introduce some useful notation in Section~\ref{sec:conv-notation}, overview the main elements of the proof in Section~\ref{sec:conv-overview}, prove the main theorem in Section~\ref{sec:conv-proof}, and finally prove the intermediate lemmas in Section~\ref{sec:conv-lemmas}. 

\subsection{Notation}
\label{sec:conv-notation}
We can rewrite the procedure of \alg (Algorithm~\ref{algo}) using the following matrix notation, extending the definition used in Section~\ref{sec:algorithm}:
\begin{align}
\begin{split}
&\mX^{(t)} := \left[ \xx_1^{(t)},\dots, \xx_n^{(t)}\right] \in \R^{d\times n}, \qquad
\bar{\mX}^{(t)} := \left[ \bar{\xx}^{(t)},\dots, \bar{\xx}^{(t)}\right]  \in \R^{d\times n}, \\  
&\partial \ell(\mX^{(t)}, \xxi^{(t)}) := \left[\nabla \ell(\xx_{1}^{(t)}, \xxi_1^{(t)}), \dots,  \nabla \ell(\xx_{n}^{(t)}, \xxi_n^{(t)})\right]  \in \R^{d\times n}, \\
&\mN^{(t)} := \left[ \sum_{j \in \cN_1} \vv_{1j}^{(t)},\dots, \sum_{j \in \cN_n} \vv_{nj}^{(t)}\right]  \in \R^{d\times n}, \qquad
\bar{\mN}^{(t)} := \left[ \bar{\vv}_1^{(t)},\dots, \bar{\vv}_n^{(t)}\right]  \in \R^{d\times n}.
\label{eq:matrix_notation}
\end{split}
\end{align}
We recall that under the bounded gradient assumption (Assumption~\ref{a:boundedgradient}), clipping leaves gradients unaffected, and thus we discard the clipping operator in this section.

\begin{algorithm}[H]
    \caption{\textsc{\alg in matrix notation}}\label{alg:matrix_notation}
	\begin{algorithmic}[1]
		\REQUIRE for each user $i\in [n]$ initialize $\xx_i^{(0)} \in \R^d$, 
		 stepsizes $\{\eta_t\}_{t=0}^{T-1}$, number of iterations $T$, 
		 mixing matrix $\mW$,
        noise parameters $\sigmacor$ and $\sigmacdp$.

		\FOR{$t$\textbf{ in} $0\dots T-1$}
  \STATE $\mX^{(t + \frac{1}{2})} = \mX^{(t)} - \eta_t\left(\partial \ell(\mX^{(t)}, \xxi_i^{(t)}) + \mN^{(t)} + \bar{\mN}^{(t)} \right)$ \hfill $\triangleright$ stochastic gradient updates
		\STATE $\mX^{(t + 1)} = \mX^{(t + \frac{1}{2})} \cdot \mW $ \hfill  $\triangleright$ gossip averaging
		\ENDFOR
	\end{algorithmic}
\end{algorithm}

\subsection{Proof Overview}
\label{sec:conv-overview}

Our convergence analysis relies upon three elements: descent bound, pairwise noise reduction, and consensus distance recursion.
We first state the corresponding lemmas, and defer their proofs to Section~\ref{sec:conv-lemmas}.

The first proof element is the descent bound of Lemma~\ref{lem:decrease_nc}. It quantifies the progress made after each \alg step. In particular, compared to the error due to stochastic gradient variance $\sigma_\star^2$ as in vanilla SGD~\cite{bottou2018optimization}, there are two additional quantities involved:  and (uncorrelated) privacy noise variance $\sigmacdp^2$, and the consensus distance $\Xi_t$ defined for every $t \geq 1$ as
\begin{equation}
    \Xi_{t} \coloneqq \frac{1}{n}{\sum_{i=1}^n  \E\norm{\xx_i^{(t)}-\bar \xx^{(t)}}^2} = \frac{1}{n} \norm{\mX^{(t)}-\bar{\mX}^{(t)}}_F^2.
\end{equation}
\begin{restatable}[Descent bound]{lemma}{descent}\label{lem:decrease_nc} Under Assumptions~\ref{a:lsmooth}, \ref{a:opt_nc} and \ref{a:avg_distrib},
	the averages $\bar{\xx}^{(t)} := \frac{1}{n}\sum_{i=1}^n \xx_i^{(t)}$ of the iterates of Algorithm~\ref{algo} with  $\eta_t \leq \frac{1}{2L}\min{\{1, \frac{n}{2M}\}}$ satisfy 
	\begin{align}
        \E{\left[\loss(\bar{\xx}^{(t + 1)}) - \loss(\bar{\xx}^{(t)})\right]} 
    & \leq -\frac{\eta_t}{4} \E\norm{\nabla \loss(\bar{\xx}^{(t)})}_2^2  + \frac{3\eta_t L^2}{4} \Xi_t
    +\frac{L\eta_t^2}{2} \frac{\sigma_\star^2 + d \sigmacdp^2}{n}.
	\end{align}
\end{restatable}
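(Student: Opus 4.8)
The plan is to reduce the evolution of the average iterate $\bar\xx^{(t)}$ to a perturbed SGD recursion and then run the standard smoothness-based descent argument. The crucial first observation is that the gossip step preserves the average, since $\mW$ is doubly stochastic ($\mW\1=\1$), and---more importantly---that the correlated noise disappears from the average: $\mN^{(t)}\1 = \sum_{i}\sum_{j\in\cN_i}\vv_{ij}^{(t)} = 0$ by the pairwise-cancellation property $\vv_{ij}^{(t)} = -\vv_{ji}^{(t)}$. Hence, writing $\bar\gg^{(t)} := \tfrac1n\sum_i \nabla\ell(\xx_i^{(t)},\xxi_i^{(t)})$ for the average stochastic gradient and $\bar\vv^{(t)} := \tfrac1n\sum_i\bar\vv_i^{(t)}$ for the averaged uncorrelated noise, I obtain the clean recursion $\bar\xx^{(t+1)} = \bar\xx^{(t)} - \eta_t(\bar\gg^{(t)} + \bar\vv^{(t)})$, in which the network structure and the correlated noise no longer appear.

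First I would apply $L$-smoothness of $\loss$ (inherited from Assumption~\ref{a:lsmooth} on each $\loss_i$) at $\bar\xx^{(t)}$, obtaining the descent inequality with a first-order term $-\eta_t\lin{\nabla\loss(\bar\xx^{(t)}),\bar\gg^{(t)}+\bar\vv^{(t)}}$ and a second-order term $\tfrac{L\eta_t^2}{2}\norm{\bar\gg^{(t)}+\bar\vv^{(t)}}^2$. Taking expectation conditional on $\mX^{(t)}$, the uncorrelated noise $\bar\vv^{(t)}$ is zero-mean and independent, so its cross terms vanish and it contributes exactly $\tfrac{d\sigmacdp^2}{n}$ to the second moment (since $\E\norm{\bar\vv^{(t)}}^2 = \tfrac{1}{n^2}\cdot n\cdot d\sigmacdp^2$). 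For the stochastic gradient, I would split it into its conditional mean $\tfrac1n\sum_i\nabla\loss_i(\xx_i^{(t)})$ and the centered part; by independence of the samples across users, the centered part has second moment $\tfrac1n\Psi(\mX^{(t)})$, which I then bound with Assumption~\ref{a:opt_nc}.

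Next I would convert the first-order term using the polarization identity $-\lin{a,b} = \tfrac12(\norm{a-b}^2 - \norm{a}^2 - \norm{b}^2)$ with $a=\nabla\loss(\bar\xx^{(t)})$ and $b=\tfrac1n\sum_i\nabla\loss_i(\xx_i^{(t)})$. The key gain is that $\norm{a-b}^2 = \norm{\tfrac1n\sum_i(\nabla\loss_i(\bar\xx^{(t)}) - \nabla\loss_i(\xx_i^{(t)}))}^2 \leq \tfrac{L^2}{n}\sum_i\norm{\bar\xx^{(t)}-\xx_i^{(t)}}^2 = L^2\Xi_t$ by Jensen and smoothness, producing the consensus-distance term. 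The remaining negative terms $-\tfrac{\eta_t}{2}\norm{\nabla\loss(\bar\xx^{(t)})}^2$ and $-\tfrac{\eta_t}{2}\norm{\tfrac1n\sum_i\nabla\loss_i(\xx_i^{(t)})}^2$ are then used to absorb the positive second-order contributions.

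The main bookkeeping obstacle is the $M$-dependent term $\tfrac{M}{n}\cdot\tfrac1n\sum_i\norm{\nabla\loss(\xx_i^{(t)})}^2$ from Assumption~\ref{a:opt_nc}, which mixes into both budgets: I would control it via $\norm{\nabla\loss(\xx_i^{(t)})}^2 \leq 2L^2\norm{\xx_i^{(t)}-\bar\xx^{(t)}}^2 + 2\norm{\nabla\loss(\bar\xx^{(t)})}^2$, feeding back into the $\norm{\nabla\loss(\bar\xx^{(t)})}^2$ and $\Xi_t$ terms. Finally I would collect coefficients: the constraint $\eta_t \leq \tfrac{1}{2L}$ makes the $\norm{\tfrac1n\sum_i\nabla\loss_i(\xx_i^{(t)})}^2$ coefficient non-positive so it can be dropped, while $\eta_t \leq \tfrac{n}{4LM}$ (i.e.\ the $\tfrac{n}{2M}$ factor in the hypothesis) guarantees that the $M$-feedback leaves the gradient coefficient at most $-\tfrac{\eta_t}{4}$ and keeps the $\Xi_t$ coefficient at most $\tfrac{3\eta_t L^2}{4}$. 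The isolated noise contribution $\tfrac{L\eta_t^2}{2}\cdot\tfrac{\sigma_\star^2 + d\sigmacdp^2}{n}$ then matches the claimed bound verbatim. The conceptually essential step is the noise-cancellation reduction in the first paragraph; the rest is careful coefficient accounting against the two-sided stepsize condition.
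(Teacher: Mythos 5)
Your proposal is correct and follows essentially the same route as the paper's proof: the average recursion with pairwise noise cancellation, the smoothness descent inequality, the polarization identity combined with Jensen and per-function smoothness to extract $L^2\Xi_t$, the variance decomposition of the averaged stochastic gradient, and the same handling of the $M$-feedback term against the two-part stepsize condition. All coefficient bounds ($-\tfrac{\eta_t}{4}$ and $\tfrac{3\eta_t L^2}{4}$) match the paper's accounting exactly.
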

Interestingly, the descent bound does not involve the correlated noise variance $\sigmacor^2$.
This is thanks to the correlated noise terms cancelling out pairwise, so that the correlated noise disappears when analyzing the average model $\overline{\xx}^{(t)}$.

Next, in order to bound the consensus distance $\Xi_t$, we first quantify in Lemma~\ref{lem:pairwisenoisereduce} the effect of correlated noise in a single step of \alg on the consensus distance $\Xi_t$.
\begin{restatable}[Correlated noise reduction]{lemma}{noisereduce}
\label{lem:pairwisenoisereduce}
Consider Algorithm~\ref{algo}.
For any undirected graph $\cG = (\{1, \ldots, n\}, \cE)$ and any matrix $\mW \in \R^{n \times n}$ and at every iteration $t$, we have
\begin{align}
    \EE{}{\norm{\mN^{(t)} \mW}_F^2} 
    = \hg{\mW} \cdot \EE{}{\norm{\mN^{(t)}}_F^2}
    = 2 \hg{\mW} \card{\cE} d \sigmacor^2,
\end{align}
where we define 
$\hg{\mW} \coloneqq \tfrac{\sum_{i, k = 1}^{n} \norm{\mW_i - \mW_k}^2 \1_{k \in \cN_i}}{2\sum_{i, k = 1}^{n} \1_{k \in \cN_i}},$
and $\1_{k \in \cN_i}$ denotes $\{i,k\} \in \cE$, and $\card{\cE} = \frac{1}{2}\sum_{i, k = 1}^{n} \1_{k \in \cN_i}$ is the number of edges on the graph $\cG$. Moreover, if $\mW_{ij}= \tfrac{\1_{j \in \cN_i}}{\mathrm{deg}(i)+1}, \forall i,j\in [n]$, where $\mathrm{deg}(i) = \card{\cN_i}$ is the degree of user $i$ in the graph, we have $\hg{\mW} \leq \tfrac{2}{k_{\mathrm{min}}}$,
where $k_{\mathrm{min}} \geq 1$ is the minimal degree of graph $\cG$.
\end{restatable}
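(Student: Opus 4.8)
\textbf{Proof plan for Lemma~\ref{lem:pairwisenoisereduce}.}

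The plan is to compute $\EE{}{\norm{\mN^{(t)} \mW}_F^2}$ directly by expanding the definition of $\mN^{(t)}$ and using the pairwise structure of the correlated noise. First I would write the $j$-th column of $\mN^{(t)} \mW$ as $(\mN^{(t)}\mW)_j = \sum_{i=1}^n \mW_{ij} \nn_i^{(t)}$, where $\nn_i^{(t)} := \sum_{k \in \cN_i} \vv_{ik}^{(t)}$ is the total correlated noise injected by user $i$. Then I would exploit the key antisymmetry relation $\vv_{ik}^{(t)} = -\vv_{ki}^{(t)}$. The cleanest route is to re-express the whole quantity in terms of the independent underlying noise variables: for each edge $\{i,k\} \in \cE$ with, say, $i<k$, there is a single independent Gaussian $\vv_{ik}^{(t)} \sim \cN(0, \sigmacor^2 \mI_d)$, and $\vv_{ki}^{(t)} = -\vv_{ik}^{(t)}$. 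The goal is to collect the coefficient multiplying each such independent variable across the entire expression $\mN^{(t)}\mW$, take the squared Frobenius norm, and use that distinct edges contribute independent mean-zero terms whose cross-terms vanish in expectation.

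Concretely, I would substitute $\nn_i^{(t)} = \sum_{k \in \cN_i} \vv_{ik}^{(t)}$ and rewrite $\mN^{(t)}\mW = \sum_{i,k \,:\, k \in \cN_i} \vv_{ik}^{(t)} (\mW_i)^\top$ viewing each entry carefully, where $\mW_i$ denotes the $i$-th column of $\mW$. Grouping the two orientations of each edge, the contribution of edge $\{i,k\}$ becomes $\vv_{ik}^{(t)} (\mW_i - \mW_k)^\top$ after using $\vv_{ki}^{(t)} = -\vv_{ik}^{(t)}$. Since the per-edge noise vectors $\vv_{ik}^{(t)}$ are independent across edges and have $\EE{}{\vv_{ik}^{(t)} (\vv_{ik}^{(t)})^\top} = \sigmacor^2 \mI_d$, taking the expected squared Frobenius norm kills all cross-edge terms and yields $\EE{}{\norm{\mN^{(t)}\mW}_F^2} = \sigmacor^2 d \sum_{\{i,k\} \in \cE} \norm{\mW_i - \mW_k}^2$. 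Rewriting the edge sum as $\tfrac{1}{2}\sum_{i,k} \norm{\mW_i - \mW_k}^2 \1_{k \in \cN_i}$ and recognizing the normalization $2\card{\cE} = \sum_{i,k}\1_{k\in\cN_i}$, I match this against the definition of $\hg{\mW}$ to obtain both stated equalities: $\EE{}{\norm{\mN^{(t)}\mW}_F^2} = \hg{\mW}\cdot 2\card{\cE} d\sigmacor^2 = \hg{\mW} \cdot \EE{}{\norm{\mN^{(t)}}_F^2}$, where the last identity follows by applying the same computation with $\mW = \mI_n$ (for which $\hg{\mI} = 1$).

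For the final bound $\hg{\mW} \leq \tfrac{2}{k_{\mathrm{min}}}$ under uniform Metropolis-type weights $\mW_{ij} = \tfrac{\1_{j \in \cN_i}}{\mathrm{deg}(i)+1}$, I would bound the numerator $\sum_{i,k}\norm{\mW_i - \mW_k}^2 \1_{k\in\cN_i}$ by the crude estimate $\norm{\mW_i - \mW_k}^2 \leq 2(\norm{\mW_i}^2 + \norm{\mW_k}^2)$, then use that for this weight choice $\norm{\mW_i}^2 \leq \tfrac{1}{\mathrm{deg}(i)+1} \leq \tfrac{1}{k_{\mathrm{min}}+1}$ (since $\mW_i$ has $\mathrm{deg}(i)+1$ nonzero entries each at most $\tfrac{1}{\mathrm{deg}(i)+1}$), and divide by the denominator $2\card{\cE}$ accordingly; the degree counting should collapse to the claimed $\tfrac{2}{k_{\mathrm{min}}}$.

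The main obstacle I anticipate is the bookkeeping in the first identity: one must be careful that the antisymmetry $\vv_{ik}^{(t)} = -\vv_{ki}^{(t)}$ is used correctly so that each edge is counted exactly once with coefficient $\mW_i - \mW_k$ (not double-counted or sign-confused), and that the independence argument is applied to the genuinely independent per-edge variables rather than to the correlated $\vv_{ij}^{(t)}$ directly. The rest is routine algebra and index manipulation to align with the definition of $\hg{\mW}$.
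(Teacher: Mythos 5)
Your proposal is correct and follows essentially the same route as the paper's proof: you collect the two orientations of each edge via the antisymmetry $\vv_{ik}^{(t)}=-\vv_{ki}^{(t)}$ to get the per-edge coefficient $\mW_i-\mW_k$, use independence across edges to kill cross-terms, recover the normalization $2\card{\cE} d\sigmacor^2$ by specializing to $\mW=\mI_n$, and bound $\hg{\mW}$ with Young's inequality exactly as the paper does (the paper merely computes $\norm{\mW_i}^2=\mathrm{deg}(i)/(\mathrm{deg}(i)+1)^2$ exactly where you upper-bound it, and derives the per-edge grouping by averaging two index-swapped sums rather than grouping orientations directly). No gaps.
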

The analysis of the error due to correlated noise in Lemma~\ref{lem:pairwisenoisereduce} is exact, in the sense that it is an equality. Recall that $\hg{\mW}$ is graph- and mixing matrix-dependent. Broadly speaking, the average expected error per edge (due to correlated noise) is $d \sigmacor^2$ (the variance of one correlated noise term), reduced by factor $\hg{\mW}$, which decreases with the connectivity with the graph.
Using this lemma, we can now prove a powerful recursion on the consensus distance in Lemma~\ref{lem:consensus} below.

\begin{restatable}[Consensus distance recursion]{lemma}{consensus}\label{lem:consensus}
	Under Assumptions~\ref{a:lsmooth}, \ref{a:opt_nc}, and \ref{a:avg_distrib}, if in addition stepsizes satisfy $\eta_t \leq  \frac{p}{L\sqrt{6(1-p)(3+pM)}}$, then 
	\begin{align*}
    \Xi_{t+1}
    &\leq (1-\frac{p}{2}) \Xi_t
    +  2\eta_t^2(1-p)(\frac{3P}{p}+M) \E\norm{\nabla \loss(\bar{\xx}^{(t)})}_2^2\\
    &\quad + \eta_t^2 \left[6(1-p)\frac{\zeta_\star^2}{p} + (1-p)\sigma_\star^2 + \frac{2\hg{\mW} |\cE| d \sigmacor^2}{n}
    + \norm{\mW - \frac{\1\1^\top}{n}}_F^2 d \sigmacdp^2\right],
	\end{align*}
	where $\Xi_{t} \coloneqq \frac{1}{n}{\sum_{i=1}^n  \E\norm{\xx_i^{(t)}-\bar \xx^{(t)}}^2}$ is the consensus distance.
\end{restatable}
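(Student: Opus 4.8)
The plan is to propagate the consensus residual $\mX^{(t+1)}-\bar{\mX}^{(t+1)}$ through one combined local-update-plus-gossip step, working in the matrix notation of Algorithm~\ref{alg:matrix_notation}. Two structural identities do the heavy lifting. Since $\mW$ is stochastic, $\mW\tfrac{\1\1^\top}{n}=\tfrac{\1\1^\top}{n}$, so gossip preserves the average and $\overline{\mY\mW}=\bar{\mY}$ for every $\mY$; in particular the uncorrelated noise survives only as $\bar{\mN}^{(t)}\mW-\overline{\bar{\mN}^{(t)}}=\bar{\mN}^{(t)}(\mW-\tfrac{\1\1^\top}{n})$. And the correlated noise has zero average, $\overline{\mN^{(t)}}=\0$, because $\sum_i\sum_{j\in\cN_i}\vv_{ij}^{(t)}=\sum_{\{i,j\}\in\cE}(\vv_{ij}^{(t)}+\vv_{ji}^{(t)})=\0$ by pairwise cancellation. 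Writing $\mathcal{Q}:=\mX^{(t)}-\eta_t\,\partial\ell(\mX^{(t)},\xxi^{(t)})$ for the noiseless gradient step, these give the decomposition
\[
\mX^{(t+1)}-\bar{\mX}^{(t+1)}=\big(\mathcal{Q}\mW-\bar{\mathcal{Q}}\big)-\eta_t\,\mN^{(t)}\mW-\eta_t\,\bar{\mN}^{(t)}\big(\mW-\tfrac{\1\1^\top}{n}\big).
\]

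Then I would take $\E\norm{\cdot}_F^2$. Conditioning on $\mX^{(t)}$ and the sample $\xxi^{(t)}$, the two noise matrices are independent, mean-zero, and independent of the first summand, so all cross terms vanish and the squared Frobenius norm splits additively into three expectations. The correlated-noise term is exactly $\E\norm{\mN^{(t)}\mW}_F^2=2\hg{\mW}\card{\cE}d\sigmacor^2$ by Lemma~\ref{lem:pairwisenoisereduce}. The uncorrelated-noise term follows from the Gaussian identity $\E\norm{\bar{\mN}^{(t)}A}_F^2=d\sigmacdp^2\norm{A}_F^2$ (valid for any fixed $A$, since $\bar{\mN}^{(t)}$ has i.i.d.\ $\cN(0,\sigmacdp^2)$ entries), applied with $A=\mW-\tfrac{\1\1^\top}{n}$. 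Dividing by $n$, these two pieces produce the $\tfrac{2\hg{\mW}\card{\cE}d\sigmacor^2}{n}$ and $\norm{\mW-\tfrac{\1\1^\top}{n}}_F^2 d\sigmacdp^2$ contributions in the bracket.

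The first summand carries the contraction. I would apply the consensus rate (Assumption~\ref{a:avg_distrib}) to obtain $\E\norm{\mathcal{Q}\mW-\bar{\mathcal{Q}}}_F^2\le(1-p)\E\norm{\mathcal{Q}-\bar{\mathcal{Q}}}_F^2$, then write the stochastic gradient as $\partial\ell(\mX^{(t)},\xxi^{(t)})=\partial\loss(\mX^{(t)})+\mathcal{Z}^{(t)}$ with $\partial\loss(\mX^{(t)}):=\big[\nabla\loss_i(\xx_i^{(t)})\big]_i$ and $\mathcal{Z}^{(t)}$ mean-zero given $\mX^{(t)}$. The fluctuation $\mathcal{Z}^{(t)}$ again separates by Pythagoras and is controlled via $\Psi$ and the noise bound~\eqref{eq:noise_opt_nc}, yielding the $(1-p)\sigma_\star^2$ term and an $M$-dependent multiple of $\norm{\nabla\loss(\bar{\xx}^{(t)})}_2^2$. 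For the remaining deterministic part $(\mX^{(t)}-\bar{\mX}^{(t)})-\eta_t(\partial\loss-\overline{\partial\loss})$ I would apply Young's inequality $\norm{\aa-\eta_t\bb}^2\le(1+\beta)\norm{\aa}^2+(1+\tfrac1\beta)\eta_t^2\norm{\bb}^2$ with $\beta=\Theta(p)$, and bound $\tfrac1n\sum_i\norm{\nabla\loss_i(\xx_i^{(t)})}_2^2\lesssim L^2\Xi_t+\zeta_\star^2+P\norm{\nabla\loss(\bar{\xx}^{(t)})}_2^2$ by inserting $\pm\nabla\loss_i(\bar{\xx}^{(t)})$ and $\pm\nabla\loss(\bar{\xx}^{(t)})$ and using $L$-smoothness (Assumption~\ref{a:lsmooth}) together with the heterogeneity bound~\eqref{eq:grad_opt_nc}. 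Since $1+\tfrac1\beta=\Theta(\tfrac1p)$, this is what generates the coefficients $\tfrac{3P}{p}$ and $6(1-p)\tfrac{\zeta_\star^2}{p}$ in the claim.

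The main obstacle is the constant bookkeeping that merges all $\Xi_t$ contributions into the single clean factor $1-\tfrac p2$. Each use of $L$-smoothness converts a gradient bound into an extra $\eta_t^2L^2\Xi_t$ term---once in the deterministic part (scaled by $(1-p)(1+\tfrac1\beta)$) and once through the $M$-term of the fluctuation (scaled by $(1-p)$)---and these must be absorbed alongside the model term's $(1-p)(1+\beta)\Xi_t$. The stepsize hypothesis $\eta_t\le\frac{p}{L\sqrt{6(1-p)(3+pM)}}$ is precisely the rearrangement of $6(1-p)(3+pM)\,\eta_t^2L^2\le p^2$, which is exactly what bounds the sum of these smoothness-generated $\Xi_t$ coefficients by the available slack and yields the stated $(1-\tfrac p2)\Xi_t$. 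Everything else reduces to collecting the $\sigma_\star^2$, $\zeta_\star^2$, $\sigmacor^2$ and $\sigmacdp^2$ terms, so I expect no conceptual difficulty beyond calibrating this interplay between $\beta$ and the stepsize.
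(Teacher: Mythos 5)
Your proposal is correct and follows essentially the same route as the paper's proof: the identical decomposition into a gossip-contracted gradient step plus the two independent noise terms, Lemma~\ref{lem:pairwisenoisereduce} for the correlated noise, Assumption~\ref{a:avg_distrib} followed by Young's inequality with parameter $\Theta(p)$ for the contraction, the smoothness/heterogeneity/noise bounds on the gradient matrices, and the stepsize condition to absorb the smoothness-generated $\Xi_t$ terms into $1-\tfrac{p}{2}$. One minor remark: your Gaussian identity $\E\norm{\bar{\mN}^{(t)}A}_F^2 = d\sigmacdp^2\norm{A}_F^2$ is the correct one (the paper writes $dn\sigmacdp^2\norm{A}_F^2$ at this step), so after dividing by $n$ your route actually yields $\tfrac{d\sigmacdp^2}{n}\norm{\mW-\tfrac{\1\1^\top}{n}}_F^2$ rather than the $d\sigmacdp^2\norm{\mW-\tfrac{\1\1^\top}{n}}_F^2$ your prose claims---a factor-$n$ tighter term that still implies the stated bound.
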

The effects of the privacy noises are apparent in the lemma above, and correspond mainly to the quantity analyzed in Lemma~\ref{lem:pairwisenoisereduce}, in addition to the effects of stochastic variance and heterogeneity, which are similar to vanilla D-SGD~\cite{koloskova2020unified}. It is indeed intuitive that the non-cancelled correlated noise should pull the local models away, and this worsens for poorly-connected graphs.

\subsection{Main Proof}
\label{sec:conv-proof}
We now restate and prove Theorem~\ref{thm:summary} below, using the intermediate lemmas from the previous section.
\thsummary*

\subsubsection{PL case}
\begin{proof}
Let assumptions~\ref{a:lsmooth}-\ref{a:avg_distrib} hold.
Consider Algorithm~\ref{algo} with the stepsize sequence defined for every $t \geq 0$ as:
\begin{equation}
    \eta_t \coloneqq \frac{16}{\mu(t+c \tfrac{L}{\mu p})},
    \label{eq:schedule-PL}
\end{equation}
where $c \coloneqq \max{\{4\sqrt{3(1-p)(3P+pM)}, \tfrac{\mu}{L}, 2p, \tfrac{4pM}{n}}\}$. Clearly, this sequence is decreasing and we have for every $t\geq0$: $$\eta_t \leq \eta_0 = \min{\{\frac{p}{4L\sqrt{3(1-p)(3P+pM)}}, \frac{p}{\mu}, \frac{1}{2L}\min{\{1, \frac{n}{2M}\}}}\}.$$
This ensures that the conditions of lemmas~\ref{lem:decrease_nc} and~\ref{lem:consensus} are verified.

Consider the sequence defined for every $t \geq 0$ as:
\begin{equation}
    V_t \coloneqq \E{\left[\loss(\bar{\xx}^{(t)}) - \loss_\star \right]} + \frac{3L^2 \eta_t}{p} \Xi_t,
    \label{eq:lyapunov1-PL}
\end{equation}
where $\loss_\star \coloneqq \inf_{\xx \in \R^d} \loss(\xx)$ denotes the infimum of $\loss$.
Clearly, since $\Xi_t$ is also non-negative as a sum of squared distances, we have $V_t \geq 0$ for every $t \geq 0$.
We also define the following auxiliary sequence for every $t \geq 0$:
\begin{equation}
    W_t \coloneqq \frac{1}{\eta_t^2} V_t.
    \label{eq:lyapunov2-PL}
\end{equation}
Fix $t \geq 0$.
First, to analyze $W_t$, we write 
\begin{align*}
    W_{t+1} - W_t
    &= \frac{1}{\eta_{t+1}^2} V_{t+1} - \frac{1}{\eta_{t}^2} V_{t}
    = \frac{1}{\eta_{t+1}^2} (V_{t+1} - \frac{\eta_{t+1}^2}{\eta_t^2} V_t).
\end{align*}
Moreover, denoting $\hat{t} \coloneqq t+\tau$, we have $\tfrac{\eta_{t+1}^2}{\eta_t^2} = \tfrac{\hat{t}^2}{(\hat{t}+1)^2} = 1 - \tfrac{1+2\hat{t}}{(\hat{t}+1)^2}$.
Thus, we have
\begin{align}
    W_{t+1} - W_t
    = \frac{1}{\eta_{t+1}^2} (V_{t+1} - (1 - \frac{1+2\hat{t}}{(\hat{t}+1)^2})V_t).
    \label{eq:PL1}
\end{align}
On the other hand, to analyze $V_t$, we use the fact that stepsizes are non-increasing and satisfy the conditions of lemmas~\ref{lem:decrease_nc} and~\ref{lem:consensus}:
\begin{align*}
    V_{t+1} - V_t
    &= \E{\left[\loss(\bar{\xx}^{(t+1)}) - \loss(\bar{\xx}^{(t)}) \right]} + \frac{3L^2}{p} (\eta_{t+1}\Xi_{t+1}-\eta_{t}\Xi_{t})\\
    &\leq \E{\left[\loss(\bar{\xx}^{(t+1)}) - \loss(\bar{\xx}^{(t)}) \right]} + \frac{3L^2 \eta_{t}}{p} (\Xi_{t+1}-\Xi_{t})\\
    &\leq \left(-\frac{\eta_t}{4} + \frac{6L^2 \eta_t^3}{p}(1-p)(\frac{3P}{p} + M)\right)\E\norm{\nabla \loss(\bar{\xx}^{(t)})}_2^2
    + (\frac{3\eta_t L^2}{4} - \frac{3L^2\eta_t}{2})\Xi_t
    + \eta_t^2 A + \eta_t^3 B,
\end{align*}
where we introduced $A \coloneqq \tfrac{L}{2}\tfrac{\sigma_\star^2+d\sigmacdp^2}{n}$ and $B \coloneqq \tfrac{3L^2}{p}\left(6(1-p)\tfrac{\zeta_\star^2}{p} + (1-p)\sigma_\star^2 + \tfrac{2\hg{\mW} \card{\cE} d \sigmacor^2}{n} + \norm{\mW-\tfrac{\1\1^\top}{n}}_F^2 d \sigmacdp^2\right)$ for simplicity.
Recall that we have $\eta_t \leq \eta_0 \leq \tfrac{p}{4L\sqrt{3(1-p)(3P+pM)}}$, so that $\tfrac{6L^2}{p} \eta_t^2 (1-p)(\tfrac{3P}{p} + M) \leq \tfrac{1}{8}$.
Consequently, we have
\begin{align}
    V_{t+1} - V_t
    &\leq -\frac{\eta_t}{8} \E\norm{\nabla \loss(\bar{\xx}^{(t)})}_2^2
    - \frac{3L^2\eta_t}{4}\Xi_t
    + \eta_t^2 A + \eta_t^3 B.
    \label{eq:PL2}
\end{align}
Now, recall that $\norm{\nabla \loss(\bar{\xx}^{(t)})}_2^2 \geq 2\mu (\loss(\bar{\xx}^{(t)}) - \loss_\star)$ following Assumption~\ref{a:PL}, so that the bound above becomes
\begin{align*}
    V_{t+1} - V_t
    &\leq -\frac{\mu \eta_t}{4}(\E\loss(\bar{\xx}^{(t)}) - \loss_\star)
    - \frac{3L^2\eta_t}{4}\Xi_t
    + \eta_t^2 A + \eta_t^3 B\\
    &= -\frac{\mu \eta_t}{4} \left(\E\loss(\bar{\xx}^{(t)}) - \loss_\star
    + \frac{3L^2}{\mu}\Xi_t\right)
    + \eta_t^2 A + \eta_t^3 B.
\end{align*}
Recall also that $\eta_t 
\leq \eta_0 \leq \tfrac{p}{\mu}$. Therefore, we have
\begin{align*}
    V_{t+1} - V_t
    &\leq -\frac{\mu \eta_t}{4} \left(\E\loss(\bar{\xx}^{(t)}) - \loss_\star
    + \frac{3L^2 \eta_t}{p}\Xi_t\right)
    + \eta_t^2 A + \eta_t^3 B
    = -\frac{\mu \eta_t}{4} V_t
    + \eta_t^2 A + \eta_t^3 B.
\end{align*}
Plugging the above bound back in~\eqref{eq:PL1} and then substituting $\eta_t = \tfrac{16}{\mu\hat{t}}$, we get
\begin{align*}
    W_{t+1} - W_t
    &= \frac{1}{\eta_{t+1}^2} (V_{t+1} - (1 - \frac{1+2\hat{t}}{\hat{t}^2})V_t)
    \leq \frac{1}{\eta_{t+1}^2} \left(-\frac{\mu \eta_t}{4} V_t
    + \eta_t^2 A + \eta_t^3 B + \frac{1+2\hat{t}}{\hat{t}^2}V_t\right)\\
    &= -\mu^2(\hat{t}+1)^2 \left(\frac{4}{ \hat{t}} - \frac{1+2\hat{t}}{\hat{t}^2}\right) V_t + \frac{(\hat{t}+1)^2}{\hat{t}^2} A + \frac{16(\hat{t}+1)^2}{\mu\hat{t}^3} B.
\end{align*}
Observe that $\hat{t} = t+c \tfrac{L}{\mu p}\geq c \tfrac{L}{\mu p}$, so that $\frac{4}{ \hat{t}} - \frac{1+2\hat{t}}{\hat{t}^2} = \frac{2}{ \hat{t}} - \frac{1}{\hat{t}^2} \leq \frac{1}{ \hat{t}}$ and $\tfrac{(\hat{t}+1)^2}{\hat{t}^2} = 1 + \frac{1+2\hat{t}}{\hat{t}^2} \leq 4$. Therefore, the bound above becomes
\begin{align*}
    W_{t+1} - W_t
    &\leq -\mu^2\frac{(\hat{t}+1)^2}{\hat{t}} V_t + 4A + \frac{64}{\mu \hat{t}} B \leq 4A + \frac{64}{\mu \hat{t}} B.
\end{align*}
By summing over $t \in \{0, \ldots, T-1\}$ and substituting $A$ and $B$, we get
\begin{align*}
    W_{T} - W_0
    &\leq 2LT\frac{\sigma_\star^2+d\sigmacdp^2}{n}+\left(\sum_{t=0}^{T-1} \hat{t}\right)\frac{192L^2}{\mu p}\left(6(1-p)\frac{\zeta_\star^2}{p} + (1-p)\sigma_\star^2 + \frac{2\hg{\mW} \card{\cE} d \sigmacor^2}{n} + \norm{\mW-\frac{\1\1^\top}{n}}_F^2 d \sigmacdp^2\right).
\end{align*}
We now substitute $W_T$ and $W_0$ to obtain $W_{T} - W_0 = \tfrac{1}{\eta_T^2}V_T - \tfrac{1}{\eta_0^2}V_0 = \tfrac{\mu^2}{256}((T+c \tfrac{L}{\mu p})^2V_T - \left(\tfrac{cL}{\mu p}\right)^2 V_0)$.
Also, as $c \tfrac{L}{\mu p} \geq \tfrac{2L}{\mu} \geq 2$, we have $\sum_{t=0}^{T-1} \tfrac{1}{\hat{t}} = \sum_{t=0}^{T-1} \tfrac{1}{t+c \tfrac{L}{\mu p}} \leq \ln(T+1)$.
Thus, after rearranging terms, the inequality above becomes
\begin{align*}
    (T+c \tfrac{L}{\mu p})^2V_T - \left(\tfrac{cL}{\mu p}\right)^2 V_0
    &\leq \frac{512LT}{\mu^2}\frac{\sigma_\star^2+d\sigmacdp^2}{n}\\
    &\quad+\ln(T+1)\frac{49152L^2}{\mu^3p}\left(6(1-p)\frac{\zeta_\star^2}{p} + (1-p)\sigma_\star^2 + \frac{2\hg{\mW} \card{\cE} d \sigmacor^2}{n} + \norm{\mW-\frac{\1\1^\top}{n}}_F^2 d \sigmacdp^2\right).
\end{align*}
Upon dividing both sides by $(T+c \tfrac{L}{\mu p})^2$, rearranging terms and recalling that $c \tfrac{L}{\mu p} \geq 1$, we obtain
\begin{align*}
    V_T &\leq \frac{\left(\tfrac{cL}{\mu p}\right)^2}{(T+c \tfrac{L}{\mu p})^2}V_0 + \frac{512LT}{(T+c \tfrac{L}{\mu p})^2\mu^2}\frac{\sigma_\star^2+d\sigmacdp^2}{n}\\
    &\quad + \frac{\ln(T+1)}{(T+c \tfrac{L}{\mu p})^2}\frac{49152L^2}{\mu^3p}\left(6(1-p)\frac{\zeta_\star^2}{p} + (1-p)\sigma_\star^2 + \frac{2\hg{\mW} \card{\cE} d \sigmacor^2}{n} + \norm{\mW-\frac{\1\1^\top}{n}}_F^2 d \sigmacdp^2\right)\\
    &\leq \frac{c^2L^2}{\mu^2p^2T^2}V_0 + \frac{512L}{\mu^2 T}\frac{\sigma_\star^2+d\sigmacdp^2}{n}\\
    &\quad + \frac{\ln(T+1)}{T^2}\frac{49152L^2}{\mu^3p}\left(6(1-p)\frac{\zeta_\star^2}{p} + (1-p)\sigma_\star^2 + \frac{2\hg{\mW} \card{\cE} d \sigmacor^2}{n} + \norm{\mW-\frac{\1\1^\top}{n}}_F^2 d \sigmacdp^2\right).
\end{align*}
Finally, we obtain the final result by substituting $V_T$, $V_0$ and $\eta_T, \eta_0$ and rearranging terms:
\begin{align*}
    &\E\loss(\bar{\xx}^{(T)}) - \loss_\star + \frac{48L^2}{\mu p(T+c \tfrac{L}{\mu p})} \Xi_T
    \leq  \frac{512L}{\mu^2 T}\frac{\sigma_\star^2+d\sigmacdp^2}{n}\\
    &\quad + \frac{\ln(T+1)}{T^2}\frac{49152L^2}{\mu^3p}\left((1-p)(6\frac{\zeta_\star^2}{p} + \sigma_\star^2) + \frac{2\hg{\mW} \card{\cE} d \sigmacor^2}{n} + \norm{\mW-\frac{\1\1^\top}{n}}_F^2 d \sigmacdp^2\right)\\
    &\quad+ \frac{c^2L^2(\loss(\bar{\xx}^{(0)}) - \loss_\star)}{\mu^2p^2T^2} + \frac{48cL^3}{\mu^2 p^2 T^2} \Xi_0.
\end{align*}
\end{proof}

\subsubsection{Non-convex case}
\begin{proof}
Let assumptions~\ref{a:lsmooth},~\ref{a:opt_nc},~\ref{a:boundedgradient}, and~\ref{a:avg_distrib} hold.
Consider Algorithm~\ref{algo} with the constant stepsize sequence defined for every $t \geq 0$ as:
\begin{align}
    \eta_t = \eta \coloneqq \min{\{\frac{p}{2 c L}, 2\sqrt{\frac{(\loss(\bar{\xx}^{(0)}) - \loss_\star)n}{LT(\sigma_\star^2+d\sigmacdp^2)}}\}},
    \label{eq:schedule-nc}
\end{align}
where $c \coloneqq \max{\{4\sqrt{3(1-p)(3P+pM)}, \tfrac{\mu}{L}, 2p, \tfrac{4pM}{n}}\}$.
This ensures that the conditions of lemmas~\ref{lem:decrease_nc} and~\ref{lem:consensus} are verified.

Consider the sequence defined for every $t \geq 0$ as:
\begin{equation}
    V_t \coloneqq \E{\left[\loss(\bar{\xx}^{(t)}) - \loss_\star \right]} + \frac{3L^2 \eta}{p} \Xi_t,
    \label{eq:lyapunov1-PL}
\end{equation}
where $\loss_\star \coloneqq \inf_{\xx \in \R^d} \loss(\xx)$ denotes the infimum of $\loss$.
Clearly, since $\Xi_t$ is also non-negative as a sum of squared distances, we have $V_t \geq 0$ for every $t \geq 0$.

Denote $A \coloneqq \tfrac{L}{2}\tfrac{\sigma_\star^2+d\sigmacdp^2}{n}$ and $B \coloneqq \tfrac{3L^2}{p}\left(6(1-p)\tfrac{\zeta_\star^2}{p} + (1-p)\sigma_\star^2 + \tfrac{2\hg{\mW} \card{\cE} d \sigmacor^2}{n} + \norm{\mW-\tfrac{\1\1^\top}{n}}_F^2 d \sigmacdp^2\right)$.
Following the same steps of the PL case until~\eqref{eq:PL2}, we have
\begin{align*}
    V_{t+1} - V_t
    &\leq -\frac{\eta}{8} \E\norm{\nabla \loss(\bar{\xx}^{(t)})}_2^2
    - \frac{3L^2\eta}{4}\Xi_t
    + \eta^2 A + \eta^3 B
    \leq -\frac{\eta}{8} \E\norm{\nabla \loss(\bar{\xx}^{(t)})}_2^2
    + \eta^2 A + \eta^3 B.
\end{align*}
By averaging over $t \in \{0,\ldots,T-1\}$, multiplying by $\tfrac{8}{\eta}$ and rearranging terms we obtain
\begin{align*}
    \frac{1}{T} \sum_{t=0}^{T-1} \E\norm{\nabla \loss(\bar{\xx}^{(t)})}_2^2
    &\leq \frac{8(V_0-V_T)}{\eta T}
    + 8\eta A + 8\eta^2 B.
\end{align*}
By recalling that $V_T \geq 0$ and substituting the values of $V_0$ and $A$, we get
\begin{align*}
    \frac{1}{T} \sum_{t=0}^{T-1} \E\norm{\nabla \loss(\bar{\xx}^{(t)})}_2^2
    &\leq \frac{8(\loss(\bar{\xx}^{(0)}) - \loss_\star + \frac{3L^2 \eta}{p} \Xi_0)}{\eta T}
    + 4\eta L\frac{\sigma_\star^2+d\sigmacdp^2}{n} + 8\eta^2 B\\
    &= \frac{8(\loss(\bar{\xx}^{(0)}) - \loss_\star)}{\eta T}
    + 4\eta L\frac{\sigma_\star^2+d\sigmacdp^2}{n} + 8\eta^2 B + \frac{24L^2}{pT} \Xi_0.
\end{align*}
Now, recalling the value of $\eta$, and that $\tfrac{1}{\eta} = \max{\{\tfrac{2cL}{p}, \tfrac{1}{2}\sqrt{\frac{TL(\sigma_\star^2+d\sigmacdp^2)}{(\loss(\bar{\xx}^{(0)}) - \loss_\star)n}}\}} \leq \tfrac{2cL}{p} + \tfrac{1}{2}\sqrt{\frac{TL(\sigma_\star^2+d\sigmacdp^2)}{(\loss(\bar{\xx}^{(0)}) - \loss_\star)n}}$.
Therefore, the bound above becomes
\begin{align*}
    \frac{1}{T} \sum_{t=0}^{T-1} \E\norm{\nabla \loss(\bar{\xx}^{(t)})}_2^2
    &\leq  \frac{16cL(\loss(\bar{\xx}^{(0)}) - \loss_\star)}{p T} + 4\sqrt{\frac{L(\loss(\bar{\xx}^{(0)}) - \loss_\star)(\sigma_\star^2+d\sigmacdp^2)}{nT}}\\
    &\quad + 8\sqrt{\frac{L(\loss(\bar{\xx}^{(0)}) - \loss_\star)(\sigma_\star^2+d\sigmacdp^2)}{nT}} + \frac{32(\loss(\bar{\xx}^{(0)}) - \loss_\star)n}{LT(\sigma_\star^2+d\sigmacdp^2)} B + \frac{24L^2}{pT} \Xi_0.
\end{align*}
By rearranging terms and substituting $B$, we obtain
\begin{align*}
    \frac{1}{T} \sum_{t=0}^{T-1} \E\norm{\nabla \loss(\bar{\xx}^{(t)})}_2^2
    &\leq 12\sqrt{\frac{L(\loss(\bar{\xx}^{(0)}) - \loss_\star)(\sigma_\star^2+d\sigmacdp^2)}{nT}} + \frac{16cL(\loss(\bar{\xx}^{(0)}) - \loss_\star)}{pT} + \frac{24L^2}{pT} \Xi_0\\
    &\quad + \frac{96L(\loss(\bar{\xx}^{(0)}) - \loss_\star)n}{pT(\sigma_\star^2+d\sigmacdp^2)} \left(6(1-p)\frac{\zeta_\star^2}{p} + (1-p)\sigma_\star^2 + \frac{2\hg{\mW} \card{\cE} d \sigmacor^2}{n} + \norm{\mW-\tfrac{\1\1^\top}{n}}_F^2 d \sigmacdp^2\right).
\end{align*}
The above concludes the proof.
\end{proof}

\subsection{Proof of Lemmas}
\label{sec:conv-lemmas}
We now restate and prove the intermediate lemmas from the previous sections.

\descent*
\begin{proof}
Let assumptions~\ref{a:lsmooth}, \ref{a:opt_nc}, and \ref{a:avg_distrib} hold.
Because mixing matrices preserve the average, as a direct consequence of Definition~\ref{def:valid_mixing}, we have
\begin{align*}
	\bar{\xx}^{(t + 1)} 
	&= \bar{\xx}^{(t)} - \frac{\eta_t}{n}\sum_{i = 1}^n  \Tilde{\gg}_i^{(t)}
  = \bar{\xx}^{(t)} - \frac{\eta_t}{n}\sum_{i = 1}^n  \left(\nabla \ell(\xx_i^{(t)}, \xxi_i^{(t)})
 + \sum_{j \in \cN_i} \vv_{i,j}^{(t)} +\overline{\vv}_i^{(t)}\right)\\
 &= \bar{\xx}^{(t)} - \frac{\eta_t}{n}\sum_{i = 1}^n  \nabla \ell(\xx_i^{(t)}, \xxi_i^{(t)})
 - \frac{\eta_t}{n}\sum_{i = 1}^n \sum_{j \in \cN_i} \vv_{i,j}^{(t)} - \frac{\eta_t}{n}\sum_{i = 1}^n \overline{\vv}_i^{(t)}.
	\end{align*}
Recall that for all $i \in [n], j \in \cN_i,$ we have $\vv_{i,j}^{(t)} = -\vv_{j, i}^{(t)}$, so that $\sum_{i = 1}^n \sum_{j \in \cN_i} \vv_{i,j}^{(t)} = 0$.
Reporting this in the equation above yields:
\begin{align}
    \label{eq:descent1}
	\bar{\xx}^{(t + 1)} 
	= \bar{\xx}^{(t)} - \frac{\eta_t}{n}\sum_{i = 1}^n  \nabla \ell(\xx_i^{(t)}, \xxi_i^{(t)}) - \frac{\eta_t}{n}\sum_{i = 1}^n \overline{\vv}_i^{(t)}.
	\end{align}
 Also, since function $\loss$ is $L$-smooth as the average of smooth functions (Assumption~\ref{a:lsmooth}), by taking conditional expectation $\E_t$ on all randomness up to iteration $t$, we have (see~\citep{bottou2018optimization}) 
	\begin{align}
	\EE{t}{\loss(\bar{\xx}^{(t + 1)})} 
	& \leq \loss(\bar{\xx}^{(t)}) +\underbrace{\EE{t}{\lin{ \nabla \loss(\bar{\xx}^{(t)}), \bar{\xx}^{(t + 1)} - \bar{\xx}^{(t)} }} }_{\eqqcolon A}+ \frac{L}{2} \eta_t^2 \underbrace{\EE{t}{ \norm{\bar{\xx}^{(t + 1)} - \bar{\xx}^{(t)}}_2^2}}_{\eqqcolon B}.
 \label{eq:descent2}
	\end{align}
We start by bounding $A$, by using~\eqref{eq:descent1} and the smoothness of $\loss_i$, as follows:
	\begin{align*}
	A
    &= -\eta_t \lin{ \nabla \loss(\bar{\xx}^{(t)}), \EE{}{\left[\frac{1}{n}\sum_{i = 1}^n  \nabla \ell(\xx_i^{(t)}, \xxi_i^{(t)}) + \frac{1}{n}\sum_{i = 1}^n \overline{\vv}_i^{(t)}\right]}}
    = -\eta_t \lin{ \nabla \loss(\bar{\xx}^{(t)}), \frac{1}{n}\sum_{i = 1}^n  \nabla \loss_i(\xx_i^{(t)})} \\
    &= \frac{\eta_t}{2} \left[ \norm{\frac{1}{n}\sum_{i = 1}^n  \nabla \loss_i(\xx_i^{(t)}) - \nabla \loss(\bar{\xx}^{(t)})}_2^2 - \norm{\nabla \loss(\bar{\xx}^{(t)})}_2^2 - \norm{\frac{1}{n}\sum_{i = 1}^n  \nabla \loss_i(\xx_i^{(t)})}_2^2 \right] \\ 
    &\leq \frac{\eta_t}{2} \left[ \frac{1}{n}\sum_{i = 1}^n\norm{\nabla \loss_i(\xx_i^{(t)}) - \nabla \loss_i(\bar{\xx}^{(t)})}_2^2 - \norm{\nabla \loss(\bar{\xx}^{(t)})}_2^2 - \norm{\frac{1}{n}\sum_{i = 1}^n  \nabla \loss_i(\xx_i^{(t)})}_2^2 \right] \\
    &\leq -\frac{\eta_t}{2} \norm{\nabla \loss(\bar{\xx}^{(t)})}_2^2 
    - \frac{\eta_t}{2}\norm{\frac{1}{n}\sum_{i = 1}^n  \nabla \loss_i(\xx_i^{(t)})}_2^2
    + \frac{\eta_t L^2}{2} \frac{1}{n}\sum_{i=1}^n \norm{\xx_i^{(t)} - \bar{\xx}^{(t)}}_2^2.
	\end{align*}
	For the last term $B$, using~\eqref{eq:descent1} and Assumption~\ref{a:opt_nc}, we obtain
	\begin{align*}
	B &= \E_{t}\norm{\frac{1}{n}\sum_{i = 1}^n  \nabla \ell(\xx_i^{(t)}, \xxi_i^{(t)}) + \frac{1}{n}\sum_{i = 1}^n \overline{\vv}_i^{(t)}}_2^2
        = \E\norm{\frac{1}{n}\sum_{i = 1}^n  \nabla \ell(\xx_i^{(t)}, \xxi_i^{(t)})}_2^2 + \E\norm{\frac{1}{n}\sum_{i = 1}^n \overline{\vv}_i^{(t)}}_2^2\\
 & =  \E\norm{\frac{1}{n} \sum_{j = 1}^n \left(\nabla \ell(\xx_i^{(t)}, \xi_i^{(t)})-\nabla \loss_i(\xx_i^{(t)}) \right)}_2^2 + \norm{\frac{1}{n}\sum_{i = 1}^n\nabla \loss_i(\xx_i^{(t)})}^2_2 + \frac{d \sigmacdp^2}{n} \\
	& \leq  \frac{\sigma_\star^2}{n}  + \frac{M}{n^2} \sum_{i=1}^n \norm{\nabla \loss(\xx_i^{(t)})}_2^2 + \norm{\frac{1}{n}\sum_{i = 1}^n\nabla \loss_i(\xx_i^{(t)})}^2_2 + \frac{d \sigmacdp^2}{n}\\
 & \leq  \frac{\sigma_\star^2}{n}  + \frac{2M}{n^2} \sum_{i=1}^n \norm{\nabla \loss(\xx_i^{(t)}) - \nabla \loss(\bar{\xx}^{(t)})}_2^2 + \frac{2M}{n}  \norm{\nabla \loss(\bar{\xx}^{(t)})}_2^2 +\norm{\frac{1}{n}\sum_{i = 1}^n\nabla \loss_i(\xx_i^{(t)})}^2_2 + \frac{d \sigmacdp^2}{n}\\
 & \leq  \frac{\sigma_\star^2}{n}  + \frac{2ML^2}{n^2} \sum_{i=1}^n \norm{\xx_i^{(t)} - \bar{\xx}^{(t)}}_2^2 + \frac{2M}{n}  \norm{\nabla \loss(\bar{\xx}^{(t)})}_2^2 +\norm{\frac{1}{n}\sum_{i = 1}^n\nabla \loss_i(\xx_i^{(t)})}^2_2 + \frac{d \sigmacdp^2}{n}.
	\end{align*}
    Combining the bounds on $A$ and $B$ in~\eqref{eq:descent2}, we obtain
    \begin{align*}
	\EE{t}{\loss(\bar{\xx}^{(t + 1)})} 
	& \leq \loss(\bar{\xx}^{(t)}) 
    -\frac{\eta_t}{2} \norm{\nabla \loss(\bar{\xx}^{(t)})}_2^2 
    - \frac{\eta_t}{2}\norm{\frac{1}{n}\sum_{i = 1}^n  \nabla \loss_i(\xx_i^{(t)})}_2^2
    + \frac{\eta_t L^2}{2} \frac{1}{n}\sum_{i=1}^n \norm{\xx_i^{(t)} - \bar{\xx}^{(t)}}_2^2\\
    &\quad+ \frac{L}{2} \eta_t^2 \left[\frac{\sigma_\star^2}{n}  + \frac{2ML^2}{n^2} \sum_{i=1}^n \norm{\xx_i^{(t)} - \bar{\xx}^{(t)}}_2^2 + \frac{2M}{n}  \norm{\nabla \loss(\bar{\xx}^{(t)})}_2^2 +\norm{\frac{1}{n}\sum_{i = 1}^n\nabla \loss_i(\xx_i^{(t)})}^2_2 + \frac{d \sigmacdp^2}{n}\right]\\
    & \leq \loss(\bar{\xx}^{(t)}) 
    -\frac{\eta_t}{2}(1-\frac{2ML}{n}\eta_t) \norm{\nabla \loss(\bar{\xx}^{(t)})}_2^2 
    - \frac{\eta_t}{2}(1-L\eta_t)\norm{\frac{1}{n}\sum_{i = 1}^n  \nabla \loss_i(\xx_i^{(t)})}_2^2\\
    &\quad+ \frac{\eta_t L^2}{2}(1+\frac{2ML}{n}\eta_t) \frac{1}{n}\sum_{i=1}^n \norm{\xx_i^{(t)} - \bar{\xx}^{(t)}}_2^2
    +\frac{L\eta_t^2}{2} \frac{\sigma_\star^2 + d \sigmacdp^2}{n}.
	\end{align*}
 
	By using $\eta_t \leq \frac{1}{2L}\min{\{1, \frac{n}{2M}\}}$ and taking total expectations, we conclude:	
 \begin{align*}
	\E{\left[\loss(\bar{\xx}^{(t + 1)}) - \loss(\bar{\xx}^{(t)})\right]} 
    & \leq -\frac{\eta_t}{4} \E\norm{\nabla \loss(\bar{\xx}^{(t)})}_2^2  + \frac{3\eta_t L^2}{4} \frac{1}{n}\sum_{i=1}^n \E\norm{\xx_i^{(t)} - \bar{\xx}^{(t)}}_2^2
    +\frac{L\eta_t^2}{2} \frac{\sigma_\star^2 + d \sigmacdp^2}{n}\\
    & = -\frac{\eta_t}{4} \E\norm{\nabla \loss(\bar{\xx}^{(t)})}_2^2  + \frac{3\eta_t L^2}{4} \Xi_t
    +\frac{L\eta_t^2}{2} \frac{\sigma_\star^2 + d \sigmacdp^2}{n}.
	\end{align*}
\end{proof}

\noisereduce*
\begin{proof}
Let $\cG = ([n], \cE)$ be an arbitrary undirected graph, and $\mW \in \R^{n \times n}$ be an arbitrary matrix (not necessarily a mixing matrix nor dependent upon $\cG$).
First, we prove that for every $j \in [n]$, we have
\begin{align}
    \mN^{(t)} \mW_j = \frac{1}{2} \sum_{i, k = 1}^{n} (\mW_{ij} - \mW_{kj}) \1_{k \in \cN_i} \vv_{ik}^{(t)},
    \label{lem5eq0}
\end{align}
where $\mW_j \in \R^n$ denotes the $j$-th column of $\mW$.
Indeed, we have
\begin{align}
    \mN^{(t)} \mW_j &= \sum_{i=1}^{n} \mW_{ij} N_i^{(t)}
    = \sum_{i=1}^{n} \sum_{k \in \cN_i} \mW_{ij} \vv_{ik}^{(t)}
    = \sum_{i,k=1}^{n} \mW_{ij} \1_{k \in \cN_i} \vv_{ik}^{(t)} \label{lem5eq1} \\
    &= \sum_{i,k=1}^{n} \mW_{ij} \1_{i \in \cN_k} \vv_{ik}^{(t)}
    = - \sum_{i,k=1}^{n} \mW_{ij} \1_{i \in \cN_k} \vv_{ki}^{(t)}
    = - \sum_{i,k=1}^{n} \mW_{kj} \1_{k \in \cN_i} \vv_{ik}^{(t)}, \label{lem5eq2}
\end{align}
where the last three equalities were successively obtained by using the facts that $\cG$ is undirected so $\1_{i \in \cN_k} = \1_{k \in \cN_i}$, that $\vv_{ik}^{(t)} = -\vv_{ik}^{(t)}, \forall i,k \in [n]$, and exchanging symbols $i,k$ in the double summation.
Thus, averaging equalities \eqref{lem5eq1} and \eqref{lem5eq2} proves Equation~\eqref{lem5eq0}.

Now, using Equation~\eqref{lem5eq0}, we can write
\begin{align*}
    \EE{}{\norm{\mN^{(t)} \mW}_F^2} &= \sum_{j=1}^n \EE{}{\norm{\mN^{(t)} \mW_j}^2} = \frac{1}{4} \sum_{j=1}^n \EE{}{\norm{\sum_{i, k = 1}^{n} (\mW_{ij} - \mW_{kj}) \1_{k \in \cN_i} \vv_{ik}^{(t)}}^2}\\
    & = \frac{1}{4} \sum_{j=1}^n \EE{}{ \norm{\sum_{\substack{i, k = 1 \\ i < k}}^{n} \left[ (\mW_{ij} - \mW_{kj}) \1_{k \in \cN_i} \vv_{ik}^{(t)} + (\mW_{kj} - \mW_{ij}) \1_{i \in \cN_k} \vv_{ki}^{(t)}\right]}^2}\\
    & {=} \frac{1}{4} \sum_{j=1}^n \EE{}{ \norm{2 \sum_{\substack{i, k = 1 \\ i < k}}^{n} (\mW_{ij} - \mW_{kj}) \1_{k \in \cN_i} \vv_{ik}^{(t)} }^2}
    = \sum_{j=1}^n \sum_{\substack{i, k = 1 \\ i < k}}^{n} (\mW_{ij} - \mW_{kj})^2 \1_{k \in \cN_i}  \EE{}{ \norm{\vv_{ik}^{(t)} }^2}\\
    & = \sum_{j=1}^n \sum_{\substack{i, k = 1 \\ i < k}}^{n} (\mW_{ij} - \mW_{kj})^2 \1_{k \in \cN_i} d \sigmacor^2
    = \frac{1}{2} \sum_{\substack{i,j, k = 1}}^{n} (\mW_{ij} - \mW_{kj})^2 \1_{k \in \cN_i} d \sigmacor^2\\
    & = \frac{1}{2} \sum_{i, k=1}^n  \norm{\mW_{i} - \mW_{k}}^2 \1_{k \in \cN_i}
     d \sigmacor^2,
\end{align*}
where in the fourth equality we used that $\vv_{ki}^{(t)} = - \vv_{ik}^{(t)}$ and that $\1_{i \in \cN_k} = \1_{k \in \cN_i}$, on the fifth equality we used that $\vv_{ik}^{(t)}$ are independent for $i < k$, and on the sixth equality that $\EE{}{\norm{\vv_{il}^{(t)}}^2} = d \sigmacor^2$.

Also, taking $\mW = \mathbf{I}_n$ in the equation above (which holds for arbitrary $\mW$), we have $\norm{\mW_{i} - \mW_{k}}^2 = 2 \cdot \1_{k \neq i}$, and thus
\begin{align*}
    \EE{}{\norm{\mN^{(t)}}_F^2}
    = \sum_{i, k=1}^n \1_{k \in \cN_i}
     d \sigmacor^2 = 2 \card{\cE} d \sigmacor^2.
\end{align*}
The last two equations directly lead to the main result of the lemma.

Now, denote by $k_{\mathrm{min}} \geq 1$ the minimal degree of $\cG$ and assume that $\mW_{ij}= \tfrac{\1_{j \in \cN_i}}{\mathrm{deg}(i)+1}, \forall i,j\in [n]$, where $\mathrm{deg}(i) = \card{\cN_i}$ is the degree of user $i$ in the graph. Thus, we have $\|\mW_i\|^2 = \tfrac{\mathrm{deg}(i)}{(\mathrm{deg}(i)+1)^2}$. Using Jensen's inequality, we have
\begin{align*}
    \sum_{i, k=1}^n  \norm{\mW_{i} - \mW_{k}}^2 \1_{k \in \cN_i} 
    &\leq 2\sum_{i, k=1}^n  \left(\norm{\mW_{i}} + \norm{\mW_{k}}^2\right) \1_{k \in \cN_i}
    = 4\sum_{i, k=1}^n \norm{\mW_{i}}^2\1_{k \in \cN_i}\\
    &= 4\sum_{i, k=1}^n \frac{\mathrm{deg}(i)}{(\mathrm{deg}(i)+1)^2}\1_{k \in \cN_i}
    = 4\sum_{i=1}^n \frac{\mathrm{deg}(i)^2}{(\mathrm{deg}(i)+1)^2}.
\end{align*}
On the other hand, we have $2\sum_{i,k=1}^n \1_{k \in \cN_i} = 2\sum_{i=1}^n \mathrm{deg}(i)$, so that
\begin{align*}
    \hg{\mW} = \frac{\sum_{i, k = 1}^{n} \norm{\mW_i - \mW_k}^2 \1_{k \in \cN_i}}{2\sum_{i, k = 1}^{n} \1_{k \in \cN_i}} \leq 2\frac{\sum_{i=1}^n \frac{\mathrm{deg}(i)^2}{(\mathrm{deg}(i)+1)^2}}{\sum_{i=1}^n \mathrm{deg}(i)} \leq 2 \max_{i \in [n]} \frac{\mathrm{deg}(i)}{(\mathrm{deg}(i)+1)^2} \leq \frac{2k_{\mathrm{min}}}{(k_{\mathrm{min}}+1)^2} \leq \frac{2}{k_{\mathrm{min}}}.
\end{align*}
This concludes the second statement of the lemma.
\end{proof}

\consensus*
\begin{proof}
Let assumptions~\ref{a:lsmooth}, \ref{a:opt_nc}, and \ref{a:avg_distrib} hold.
Also, assume that stepsizes verify $\eta_t \leq  \frac{p}{96\sqrt{6}\tau  L}$ for each iteration $t$.
Denote
\begin{equation*}
\partial \ell(\mX^{(t)}) := \left[\nabla \loss_1(\xx_{1}^{(t)}), \dots,  \nabla \loss_n(\xx_{n}^{(t)})\right]  \in \R^{d\times n}.
\end{equation*}
We first write
\begin{align*}
    \mX^{(t+1)} - \bar \mX^{(t+1)}
    &= \mX^{(t+\tfrac{1}{2})}\mW - \mX^{(t+\tfrac{1}{2})}\frac{\1\1^\top}{n} 
    = \mX^{(t+\tfrac{1}{2})}(\mW - \frac{\1\1^\top}{n})\\
    &= \left[\mX^{(t)} - \eta_t\left(\partial \ell(\mX^{(t)}, \xxi^{(t)}) + \mN^{(t)} + \bar{\mN}^{(t)} \right) \right](\mW - \frac{\1\1^\top}{n})\\
    &= \mX^{(t)}(\mW - \frac{\1\1^\top}{n}) - \eta_t \left(\partial \ell(\mX^{(t)}, \xxi^{(t)}) + \mN^{(t)} + \bar{\mN}^{(t)} \right)(\mW - \frac{\1\1^\top}{n})\\
    &= (\mX^{(t)} - \eta_t \partial \ell(\mX^{(t)}))(\mW - \frac{\1\1^\top}{n}) \\
    & \quad - \eta_t \left(\partial \ell(\mX^{(t)}, \xxi^{(t)}) - \partial \ell(\mX^{(t)}) + \mN^{(t)} + \bar{\mN}^{(t)} \right)(\mW - \frac{\1\1^\top}{n}).
\end{align*}
By independence, taking squared Frobenius norms and total expectations yields
\begin{align}
    n \Xi_{t+1} = \E{\norm{\mX^{(t+1)} - \bar \mX^{(t+1)}}_F^2}
    &= \E{\norm{(\mX^{(t)} - \eta_t \partial \ell(\mX^{(t)}))(\mW - \frac{\1\1^\top}{n})}_F^2} \nonumber\\
    & \quad + \eta_t^2 \E{\norm{\left(\partial \ell(\mX^{(t)}, \xxi^{(t)}) - \partial \ell(\mX^{(t)}) + \mN^{(t)} + \bar{\mN}^{(t)} \right)(\mW - \frac{\1\1^\top}{n})}_F^2}.
    \label{eq:consensus1}
\end{align}
The first term on the RHS of \eqref{eq:consensus1} can be bounded, by first using Assumption~\ref{a:avg_distrib} and then Young's inequality, as follows:
\begin{align*}
    &\E{\norm{(\mX^{(t)} - \eta_t \partial \ell(\mX^{(t)}))(\mW - \frac{\1\1^\top}{n})}_F^2}
    \leq (1-p) \E{\norm{\mX^{(t)} - \eta_t \partial \ell(\mX^{(t)}) - \bar \mX^{(t)} + \eta_t \partial \ell(\mX^{(t)}) \frac{\1 \1^\top}{n}}_F^2}\\
    &= (1-p) \E{\norm{\mX^{(t)} - \bar \mX^{(t)} - \eta_t( \partial \ell(\mX^{(t)}) - \partial \ell(\mX^{(t)}) \frac{\1 \1^\top}{n})}_F^2}\\
    &\leq (1-p)(1+\frac{p}{3(1-p)}) \E{\norm{\mX^{(t)} - \bar \mX^{(t)}}_F^2}
    + (1-p)(1+\frac{3(1-p)}{p}) \eta_t^2 \E{\norm{\partial \ell(\mX^{(t)}) - \partial \ell(\mX^{(t)}) \frac{\1 \1^\top}{n}}_F^2}\\
    &= (1-\frac{2p}{3}) n \Xi_t
    + \frac{(1-p)(3-2p)}{p} \eta_t^2 \E{\norm{\partial \ell(\mX^{(t)}) - \partial \ell(\mX^{(t)}) \frac{\1 \1^\top}{n}}_F^2}\\
    &\leq (1-\frac{2p}{3}) n \Xi_t
    + \frac{3(1-p)}{p} \eta_t^2 \E{\norm{\partial \ell(\mX^{(t)})}_F^2},
\end{align*}
where the last inequality is due to $p \geq 0$ and also that for any $A\in \R^{d\times n}$, $B\in \R^{n\times n}$, we have $\norm{AB}_F \leq \norm{A}_F \norm{B}_2$, along with the fact that $\norm{\mI_{n} - \frac{\1 \1^\top}{n}}_2 =1$.

The second term on the RHS of \eqref{eq:consensus1} can be bounded, using independence, as follows:
\begin{align*}
    &\E{\norm{\left(\partial \ell(\mX^{(t)}, \xxi^{(t)}) - \partial \ell(\mX^{(t)}) + \mN^{(t)} + \bar{\mN}^{(t)} \right)(\mW - \frac{\1\1^\top}{n})}_F^2}=\\
    &\quad \E{\norm{\left(\partial \ell(\mX^{(t)}, \xxi^{(t)}) - \partial \ell(\mX^{(t)}) \right)(\mW - \frac{\1\1^\top}{n})}_F^2}
    + \E{\norm{\mN^{(t)}(\mW - \frac{\1\1^\top}{n})}_F^2}
    + \E{\norm{\bar{\mN}^{(t)}(\mW - \frac{\1\1^\top}{n})}_F^2}.
\end{align*}
We note that since $\bar{\mN}^{(t)}$ is a matrix of $d \times n$ i.i.d. Gaussian variables of variance $\sigmacdp^2$, we have $\E{\norm{\bar{\mN}^{(t)}(\mW - \frac{\1\1^\top}{n})}_F^2} = \norm{\mW - \frac{\1\1^\top}{n}}_F^2 d n \sigmacdp^2$.
Moreover, using the fact that the sum of correlated noise terms is zero and then Lemma~\ref{lem:pairwisenoisereduce}, we have $\E{\norm{\mN^{(t)}(\mW - \frac{\1\1^\top}{n})}_F^2} = \E{\norm{\mN^{(t)}\mW}_F^2} = 2\hg{\mW} |\cE| d \sigmacor^2$. Plugging these last two results above, and then using Assumption~\ref{a:avg_distrib}, yields:
\begin{align*}
    &\E{\norm{\left(\partial \ell(\mX^{(t)}, \xxi^{(t)}) - \partial \ell(\mX^{(t)}) + \mN^{(t)} + \bar{\mN}^{(t)} \right)(\mW - \frac{\1\1^\top}{n})}_F^2}=\\
    &\quad \E{\norm{\left(\partial \ell(\mX^{(t)}, \xxi^{(t)}) - \partial \ell(\mX^{(t)}) \right)(\mW - \frac{\1\1^\top}{n})}_F^2}
    + 2\hg{\mW} |\cE| d \sigmacor^2
    + \norm{\mW - \frac{\1\1^\top}{n}}_F^2 d n \sigmacdp^2\\
    &\leq (1-p)\E{\norm{\partial \ell(\mX^{(t)}, \xxi^{(t)}) - \partial \ell(\mX^{(t)}) - \nabla \loss(\mX^{(t)},\xxi^{(t)})+\nabla\loss(\mX^{(t)})}_F^2}
    + 2\hg{\mW} |\cE| d \sigmacor^2
    + \norm{\mW - \frac{\1\1^\top}{n}}_F^2 d n \sigmacdp^2\\
    &\leq (1-p)\E{\norm{\partial \ell(\mX^{(t)}, \xxi^{(t)}) - \partial \ell(\mX^{(t)})}_F^2}
    + 2\hg{\mW} |\cE| d \sigmacor^2
    + \norm{\mW - \frac{\1\1^\top}{n}}_F^2 d n \sigmacdp^2.
\end{align*}
Reporting the previous bounds back in~\eqref{eq:consensus1} gives
\begin{align*}
    n \Xi_{t+1}
    &\leq (1-\frac{2p}{3}) n \Xi_t
    + \frac{3(1-p)}{p} \eta_t^2 \E{\norm{\partial \ell(\mX^{(t)})}_F^2} + \eta_t^2\Big((1-p)\E{\norm{\partial \ell(\mX^{(t)}, \xxi^{(t)}) - \partial \ell(\mX^{(t)})}_F^2}\\
    &\quad + 2\hg{\mW} |\cE| d \sigmacor^2
    + \norm{\mW - \frac{\1\1^\top}{n}}_F^2 d n \sigmacdp^2\Big).
\end{align*}
Rearranging and dividing by $n$ yields
\begin{align}
    \Xi_{t+1}
    &\leq (1-\frac{2p}{3}) \Xi_t
    +  \frac{\eta_t^2}{n} \Big[ \frac{3(1-p)}{p}\E{\norm{\partial \ell(\mX^{(t)})}_F^2} + (1-p)\E{\norm{\partial \ell(\mX^{(t)}, \xxi^{(t)}) - \partial \ell(\mX^{(t)})}_F^2}\nonumber\\
    &\quad + \frac{2\hg{\mW} |\cE| d \sigmacor^2}{n}
    + \norm{\mW - \frac{\1\1^\top}{n}}_F^2 d \sigmacdp^2\Big].
    \label{eq:consensus2}
\end{align}
On the one hand, by using assumptions~\ref{a:lsmooth} and~\ref{a:opt_nc} and Jensen's inequality, we have
\begin{align*}
    \E{\norm{\partial \ell(\mX^{(t)})}_F^2}
    &=\sum_{i=1}^n \E\norm{\nabla \loss_i(\xx_i^{(t)})}_2^2
    \leq 2\sum_{i=1}^n \E\norm{\nabla \loss_i(\xx_i^{(t)}) - \nabla \loss_i(\bar{\xx}^{(t)})}_2^2 + 2\sum_{i=1}^n\E\norm{\nabla \loss_i(\bar{\xx}^{(t)})}_2^2\\
    &\leq 2L^2 \sum_{i=1}^n \E\norm{\xx_i^{(t)}-\bar{\xx}^{(t)}}_2^2 + 2 n \zeta_\star^2 + 2nP \E\norm{\nabla \loss{(\bar{\xx}^{(t)})}}^2\\
    &= 2L^2 n \Xi_t + 2 n \zeta_\star^2 + 2nP \E\norm{\nabla \loss{(\bar{\xx}^{(t)})}}^2.
\end{align*}
On the other hand, by using assumptions~\ref{a:opt_nc} and~\ref{a:lsmooth} and Jensen's inequality, we obtain that
\begin{align*}
    \E{\norm{\partial \ell(\mX^{(t)}, \xxi^{(t)}) - \partial \ell(\mX^{(t)})}_F^2}
    &= \sum_{i=1}^n \E\norm{\nabla \ell(\xx_i^{(t)}, \xxi^{(t)}) - \nabla \loss_i(\xx_i^{(t)})}_2^2 \leq n \sigma_\star^2 + M \sum_{i=1}^n \E\norm{\nabla \loss(\xx_i^{(t)})}_2^2\\
    &\leq n \sigma_\star^2 + 2M \sum_{i=1}^n \E\norm{\nabla \loss(\xx_i^{(t)}) - \nabla \loss{(\bar{\xx}^{(t)})}}_2^2 + 2Mn \E\norm{\nabla \loss{(\bar{\xx}^{(t)})}}_2^2\\
    &\leq n \sigma_\star^2 + 2ML^2 \sum_{i=1}^n \E\norm{\xx_i^{(t)} - \bar{\xx}^{(t)}}_2^2 + 2Mn \E\norm{\nabla \loss{(\bar{\xx}^{(t)})}}_2^2\\
    &= n \sigma_\star^2 + 2ML^2 n \Xi_t + 2Mn \E\norm{\nabla \loss{(\bar{\xx}^{(t)})}}_2^2.
\end{align*}

By reporting the two bounds above back into~\eqref{eq:consensus2} and rearranging terms, and using $\eta_t \leq \frac{p}{L\sqrt{6(1-p)(3+pM)}}$ we obtain
\begin{align*}
    \Xi_{t+1}
    &\leq \left[1-\frac{2p}{3} + 2(1-p)L^2\eta_t^2(\frac{3}{p}+M)\right] \Xi_t
    +  2\eta_t^2(1-p)(\frac{3P}{p}+M) \norm{\nabla \loss(\bar{\xx}^{(t)})}_2^2\\
    &\quad + \eta_t^2 \left[6(1-p)\frac{\zeta_\star^2}{p} + (1-p)\sigma_\star^2 + \frac{2\hg{\mW} |\cE| d \sigmacor^2}{n}
    + \norm{\mW - \frac{\1\1^\top}{n}}_F^2 d \sigmacdp^2\right]\\
    &\leq (1-\frac{p}{2}) \Xi_t
    +  2\eta_t^2(1-p)(\frac{3P}{p}+M) \norm{\nabla \loss(\bar{\xx}^{(t)})}_2^2\\
    &\quad + \eta_t^2 \left[6(1-p)\frac{\zeta_\star^2}{p} + (1-p)\sigma_\star^2 + \frac{2\hg{\mW} |\cE| d \sigmacor^2}{n}
    + \norm{\mW - \frac{\1\1^\top}{n}}_F^2 d \sigmacdp^2\right].
\end{align*}
The above concludes the proof.
\end{proof}

\section{Privacy-utility Trade-off}
\label{sec:app-tradeoff}
In this section, we prove our main privacy result stated in Corollary~\ref{cor:final} and extend it to the general privacy adversaries discussed in Section~\ref{sec:problem}.
We first recall some useful facts around Rényi differential privacy (RDP)~\cite{mironov2017renyi}. 

\begin{lemma}[RDP Composition, \cite{mironov2017renyi}]
\label{lem:composition}
If a privacy mechanism $\cM_1$ that takes the dataset as input is $(\alpha,\varepsilon_1)$-RDP, and a privacy mechanism $\cM_2$ that takes the dataset and the output of $\cM_1$ as input is $(\alpha,\varepsilon_2)$-RDP, then their composition $\cM_2 \circ \cM_1$ is $(\alpha,\varepsilon_1+\varepsilon_2)$-RDP.
\end{lemma}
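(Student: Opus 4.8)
The plan is to reduce the statement to a direct manipulation of the integral (Hellinger-type) representation of the R\'enyi divergence. Fix two adjacent datasets $\cD, \cD'$ and regard the output of the composed mechanism as a pair $(y_1, y_2)$, where $y_1$ is drawn from $\cM_1(\cD)$ and $y_2$ from $\cM_2(\cD, y_1)$. Writing $p_1, q_1$ for the densities of $\cM_1(\cD), \cM_1(\cD')$ and $p_2(\cdot \mid y_1), q_2(\cdot \mid y_1)$ for the conditional densities of $\cM_2(\cD, y_1), \cM_2(\cD', y_1)$, the joint densities factor as $P(y_1,y_2) = p_1(y_1)\, p_2(y_2 \mid y_1)$ and $Q(y_1,y_2) = q_1(y_1)\, q_2(y_2 \mid y_1)$. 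The object I would work with is the exponentiated divergence, which by the definition of the R\'enyi divergence satisfies $e^{(\alpha-1)\mathrm{D}_\alpha(P \| Q)} = \iint P(y_1,y_2)^\alpha\, Q(y_1,y_2)^{1-\alpha}\, dy_2\, dy_1$.

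First I would substitute the product form and group terms to obtain
\[
e^{(\alpha-1)\mathrm{D}_\alpha(P \| Q)} = \int p_1(y_1)^\alpha q_1(y_1)^{1-\alpha} \left( \int p_2(y_2 \mid y_1)^\alpha q_2(y_2 \mid y_1)^{1-\alpha}\, dy_2 \right) dy_1 .
\]
The inner integral is exactly $e^{(\alpha-1)\mathrm{D}_\alpha(\cM_2(\cD, y_1)\,\|\,\cM_2(\cD', y_1))}$; since $\cM_2$ is $(\alpha,\varepsilon_2)$-RDP, it is bounded by $e^{(\alpha-1)\varepsilon_2}$ for every fixed $y_1$. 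Because $\alpha>1$ the exponent $\alpha-1$ is positive, so I may pull this uniform bound out of the outer integral, leaving $\int p_1(y_1)^\alpha q_1(y_1)^{1-\alpha}\, dy_1 = e^{(\alpha-1)\mathrm{D}_\alpha(\cM_1(\cD)\,\|\,\cM_1(\cD'))} \le e^{(\alpha-1)\varepsilon_1}$ by the RDP guarantee of $\cM_1$. Multiplying the two bounds gives $e^{(\alpha-1)\mathrm{D}_\alpha(P\|Q)} \le e^{(\alpha-1)(\varepsilon_1+\varepsilon_2)}$, and taking logarithms and dividing by $\alpha-1>0$ yields $\mathrm{D}_\alpha(P\|Q) \le \varepsilon_1 + \varepsilon_2$ for arbitrary adjacent $\cD,\cD'$, which is the claim.

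The main obstacle, and the only genuinely delicate point, is justifying that the bound on the inner integral holds \emph{uniformly} over all auxiliary inputs $y_1$. This is precisely where the adaptive structure of the composition enters: $\cM_2$ receives $y_1$ as side information, and the hypothesis ``$\cM_2$ is $(\alpha,\varepsilon_2)$-RDP'' must be read as holding simultaneously for every admissible value of its auxiliary argument, so that the inner integral can be replaced by the constant $e^{(\alpha-1)\varepsilon_2}$ before integrating over $y_1$. A secondary point to flag is the sign of $\alpha-1$: pulling out the constant and dividing through at the end both rely on $\alpha>1$, which is part of the hypothesis. The extension to discrete output spaces is immediate upon replacing integrals by sums, and the transfer to the SecRDP setting is obtained by conditioning every distribution on the hidden secrets throughout, which leaves the argument unchanged.
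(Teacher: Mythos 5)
The paper states this lemma without proof, citing \citet{mironov2017renyi}; your argument is precisely the standard proof of adaptive RDP composition from that reference (factor the joint density, bound the inner integral uniformly in the auxiliary input $y_1$, then bound the outer integral), and it is correct, including the two delicate points you flag about uniformity over $y_1$ and the sign of $\alpha-1$. Nothing further is needed.
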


\begin{lemma}[RDP to DP conversion, \cite{mironov2017renyi}]
\label{lem:rdp-dp}
If a privacy mechanism $\cM$ is $(\alpha,\varepsilon)$-RDP, then $\cM$ is $(\varepsilon+\frac{\log{(1/\delta)}}{\alpha-1},\delta)$-DP for all $\delta \in (0,1)$.
\end{lemma}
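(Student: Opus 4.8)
The final statement is the standard RDP-to-DP conversion, so the plan is to reduce it to a tail bound on the privacy loss random variable. Fix adjacent datasets $\cD, \cD'$ and write $P \coloneqq \cM(\cD)$ and $Q \coloneqq \cM(\cD')$ for the two output distributions; let $L \coloneqq \log\tfrac{P(x)}{Q(x)}$ denote the privacy loss when $x \sim P$. The first step is to re-express the $(\alpha,\varepsilon)$-RDP hypothesis as a moment bound. By the change of measure from $Q$ to $P$,
\[
\EEb{x \sim Q}{\left(\tfrac{P(x)}{Q(x)}\right)^\alpha} = \EEb{x \sim P}{\left(\tfrac{P(x)}{Q(x)}\right)^{\alpha-1}} = \EEb{x \sim P}{e^{(\alpha-1)L}},
\]
so the hypothesis $\mathrm{D}_\alpha(P \| Q) \le \varepsilon$ is equivalent (using $\alpha > 1$) to the moment-generating-function bound $\EEb{x \sim P}{e^{(\alpha-1)L}} \le e^{(\alpha-1)\varepsilon}$.

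Second, I would split an arbitrary measurable event $S \subseteq \cY$ according to whether the privacy loss lies below or above a threshold $t$ to be fixed later:
\[
P(S) = \Prob{x \sim P}{x \in S,\, L \le t} + \Prob{x \sim P}{x \in S,\, L > t}.
\]
On the low-loss part, $\{L \le t\}$ gives the pointwise inequality $P(x) = e^{L} Q(x) \le e^{t} Q(x)$, so integrating over $S$ yields $\Prob{x \sim P}{x \in S,\, L \le t} \le e^{t} Q(S)$. For the high-loss part, since $\alpha - 1 > 0$ the map $u \mapsto e^{(\alpha-1)u}$ is increasing, and Markov's inequality applied to the nonnegative variable $e^{(\alpha-1)L}$ together with the moment bound gives $\Prob{x \sim P}{L > t} \le e^{-(\alpha-1)t}\,\EEb{x \sim P}{e^{(\alpha-1)L}} \le e^{(\alpha-1)(\varepsilon - t)}$. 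Combining the two parts,
\[
P(S) \le e^{t} Q(S) + e^{(\alpha-1)(\varepsilon - t)}.
\]

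The last step is to optimize the free threshold. Setting the tail term equal to $\delta$, i.e.\ $(\alpha-1)(\varepsilon - t) = \log\delta$, forces $t = \varepsilon + \tfrac{\log(1/\delta)}{\alpha-1}$, which is exactly the target parameter $\varepsilon' \coloneqq \varepsilon + \tfrac{\log(1/\delta)}{\alpha-1}$. With this choice the bound reads $P(S) \le e^{\varepsilon'} Q(S) + \delta$; since $S$ and the adjacent pair $(\cD,\cD')$ were arbitrary, $\cM$ satisfies $(\varepsilon', \delta)$-DP, as claimed.

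I do not expect a serious obstacle here; the only delicacy is measure-theoretic bookkeeping. One must treat $P(x)/Q(x)$ as a genuine Radon--Nikodym derivative so that the pointwise inequality $P(x) \le e^t Q(x)$ on $\{L \le t\}$ is legitimate, and ensure the Markov step applies to the push-forward of $e^{(\alpha-1)L}$ under $P$. For discrete or absolutely continuous outputs this is routine; in the general case one works with the densities of $P$ and $Q$ with respect to $P + Q$ to sidestep division-by-zero on the null set $\{Q(x) = 0\}$, whose $P$-mass can be absorbed into the additive $\delta$ term consistently with the stated bound.
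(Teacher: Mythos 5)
Your proof is correct, and it is worth noting that the paper itself contains no proof of this lemma: it is imported as a known result from \citet{mironov2017renyi} (their Proposition~3), so the only meaningful comparison is with that source. Your argument---rewriting the $(\alpha,\varepsilon)$-RDP hypothesis as the moment bound $\EEb{x\sim P}{e^{(\alpha-1)L}} \leq e^{(\alpha-1)\varepsilon}$ on the privacy loss $L$, splitting an arbitrary event at a threshold $t$, bounding the low-loss part pointwise by $e^{t}Q(S)$ and the high-loss tail by Markov's inequality, and then tuning $t$ so the tail equals $\delta$---is the moments-accountant-style conversion (in the spirit of \citet{abadi2016deep}), and it yields exactly the stated parameters. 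Mironov's own proof takes a genuinely different route: he applies H\"older's inequality with exponents $\alpha$ and $\alpha/(\alpha-1)$ to obtain the probability-preservation bound $P(S) \leq \left(e^{\varepsilon} Q(S)\right)^{(\alpha-1)/\alpha}$, then argues by cases on whether $Q(S)$ is below or above $e^{-\varepsilon}\delta^{\alpha/(\alpha-1)}$; in the first case $P(S) \leq \delta$, and in the second $P(S) \leq e^{\varepsilon + \log(1/\delta)/(\alpha-1)} Q(S)$. Both arguments give the same constants; yours is more elementary and self-contained (only Markov is needed) and makes the privacy-loss random variable explicit, which is the form that generalizes to composition accounting, while Mironov's is shorter once H\"older is granted. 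One small correction to your closing measure-theoretic remark: for $\alpha>1$, finiteness of $\mathrm{D}_{\alpha}{\left(P~\|~Q\right)}$ already forces $P$ to be absolutely continuous with respect to $Q$---otherwise $\EEb{x \sim Q}{(P(x)/Q(x))^{\alpha}}$ is infinite, contradicting the RDP hypothesis---so the set where $Q$ vanishes has $P$-mass zero and nothing needs to be ``absorbed'' into $\delta$; equivalently, as you note, that set lies inside $\{L > t\}$ and is already covered by the Markov bound.
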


\textbf{Proof of Corollary~\ref{cor:final}.}
For convenience, we restate Corollary~\ref{cor:final} below, whose proof is a special case of the extended privacy-utility trade-off result given next.

\cortradeoff*
\begin{proof}
    This result is a special case of Corollary~\ref{cor:final-general}, by taking $q=0$ for the external eavesdropper and $q=1$ for the honest-but-curious non-colluding users in the PL case, and omitting vanishing terms in $T$.
\end{proof}

\textbf{Extended privacy-utility trade-off.}
We now state and prove a general privacy-utility trade-off analysis of \alg to all considered adversaries in Section~\ref{sec:problem}, which includes collusion, as well as the non-convex case. 

\begin{corollary}
\label{cor:final-general}
Let the assumptions of theorems~\ref{th:privacy} and~\ref{thm:summary} hold and assume that $\cG$ is $(q+1)$-connected.
Let $\varepsilon>0, \delta \in (0,1)$ be such that $\varepsilon \leq \log{(1/\delta)}$.
Consider Algorithm~\ref{algo} with $\sigmacdp^2= \tfrac{32 C^2 T \log{(1/\delta)}}{(n-q)\varepsilon^2}$ and $\sigmacor^2= \tfrac{32 C^2 T\log{(1/\delta)}}{a_q{(\cG)}\varepsilon^2}$.
Denote $\loss_0 \coloneqq \loss(\bar{\xx}^{(0)}) - \loss_\star$.
Then, Algorithm~\ref{algo} satisfies $(\varepsilon, \delta)$-SecLDP and the following holds:

\begin{enumerate}
    \item Assume that $\loss$ is $\mu$-PL:
    \begin{align*}
        &\E\loss(\bar{\xx}^{(T)}) - \loss_\star=  \widetilde{\mathcal{O}}\Bigg(\frac{L C^2 d \log{(1/\delta)}}{\mu^2 n (n-q) \varepsilon^2}
        + \frac{L}{\mu^2 n T}\Big[\sigma_\star^2 +\frac{L C^2 d\log{(1/\delta)}}{\mu p \varepsilon^2}\Big(\frac{\hg{\mW} \card{\cE}}{a(\cG_{\cH})}
        + \frac{n}{(n-q)}\norm{\mW-\frac{\1\1^\top}{n}}_F^2\Big)\Big]\Bigg).
    \end{align*}
    \item In the general non-convex case:
    \begin{equation*}
    \hspace{-0.5cm}\frac{1}{T} \sum_{t=0}^{T-1} \E\norm{\nabla \loss(\bar{\xx}^{(t)})}_2^2
    =\mathcal{O}\left( \frac{C\sqrt{d \log{(1/\delta)}}}{\sqrt{n(n-q)}\varepsilon}+ \sqrt{\frac{L\loss_0\sigma_\star^2}{nT}}\right).
\end{equation*}
\end{enumerate}

\end{corollary}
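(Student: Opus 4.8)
The plan is to combine the single-step privacy guarantee of Theorem~\ref{th:privacy-general} with the convergence rates of Theorem~\ref{thm:summary}, after plugging in the prescribed noise variances. The $(q+1)$-connectivity hypothesis enters only to guarantee $a_q(\cG)>0$, so that $\sigmacor^2$ is finite and the covariance $\SSigma$ from the privacy proof stays positive definite; without it the correlated noise would need to be infinite on some vertex-deleted subgraph.

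For the privacy claim, I would first note that the correlated and uncorrelated noises are drawn independently across iterations, so each of the $T$ steps is a fresh application of the mechanism analyzed in Theorem~\ref{th:privacy-general}, which is $(\alpha,\alpha\varepsilon')$-SecRDP with $\varepsilon'$ the per-step bound in~\eqref{eq:privacy}. By the RDP composition rule (Lemma~\ref{lem:composition}), the full run is $(\alpha,\alpha T\varepsilon')$-SecRDP. Substituting $\sigmacdp^2=\tfrac{32C^2 T\log(1/\delta)}{(n-q)\varepsilon^2}$ and $\sigmacor^2=\tfrac{32C^2 T\log(1/\delta)}{a_q(\cG)\varepsilon^2}$ into~\eqref{eq:privacy} gives $\tfrac{1}{(n-q)\sigmacdp^2}=\tfrac{\varepsilon^2}{32C^2T\log(1/\delta)}$ and $\sigmacdp^2+a_q(\cG)\sigmacor^2=\tfrac{32C^2T\log(1/\delta)}{\varepsilon^2}(1+\tfrac{1}{n-q})$, so that $\varepsilon'\le \tfrac{\varepsilon^2}{8T\log(1/\delta)}$ after bounding $\tfrac{1-1/(n-q)}{1+1/(n-q)}\le 1$. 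Converting to approximate DP with Lemma~\ref{lem:rdp-dp} yields $(\alpha T\varepsilon'+\tfrac{\log(1/\delta)}{\alpha-1},\delta)$-SecLDP; minimizing the exponent over $\alpha>1$ (the minimum is $2\sqrt{ab}+a$ with $a=\tfrac{\varepsilon^2}{8\log(1/\delta)}$ and $b=\log(1/\delta)$) gives a privacy loss at most $\tfrac{\varepsilon}{\sqrt 2}+\tfrac{\varepsilon^2}{8\log(1/\delta)}$, which is below $\varepsilon$ once $\varepsilon\le\log(1/\delta)$. This establishes $(\varepsilon,\delta)$-SecLDP.

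For the utility claim I would substitute the same variances into the two bounds of Theorem~\ref{thm:summary} and track the dependence on $T$. In the PL case, the dominant term $\tfrac{Ld\sigmacdp^2}{\mu^2 nT}$ becomes $\tfrac{32LC^2 d\log(1/\delta)}{\mu^2 n(n-q)\varepsilon^2}$, the advertised CDP-optimal rate. The two slowdown terms, $\tfrac{L^2\log T}{\mu^3 pT^2}\tfrac{\hg{\mW}\card{\cE}d\sigmacor^2}{n}$ and $\tfrac{L^2\log T}{\mu^3 pT^2}\norm{\mW-\tfrac{\1\1^\top}{n}}_F^2 d\sigmacdp^2$, each lose one power of $T$ through the linear-in-$T$ variances and collapse into the $\tfrac{1}{T}$ bracket displayed in the corollary, while the initialization and heterogeneity terms stay at order $\tfrac{1}{T^2}$ and are absorbed as vanishing-in-$T$ contributions. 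The non-convex case is identical in spirit: the leading $\sqrt{\tfrac{Ld\loss_0\sigmacdp^2}{nT}}$ gives $\tfrac{C\sqrt{d\log(1/\delta)}}{\sqrt{n(n-q)}\varepsilon}$, the statistical term $\sqrt{\tfrac{L\loss_0\sigma_\star^2}{nT}}$ is untouched, and every privacy-noise term, having denominator $\sigma_\star^2+d\sigmacdp^2=\Theta(T)$, is $\mathcal{O}(1/T)=o(1/\sqrt T)$ and thus negligible.

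The main obstacle is the privacy accounting rather than the convergence bookkeeping: one must carefully compose the per-step Rényi guarantee, optimize the Rényi order $\alpha$, and verify that the constraint $\varepsilon\le\log(1/\delta)$ is exactly what closes the converted $(\varepsilon,\delta)$ bound. A secondary subtlety is confirming that the correlated-noise slowdown term stays strictly subdominant after substitution---this is where the ratio $\sigmacor^2/\sigmacdp^2=(n-q)/a_q(\cG)$ appears and why only a higher-order $\tfrac{1}{T}$ (PL) or $o(1/\sqrt T)$ (non-convex) penalty survives.
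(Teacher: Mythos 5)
Your proposal is correct and follows essentially the same route as the paper's proof: compose the per-step SecRDP bound of Theorem~\ref{th:privacy-general} over $T$ iterations, convert to $(\varepsilon,\delta)$-DP via Lemma~\ref{lem:rdp-dp}, optimize the Rényi order to get $a+2\sqrt{ab}\leq\varepsilon$ under $\varepsilon\leq\log(1/\delta)$, and then substitute the linear-in-$T$ variances into Theorem~\ref{thm:summary} so that the $\sigmacdp^2$ term yields the $\Theta(\tfrac{LC^2d\log(1/\delta)}{\mu^2 n(n-q)\varepsilon^2})$ leading error while the correlated-noise and consensus terms drop to order $\tfrac{\log T}{T}$ (PL) or $\tfrac{1}{T}$ (non-convex). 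Your arithmetic for the per-step bound $\tfrac{\varepsilon^2}{8T\log(1/\delta)}$ and the role of $(q+1)$-connectivity in keeping $a_q(\cG)>0$ both match the paper.
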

\begin{proof}
Let the assumptions of theorems~\ref{th:privacy} and~\ref{thm:summary} hold.
Let $\varepsilon>0, \delta \in (0,1)$ be such that $\varepsilon \leq \log{(1/\delta)}$ and assume that $\cG$ is $(q+1)$-connected.
Consider Algorithm~\ref{algo} with $\sigmacdp^2= \frac{32 C^2 T \log{(1/\delta)}}{(n-q) \varepsilon^2}$ and $\sigmacor^2= \frac{32 C^2 T\log{(1/\delta)}}{a_q{(\cG)}\varepsilon^2}$. 
The latter quantity is well-defined as $\cG$ is $(q+1)$-connected and thus has positive algebraic connectivity after deleting any set of $q$ vertices~\citep{de2007old}.

\textbf{Privacy.}
We first show the privacy claim. Recall from Theorem~\ref{th:privacy} that each iteration of Algorithm~\ref{algo} satisfies $(\alpha, \alpha \varepsilon_{\text{step}})$-SecRDP against collusion at level $q$ for every $\alpha>1$ where
\begin{align}
    \varepsilon_{\text{step}} \leq 2 C^2 \left( \frac{1}{(n-q) \sigmacdp^2} + \frac{1}{a_q(\cG) \sigmacor^2}  \right).
\end{align}
Thus, following the composition property of RDP from Lemma~\ref{lem:rdp-dp}, the full Algorithm~\ref{algo} satisfies $(\alpha, T \alpha \varepsilon_{\text{step}})$-SecRDP for any $\alpha>1$. From Lemma~\ref{lem:rdp-dp}, we deduce that Algorithm~\ref{algo} satisfies $(\varepsilon'{(\alpha)}, \delta)$-SecLDP for any $\delta \in (0,1)$ and any $\alpha > 1$, where
\begin{align*}
    \varepsilon'{(\alpha)} = T \alpha \varepsilon_{\text{step}} + \frac{\log(1/\delta)}{\alpha-1} 
    \leq 2 \alpha C^2 T \left( \frac{1}{(n-q) \sigmacdp^2} + \frac{1}{a_q(\cG) \sigmacor^2}  \right)  + \frac{\log(1/\delta)}{\alpha-1}.
\end{align*}
Optimizing the above bound over $\alpha>1$ yields the solution $\alpha_\star = 1 + \tfrac{\sqrt{\log(1/\delta)}}{C\sqrt{2T \left( \tfrac{1}{(n-q) \sigmacdp^2} + \tfrac{1}{a_q(\cG) \sigmacor^2}  \right)}}$ which gives the bound
\begin{align*}
    \varepsilon_\star = \varepsilon'{(\alpha_\star)} 
    \leq 2 C^2 T \left( \frac{1}{(n-q) \sigmacdp^2} + \frac{1}{a_q(\cG) \sigmacor^2}  \right) + 2 C\sqrt{2 T \log{(1/\delta)} \left( \frac{1}{(n-q) \sigmacdp^2} + \frac{1}{a_q(\cG) \sigmacor^2}  \right)}.
\end{align*}
Now, recall that the choice of $\sigmacdp^2, \sigmacor^2$ implies that
\begin{align*}
    \frac{1}{(n-q)\sigmacdp^2} + \frac{1}{a_q{(\cG)} \sigmacor^2} = \frac{\varepsilon^2}{16 C^2 T \log{(1/\delta)}}.
\end{align*}
Therefore, using the assumption $\varepsilon \leq \log{(1/\delta)}$, Algorithm~\ref{algo} satisfies $(\varepsilon_\star, \delta)$-DP where
\begin{align*}
    \varepsilon_\star \leq \frac{\varepsilon^2}{8\log{(1/\delta)}} + \frac{\varepsilon}{\sqrt{2}} \leq \varepsilon.
\end{align*}
This concludes the proof of the privacy claim.

\textbf{Upper bound---PL case.}
Plugging the expressions of $\sigmacdp^2$ and $\sigmacor^2$ in the PL bound of Theorem~\ref{thm:summary} and rearranging terms yields
\begin{align*}
    &\E\loss(\bar{\xx}^{(T)}) - \loss_\star = \mathcal{O}\Bigg(\frac{L}{\mu^2 T}\frac{\sigma_\star^2+d\tfrac{C^2 T \log{(1/\delta)}}{(n-q) \varepsilon^2}}{n} + \frac{c^2L^2(\loss(\bar{\xx}^{(0)}) - \loss_\star)}{\mu^2p^2T^2} + \frac{cL^3}{\mu^2 p^2 T^2} \Xi_0\\
    &\quad + \frac{\ln{T}}{T^2}\frac{L^2}{\mu^3p}\left[(1-p)(\frac{\zeta_\star^2}{p} + \sigma_\star^2) + \frac{\hg{\mW} \card{\cE} d}{n} \tfrac{C^2 T \log{(1/\delta)}}{a(\cG_{\cH}) \varepsilon^2} + \norm{\mW-\frac{\1\1^\top}{n}}_F^2 d \tfrac{C^2 T \log{(1/\delta)}}{(n-q) \varepsilon^2}\right]\Bigg)\\
    &= \widetilde{\mathcal{O}}\Bigg( \frac{L C^2 d \log{(1/\delta)}}{\mu^2 n (n-q) \varepsilon^2} + \frac{L}{\mu^2 n T}\left[\sigma_\star^2 +\frac{L C^2 d\log{(1/\delta)}}{\mu p \varepsilon^2}\left(\frac{\hg{\mW} \card{\cE}}{a(\cG_{\cH})} + \frac{n}{n-q}\norm{\mW-\frac{\1\1^\top}{n}}_F^2\right)\right]\\
    &\qquad \quad + \frac{L^2}{\mu^3 p^2 T^2}\left[(1-p)(\zeta_\star^2 + p\sigma_\star^2) + c^2 \mu(\loss(\bar{\xx}^{(0)}) - \loss_\star) + c \mu L \Xi_0 \right] \Bigg).
\end{align*}

\textbf{Upper bound---Non-convex case.}
Plugging the expressions of $\sigmacdp^2$ and $\sigmacor^2$ in the non-convex bound of Theorem~\ref{thm:summary} and rearranging terms yields
\begin{align*}
    &\frac{1}{T} \sum_{t=0}^{T-1} \E\norm{\nabla \loss(\bar{\xx}^{(t)})}_2^2
    = \mathcal{O}\Bigg( \sqrt{\frac{L(\loss(\bar{\xx}^{(0)}) - \loss_\star)(\sigma_\star^2+d\sigmacdp^2)}{nT}} + \frac{cL(\loss(\bar{\xx}^{(0)}) - \loss_\star)}{pT} + \frac{L^2}{pT} \Xi_0\\
    &\quad + \frac{L(\loss(\bar{\xx}^{(0)}) - \loss_\star)n}{pT(\sigma_\star^2+d\sigmacdp^2)} \left(6(1-p)\frac{\zeta_\star^2}{p} + (1-p)\sigma_\star^2 + \frac{\hg{\mW} \card{\cE} d \sigmacor^2}{n} + \norm{\mW-\tfrac{\1\1^\top}{n}}_F^2 d \sigmacdp^2\right)\Bigg)\\
    &=\mathcal{O}\Bigg( \sqrt{\frac{L(\loss(\bar{\xx}^{(0)}) - \loss_\star)(\sigma_\star^2+d\tfrac{C^2 T \log{(1/\delta)}}{(n-q) \varepsilon^2})}{nT}} + \frac{cL(\loss(\bar{\xx}^{(0)}) - \loss_\star)}{pT} + \frac{L^2}{pT} \Xi_0\\
    &\quad + \frac{L(\loss(\bar{\xx}^{(0)}) - \loss_\star)n}{pT(\sigma_\star^2+d\tfrac{C^2 T \log{(1/\delta)}}{(n-q) \varepsilon^2})} \left((1-p)(\frac{\zeta_\star^2}{p} + \sigma_\star^2) + \frac{\hg{\mW} \card{\cE} d \tfrac{C^2 T \log{(1/\delta)}}{a(\cG_{\cH}) \varepsilon^2}}{n} + \norm{\mW-\tfrac{\1\1^\top}{n}}_F^2 d \tfrac{C^2 T \log{(1/\delta)}}{(n-q) \varepsilon^2}\right)\Bigg)\\
    &=\mathcal{O}\Bigg( \frac{C\sqrt{d \log{(1/\delta)}}}{\sqrt{n(n-q)}\varepsilon}+ \sqrt{\frac{L(\loss(\bar{\xx}^{(0)}) - \loss_\star)\sigma_\star^2}{nT}} + \frac{cL(\loss(\bar{\xx}^{(0)}) - \loss_\star)}{pT} + \frac{L^2}{pT} \Xi_0\\
    &\quad + \frac{L(\loss(\bar{\xx}^{(0)}) - \loss_\star)n}{pT} \left((1-p)(\frac{\zeta_\star^2}{p \sigma_\star^2} + 1) + \frac{(n-q)\hg{\mW} \card{\cE}}{n a_q{(\cG)}} + \norm{\mW-\tfrac{\1\1^\top}{n}}_F^2\right)\Bigg).
\end{align*}
We conclude by ignoring higher-order terms in $T$: in $\tfrac{1}{T^2}$ for the PL case and $\tfrac{1}{T}$ for the non-convex case.
\end{proof}
In the PL case, observe that our privacy-utility trade-off matches CDP whenever there is at most a constant fraction of colluding user, i.e., the level of collusion is $q = \mathcal{O}{(n)}$.
In the extreme scenario where almost all users are colluding, i.e., $n-q = \mathcal{O}{(1)}$, then the trade-off matches LDP only, which cannot be improved in general when $q=n-1$~\cite{duchi2018minimax}.
In the non-convex case, while it is not possible to discuss the tightness of our privacy-utility trade-off because lower bounds on the CDP trade-off are unknown, the error $\mathcal{O}{\left(\tfrac{\sqrt{d}}{n\varepsilon}\right)}$ matches the CDP baseline error without variance reduction~\cite{arora2022faster}. 
\section{Detailed Experimental Setup}
\label{sec:app-expsetup}
In this section, we provide the full experimental setup of our empirical evaluation in Section~\ref{sec:exp}.

\textbf{Datasets.} We conduct our evaluation on three datasets: synthetic data for least-squares regression, \textit{a9a} LibSVM~\cite{chang2011libsvm} and MNIST~\cite{mnist}, that we distribute among $n = 16$ users, as explained in Section~\ref{sec:exp}.

\textbf{Privacy parameters.} We consider user-level privacy for the first two tasks, and example-level privacy for the last task. For all our experiments, we set the privacy parameter $\delta$ to $10^{-5}$, this ensures that $\delta \ll \tfrac{1}{nm} \leq \tfrac{1}{n}$. 
\subsection{Privacy Noise Parameters Search for \alg}
For a pre-specified SecLDP privacy budget $\varepsilon$, we would like to find a corresponding couple of privacy noises $(\sigmacdp,\sigmacor)$ to be used in \alg. However, Algorithm~\ref{algo:account} does the reverse process, i.e., it computes the per-step SecRDP budget, denoted $\varepsilon_{\mathrm{iter}}^{\mathrm{RDP}}$ here, given the privacy noise couple $(\sigmacdp,\sigmacor)$.
Moreover, it is straightforward to obtain the desired per-step RDP budget $\varepsilon_{\mathrm{iter}}^{\mathrm{RDP}}$ given the full DP budget $\varepsilon$ using composition and conversion properties of RDP~\cite{mironov2017renyi}.
Hence, we only need to search for $(\sigmacdp,\sigmacor)$, given a pre-specified $\varepsilon_{\mathrm{iter}}^{\mathrm{RDP}}$.
To do so, we fix $\sigmacdp$, and we look for the other parameter $\sigmacor$, using \emph{binary search}, since the function $\varepsilon_{\mathrm{iter}}^{\mathrm{RDP}}(\sigmacor)$ is monotonous (non-increasing), as shown in Figure~\ref{fig:epsilon-search-user}.
Specifically, we use the following steps in our search:
\begin{enumerate}
    \item Given the global (user-level) SecLDP privacy budget $\varepsilon$, we determine the per-step SecRDP privacy budget $\varepsilon_{\mathrm{iter}}^{\mathrm{RDP}}$ using the RDP composition and conversion properties
    \item We know that the uncorrelated noise variance $\sigmacdp$ is bounded between the privacy noise variance used for the baseline CDP algorithm $\tfrac{C \sqrt{2}}{\sqrt{n \varepsilon_{\mathrm{iter}}^{\mathrm{RDP}}}}$, and the one used for the LDP baseline, that is $\tfrac{C \sqrt{2}}{\sqrt{ \varepsilon_{\mathrm{iter}}^{\mathrm{RDP}}}}$. So we start by fixing $\sigmacdp$ in the interval $[\tfrac{C \sqrt{2}}{\sqrt{n \varepsilon_{\mathrm{iter}}^{\mathrm{RDP}}}}, \tfrac{C \sqrt{2}}{\sqrt{ \varepsilon_{\mathrm{iter}}^{\mathrm{RDP}}}}]$
    \item For every fixed $\sigmacdp$, we search for the corresponding $\sigmacor$ in a sufficiently large interval ($[1, 10^3]$ in our experiments) using binary search on the outputs of our SecRDP accountant (Algorithm~\ref{algo:account}). 
    \begin{figure}[H]
        \centering
        \includegraphics[width = 7 cm]{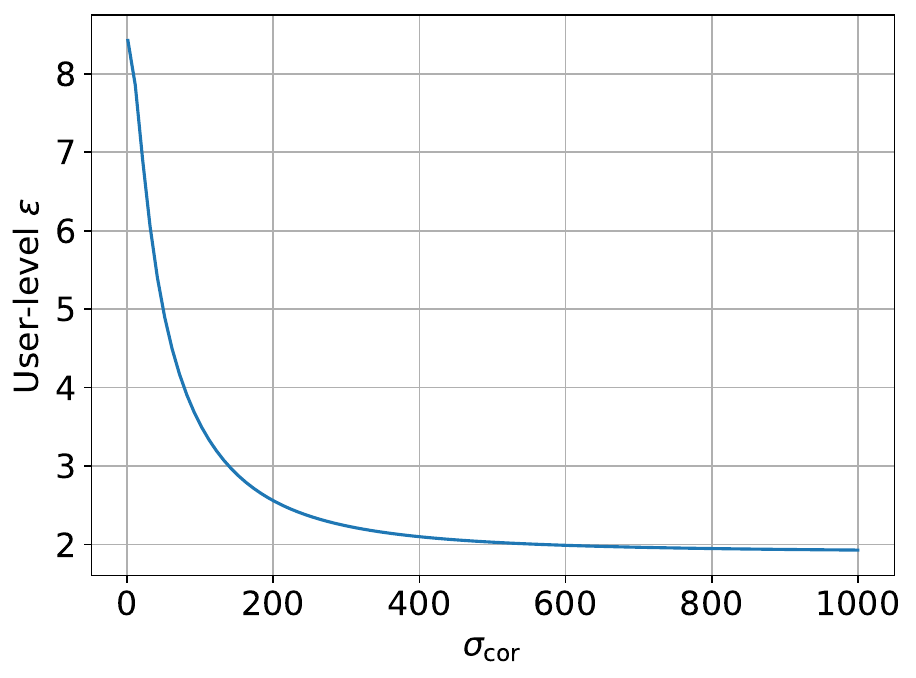}
        \caption{User-level SecLDP privacy budget $\varepsilon$, using Algorithm~\ref{algo:account}, as a function of $\sigmacor$ given a fixed $\sigmacdp$ in the center of the search interval, a total number of iterations $T = 1000$ and a clipping threshold $C = 1$.}
        \label{fig:epsilon-search-user}
    \end{figure}
\end{enumerate}

\textbf{Example-level privacy.} 
The procedure to get the privacy noise parameters is slightly different for example-level privacy. Indeed, we use RDP privacy amplification by subsampling~\cite{wang2019subsampled} after using Algorithm~\ref{algo:account}.
However, the RDP privacy amplification by subsampling does not have a closed-form expression, so we cannot directly get the desired per-step SecRDP budget from the full DP budget $\varepsilon$.
Therefore, we again fix $\sigmacdp$ in a grid, this time in $[\tfrac{C}{1000}, \tfrac{C}{20}]$, and we look for the other parameter $\sigmacor$ (this time in $[\tfrac{C}{2000}, \tfrac{C}{10}]$) using \emph{binary search}, since the function $\varepsilon_{\mathrm{iter}}^{\mathrm{RDP}}(\sigmacor)$ is monotonous (non-increasing), as shown in Figure~\ref{fig:epsilon-search-example}.

\begin{figure}[H]
    \centering
    \includegraphics[width = 7 cm]{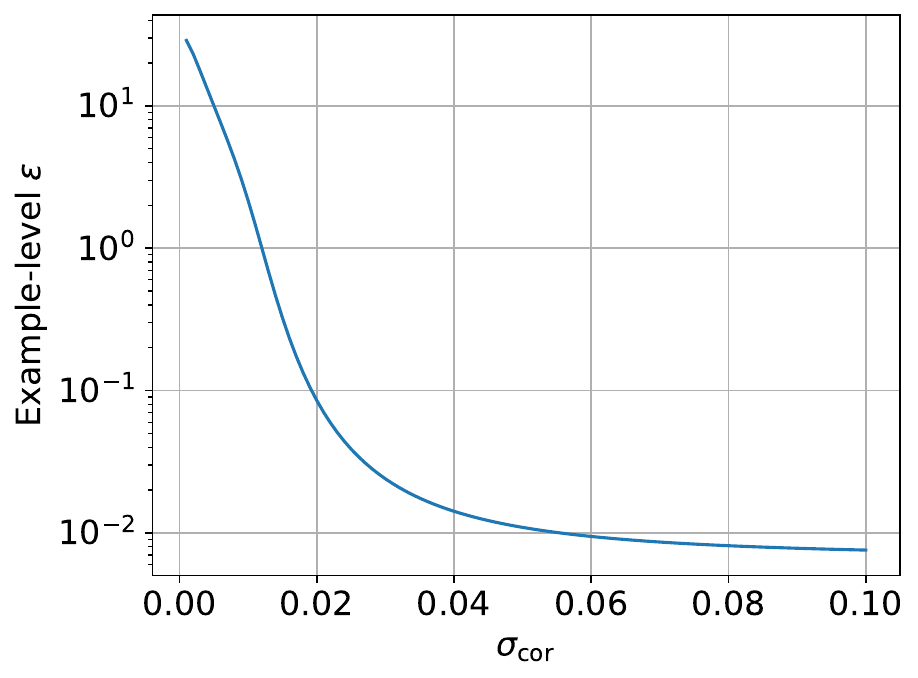}
    \caption{Example-level SecLDP privacy budget $\varepsilon$, using Algorithm~\ref{algo:account} and RDP amplification by subsampling~\cite{wang2019subsampled}, as function of $\sigmacor$ given a fixed $\sigmacdp = \frac{5C}{1000}$, a total number of iterations $T = 1000$, clipping threshold $C = 1$ and batch size $64$.}
    \label{fig:epsilon-search-example}
\end{figure}

\subsection{Hyperparameter Tuning}
For all considered tasks, we tune the hyperparameters of each algorithm individually, following the same steps, to obtain: the learning rate $\eta$, the clipping threshold $C$ and the noise parameters $\sigmacdp$ and $\sigmacor$. It is important to note that the couple of privacy noise parameters $(\sigmacdp, \sigmacor)$ is not unique: we can find many couples that yield the same SecRDP budget, which is also visible in the theoretical bound from Theorem~\ref{th:privacy}.
However, in the CDP and LDP baselines (D-SGD with uncorrelated privacy noise), they are determined uniquely by the RDP guarantee for the Gaussian mechanism~\cite{mironov2017renyi}.

For our tuning, we choose a grid of learning rates and clipping thresholds.
First, we simply evaluate the CDP and LDP baselines with the desired topology on all the learning rate and clipping couples $(\eta, C)$, and then we pick the best hyperparameter couple at the end.
For \alg, we do the same procedure for $(\eta, C)$. However, there are many possible noise couples $(\sigmacdp, \sigmacor)$ following the privacy noise search in the previous section, we choose three among them that yield the same privacy budget: the one with the lowest $\sigmacdp$ (first couple found by binary search), the largest $\sigmacdp$ (last couple) and the one in the middle. After evaluating these noises with every couple $(\eta, C)$, we choose at the end the best quadruplet of hyperparameters.

\end{document}